\newcommand{\eps}{\varepsilon}
\newcommand{\coordest}{\textsc{CrdEst}}
\newcommand{\C}{\mathcal{C}}
\newcommand{\R}{\mathbb{R}}
\newcommand{\argmin}{\text{argmin}\,}
\newcommand{\E}{\mathbb{E}}
\providecommand{\email}[1]{\href{mailto:#1}{\nolinkurl{#1}\xspace}}
\newcommand{\PPr}[1]{\ensuremath{\mathbf{Pr}\left[#1\right]}}
\DeclareMathOperator*{\cost}{cost}
\DeclareMathOperator*{\poly}{poly}
\newtheorem{theorem}{Theorem}[section]
\newtheorem{corollary}[theorem]{Corollary}
\newtheorem{lemma}[theorem]{Lemma}
\newtheorem{definition}[theorem]{Definition}
\newtheorem{remark}[theorem]{Remark}
\newcommand{\inlineeqnum}{\refstepcounter{equation}~~\mbox{(\theequation)}}
\title{Learning-Augmented $k$-means Clustering}
\author{
Jon C. Ergun
\\Georgetown Day School\thanks{E-mail: \email{jergun22@gds.org}}
\and
Zhili Feng
\\Carnegie Mellon University\thanks{E-mail: \email{zhilif@andrew.cmu.edu}}
\and
Sandeep Silwal
\\MIT\thanks{E-mail: \email{silwal@mit.edu}}
\and
David P. Woodruff
\\Carnegie Mellon University\thanks{E-mail: \email{dwoodruf@andrew.cmu.edu}}
\and
Samson Zhou
\\Carnegie Mellon University\thanks{E-mail: \email{samsonzhou@gmail.com}}
}
\begin{document}

\maketitle

\begin{abstract}
$k$-means clustering is a well-studied problem due to its wide applicability. Unfortunately, there exist strong theoretical limits on the performance of any algorithm for the $k$-means problem on worst-case inputs. To overcome this barrier, we consider a scenario where ``advice'' is provided to help perform clustering. Specifically, we consider the $k$-means problem augmented with a predictor that, given any point, returns its cluster label in an approximately optimal clustering up to some, possibly adversarial, error. We present an algorithm whose performance improves along with the accuracy of the predictor, even though na\"{i}vely following the accurate predictor can still lead to a high clustering cost. Thus if the predictor is sufficiently accurate, we can retrieve a close to optimal clustering with nearly optimal runtime, breaking known computational barriers for algorithms that do not have access to such advice. We evaluate our algorithms on real datasets and show significant improvements in the quality of clustering.
\end{abstract}

\section{Introduction}
Clustering is a fundamental task in data analysis that is typically one of the first methods used to understand the structure of large datasets. The most common formulation of clustering is the $k$-means problem where given a set $P \subset \mathbb{R}^d$ of $n$ points, the goal is to find a set of centers $C \subset \mathbb{R}^d$ of $k$ points to minimize the objective $
    \cost(P,C) = \sum_{p \in P} \min_{c \in C} \|p-c\|_2^2. \inlineeqnum\label{eq:objective}
$

Despite decades of work, there exist strong theoretical limitations about the performance of any algorithm for the $k$-means problem. 
Finding the optimal set $C$ is NP-hard even for the case of $k=2$ \citep{two_clusters} and even finding an approximate solution with objective value that is within a factor $1.07$ of the optimal solution is NP-hard \citep{CohenAddad2019InapproximabilityOC,LeeSW17}. 
Furthermore, the best-known practical \emph{polynomial} time algorithms can only provably achieve a large constant factor approximation to the optimal clustering, e.g., the $50$-approximation in \cite{best_k_means_poly}, or use techniques such as linear programming that do not scale, e.g., the $6.357$-approximation in \cite{ahmadian17}.

A natural approach to overcome these computational barriers is to leverage the fact that in many applications, the input is often not arbitrary and contains auxiliary information that can be used to construct a good clustering, e.g., in many applications, the input can be similar to past instances. Thus, it is reasonable to create a (possibly erroneous) predictor by using auxiliary information or through clusterings of similar datasets, which can inform the proper label of an item in our current dataset. Indeed, inspired by the developments in machine learning, many recent papers have studied algorithms augmented with predictions \citep{mitzenmacher2020algorithms}. Such algorithms utilize a predictor that, when invoked, provides an (imperfect) prediction for future inputs. The predictions are then used by the algorithm to improve performance (see references in Section \ref{sec:related}).


Hence, we consider the problem of $k$-means clustering given additional access to a predictor that outputs advice for which points should be clustered together, by outputting a label for each point. The goal is to find $k$ centers that minimize objective (\ref{eq:objective}) and assign each point to one of these centers. The question is then whether one can utilize such predictions to boost the accuracy and runtime of clustering of new datasets. Our results demonstrate the answer in the affirmative.

\vspace{-3mm}
\paragraph{Formal learning-augmented problem definition.}
Given a set $P \subseteq\mathbb{R}^d$ of $n$ points, the goal is to find a set of $k$ points $C$ (called centers) to minimize objective (\ref{eq:objective}). In the learning-augmented setting, we assume we have access to a predictor $\Pi$ that provides information about the label of each point consistent with a $(1+\alpha)$-approximately optimal clustering $\mathcal{C}$. We say that a predictor has \emph{label error rate} $\lambda\le\alpha$ if for each label $i\in[k]: = \{1, \ldots, k\}$, $\Pi$ errs on at most a $\lambda\le\alpha$ fraction of all points in cluster $i$ in $\mathcal{C}$, and $\Pi$ errs on at most a $\lambda\le\alpha$ fraction of all points given label $i$ by $\Pi$. In other words, $\Pi$ has at least $(1-\lambda)$ precision and recall for each label.

Our predictor model subsumes both random and adversarial errors by the predictor. For example if the cluster sizes are somewhat well-balanced, then a special case of our model is when $\Pi(p)$ outputs the correct label of point $p\in P$ with some probability $1-\lambda$ and otherwise outputs a random label in $[k]$ with probability $\lambda$. The example where the predictor outputs an \emph{adversarial} label instead of a random label with probability $\lambda$ also falls under our model. For more detail, see Theorems \ref{thm:main} and \ref{thm:fast}. We also adjust our algorithm to have better performance when the errors are random rather than adversarial in the supplementary material.  

\vspace{-1.5mm}
\subsection{Motivation for Our Work}\label{sec:motivation}
\vspace{-1.5mm}
We first motivate studying $k$-means clustering under the learning-augmented algorithms framework.

\textbf{Overcoming theoretical barriers.} As stated above, \emph{no} polynomial time algorithm can achieve better than a constant factor approximation to the optimal clustering. In addition, the best provable approximation guarantees by polynomial time algorithms have a large constant factor (for example the $50$ approximation in \cite{best_k_means_poly}), or use methods which do not scale (such as the linear programming based algorithm in \cite{ahmadian17} which gives a $6.357$-approximation). Therefore, it is of interest to study whether a natural assumption can overcome these complexity barriers. In our work, we show that knowing the true labels up to some possibly adversarial noise can give us arbitrarily good clusterings, depending on the noise level, which breaks these computational barriers. Furthermore, we present an algorithm that runs in nearly \emph{linear} time, rather than just polynomial time. Lastly, we introduce tools from the robust statistics literature to study $k$-means clustering rather than the distance-based sampling procedure that is commonly analyzed (this is the basis of \texttt{kmeans++}). This new toolkit and connection could have further applications in other learning-augmented clustering problems.

\textbf{Practical considerations.} In practice, good predictors can be learned for datasets with auxiliary information. For a concrete example, we can take \emph{any} dataset that has a train/test split and use a clustering on the training dataset to help us cluster the testing portion of the dataset. Therefore, datasets do not have to be specifically curated to fit our modelling assumption, which is a requirement in other modelling formulations that leverage extra information such as the SSAC model discussed in Section \ref{sec:related}. 
A predictor can also be created from the natural class of datasets that vary over time, such as Census data or spectral clustering for temporal graphs (graphs slowly varying over time). For this class of datasets, a clustering from an earlier time step can function as a predictor for later time steps. Lastly, we can simply use the labels given by another clustering algorithm (such as \texttt{kmeans++}) or heuristic as a predictor. Therefore, predictors are readily and easily available for a wide class of natural datasets.

\textbf{Following the predictor alone is insufficient.} Given a predictor that outputs noisy labels, it is conceivable that its output alone can give us a good clustering relative to optimal. However, this is not the case and na\"{i}vely using the label provided by the predictor for each point can result in an arbitrarily bad solution, even when the predictor errs with low probability. For example, consider a cluster of $\frac{n}{2}$ points at the origin and a cluster of $\frac{n}{2}$ points at $x=1$. Then for $k=2$, choosing centers at the origin and at $x=1$ induces a $k$-means clustering cost of zero. However, even for a predictor that errs with probability $\frac{1}{n}$, some point will be mislabeled with constant probability, which results in a positive $k$-means clustering cost, and so does not provide a relative error approximation. Thus, using the provided labels by the predictor can induce an arbitrarily bad clustering, even as the label error rate of the predictor tends to zero. This subtlety makes the model rich and interesting, and requires us to create non-trivial clustering algorithms.

\textbf{Predictors with adversarial errors. }
Since the predictor is separate from the clustering algorithm, interference with the output of the predictor following the clustering algorithm's query can be a source of non-random noise. Thus any scenario in which communication is performed over a noisy channel (for example, if the predictor is hosted at one server and the algorithm is hosted at another server) is susceptible to such errors. Another source of adversarial failure by the predictor is when the predictor is trained on a dataset that can be generated by an adversary, such as in the context of adversarial machine learning. Moreover, our algorithms have better guarantees when the predictor does not fail adversarially, e.g., see the supplementary material). 
\vspace{-2mm}
\subsection{Our Results}
In this paper we study ``learning-augmented'' methods for efficient $k$-means clustering. Our contributions are both theoretical and empirical. On the theoretical side, we introduce an algorithm that provably solves the $k$-means problem almost optimally, given access to a predictor that outputs a label for each point $p\in P$ according to a $(1+\alpha)$-approximately optimal clustering, up to some noise. 
Specifically, suppose we have access to a predictor $\Pi$ with label error rate $\lambda$ upper bounded by a parameter $\alpha$. 
Then, Algorithm \ref{alg:main} outputs a set of centers $\widetilde{C}$ in $\tilde{O}(knd)$ time\footnote{The notation $\widetilde{O}$ hides logarithmic factors.}, such that $\text{cost}(P,\widetilde{C}) \le (1+O(\alpha)) \cdot \text{cost}(P, C^{opt})$, where $C^{opt}$ is an optimal set of centers. We improve the runtime in Section \ref{sec:fast} by introducing Algorithm \ref{alg:fast}, which has the same error guarantees, but uses $\tilde{O}(nd)$ runtime, which is \emph{nearly optimal} since one needs at least $nd$ time to read the points for dense inputs (Theorem \ref{thm:fast}, and Remark \ref{rem:truly_poly}). 


To output labels for all points, Algorithm \ref{alg:fast} requires $n$ queries to the predictor. 
However, if the goal is to just output centers for each cluster, then we only require $\widetilde{O}(k/\alpha)$ queries. 
This is essentially optimal; we show in Theorem \ref{thm:hardness:approx} that \textbf{any} polynomial time algorithm must perform approximately $\widetilde{\Omega}(k/\alpha)$ queries to output a $1+\alpha$-approximate solution assuming the Exponential Time Hypothesis, a well known complexity-theoretic assumption \citep{ETH}. Note that
one could ignore the oracle entirely, but then one is limited by the constant factor hardness for
polynomial time algorithms, which we bypass with a small number of queries. 


Surprisingly, we \textbf{do not} require assumptions that the input is well-separated or approximation-stable \citep{BravermanMORST11, BalcanBG13}, which are assumed in other works. Finally in the supplementary material, we also give a learning-augmented algorithm for the related problem of \emph{$k$-median} clustering, which has less algebraic structure than that of $k$-means clustering. 
We also consider a \emph{deletion} predictor, which either outputs a correct label or a failure symbol $\bot$ and give a $(1+\alpha)$-approximation algorithm even when the ``deletion rate'' is $1-1/\poly(k)$. 

On the empirical side, we evaluate our algorithms on real and synthetic datasets. We experimentally show that good predictors can be learned for all of our varied datasets, which can aid in clustering. We also show our methodology is more robust than other heuristics such as random sampling.

\subsection{Related Work} \label{sec:related}
\textbf{Learning-augmented algorithms.}
Our paper adds to the growing body of work on learning-augmented algorithms. In this framework, additional ``advice'' from a possibly erroneous predictor is used to improve performance of classical algorithms. For example, a common predictor is a ``heaviness'' predictor that outputs how ``important'' a given input point is. It has been shown that such predictors can be learned using modern machine learning techniques or other methods on training datasets and can be successfully applied to similar testing datasets. This methodology has found applications in improving data structures \citep{datastructures_1, datastructures_2}, streaming algorithms \citep{streaming_1, streaming_2}, online algorithms \citep{online_1, online_2}, graph algorithms \citep{graph_1}, and many other domains \citep{other_4, other_1,other_6, other_2, other_3, other_5, eden2021learning}. See \cite{mitzenmacher2020algorithms} for an overview and applications.

\textbf{Clustering with additional information.} 
There have been numerous works that study clustering in a semi-supervised setting where extra information is given. \cite{BBM04} gave an active learning framework of clustering with ``must-link''/``cannot-link'' constraints, where an algorithm is allowed to interact with a predictor that determines if two points must or cannot belong to the same cluster. Their objective function is different than that of $k$-means and they do not give theoretical bounds on the quality of their solution. \cite{BB08} and \cite{ABV14} studied an interactive framework for clustering, where a predictor interactively provides feedback about whether or not to split a current cluster or merge two clusters. \cite{VD16} also worked with an interactive oracle but for the Bayesian hierarchical clustering problem. These works differ from ours in their assumptions since their predictors must answer different questions about partitions of the input points. In contrast, \cite{aug} used logistic regression to aid $k$-means clustering but do not give any theoretical guarantees.

The framework closest in spirit to ours is the semi-supervised active clustering framework (SSAC) introduced by \cite{SSAC_first} and further studied by \cite{SSAC_2, MazumdarS17a, SSAC_1, SSAC_main, SSAC_4, SSAC_3}. The goal of this framework is also to produce a $(1+\alpha)$-approximate clustering while minimizing the number of queries to a predictor that instead answers queries of the form ``same-cluster$(u,v)$'', which returns $1$ if points $u,v \in P$ are in the same cluster in a particular optimal clustering and $0$ otherwise. Our work differs from the SSAC framework in terms of both runtime guarantees, techniques used, and model assumptions, as detailed below. 

We briefly compare to the most relevant works in the SSAC framework, which are \cite{SSAC_main} and  \cite{MazumdarS17a}. First, the runtime of \cite{SSAC_main} is $O(ndk^9/\alpha^4$) even for a perfectly accurate predictor, while the algorithm of \cite{MazumdarS17a} uses $O(nk^2)$ queries and runtime $\tilde{O}(ndk^2)$. 
By comparison, we use significantly fewer queries, with near linear runtime $\tilde{O}(nd)$ even for an erroneous predictor. 
Moreover, a predictor of \cite{MazumdarS17a} independently fails each query with probability $p$ so that repeating with pairs containing the same point can determine the correct label of a point whereas our oracle will \emph{always repeatedly fail with the same query}, so that repeated queries do not help.

The SSAC framework uses the predictor to perform importance sampling to obtain a sufficient number of points from each cluster whereas we use techniques from robust mean estimation, dimensionality reduction, and approximate nearest neighbor data structures. Moreover, it is unclear how the SSAC predictor can be implemented in practice to handle adversarial corruptions. One may consider simulating the SSAC predictor using information from individual points by simply checking if the labels of the two input points are the same. However, if a particular input is mislabeled, then all of the pairs containing this input can also be reported incorrectly, which violates their independent noise assumption.  Finally, the noisy predictor algorithm in \cite{SSAC_main} invokes a step of recovering a hidden clique in a stochastic block model, making it prohibitively costly to implement. 

Lastly, in the SSAC framework, datasets need to be specifically created to fit into their model since one requires pairwise information. In contrast, our predictor requires information about individual points, which can be learned from either a training dataset, from past similar datasets, or from another approximate or heuristic clustering and is able to handle adversarial corruptions. Thus, we obtain significantly faster algorithms while using an arguably more realistic predictor.

\textbf{Approximation stability.} 
Another approach to overcome the NP-hardness of approximation for $k$-means clustering is the  assumption that the underlying dataset follows certain distributional properties. 
Introduced by \cite{BalcanBG13}, the notion of $(c,\alpha)$-approximate stability~\citep{AgarwalJP15, AwasthiBBWW19, BalcanHW20} requires that every $c$-approximation is $\alpha$-close to the optimal solution in terms of the fraction of incorrectly clustered points. 
In contrast, we allow inputs so that an arbitrarily small fraction of incorrectly clustered points can induce arbitrarily bad approximations, as previously discussed, e.g., in Section \ref{sec:motivation}. 

\section{Learning-Augmented \texorpdfstring{$k$}{k}-means Algorithm}
\textbf{Preliminaries.}
We use $[n]$ to denote the set $\{1,\ldots,n\}$. 
Given the set of cluster centers $C$, we can partition the input points $P$ into $k$ clusters $\{C_1,\ldots,C_k\}$ according to the closest center to each point. 
If a point is grouped in $C_i$ in the clustering, we refer to its label as $i$. 
Note that labels can be arbitrarily permuted as long as the labeling across the points of each cluster is consistent. 
It is well-known that in $k$-means clustering, the $i$-th center is given by the coordinate-wise mean of the points in $C_i$. 
Given $x \in \mathbb{R}^d$ and a set $C \subset \mathbb{R}^d$, we define $d(x,C) = \min_{c \in C} \|x-c\|_2$. Note that there may be many approximately optimal clusterings but we consider a fixed one for our analysis.

\subsection{Our Algorithm}
Our main result is an algorithm for outputting a clustering that achieves a $(1+20\alpha)$ approximation \footnote{Note that we have not attempted to optimize the constant $20$.} to the optimal objective cost when given access to approximations of the correct labeling of the points in $P$. We first present a suboptimal algorithm in Algorithm \ref{alg:main} for intuition and then optimize the runtime in Algorithm \ref{alg:fast}, which is provided in Section \ref{sec:fast}.

The intuition for Algorithm \ref{alg:main} is as follows. We first address the problem of identifying an approximate center for each cluster. Let $C^{opt}_1, \cdots, C^{opt}_k$ be an optimal grouping of the points and consider all the points labeled $i$ by our predictor for some fixed $1 \le i \le k$. Since our predictor can err, a large number of points that are not in $C^{opt}_i$ may also be labeled $i$. This is especially problematic when points that are ``significantly far'' from cluster $C^{opt}_i$ are given the label $i$, which may increase the objective function arbitrarily if we simply take the mean of the points labeled $i$ by the predictor.

To filter out such outliers, we consider a two step view from the robust statistics literature, e.g.,~\cite{PrasadBR19}; these two steps can respectively be interpreted as a ``training'' phase and a ``testing'' phase that removes ``bad'' outliers. We first randomly partition the points that are given label $i$ into two groups, $X_1$ and $X_2$, of equal size. We then estimate the mean of $C^{opt}_i$ using a coordinate-wise approach through Algorithm~\ref{alg:1d} ($\coordest$), decomposing the total cost as the sum of the costs in each dimension.  

\begin{figure*}
\begin{minipage}{.56\textwidth}
\begin{algorithm}[H]
\caption{Learning-augmented $k$-means clustering}
\label{alg:main}
\begin{algorithmic}[1]
\REQUIRE{A point set $X$ with labels given by a predictor $\Pi$ with label error rate $\lambda$}
\ENSURE{$(1+O(\alpha))$-approximate $k$-means clustering of $X$}
\FOR{$i=1$ to $i=k$}
\STATE{Let $Y_i$ be the set of points with label $i$.}
\STATE{Run $\coordest$ for each of the $d$ coordinates of $Y_i$.}
\STATE{Let $C'_i$ be the coordinate-wise outputs of $\coordest$.}
\ENDFOR
\STATE{\textbf{Return} $C'_1,\ldots,C'_k$.}
\end{algorithmic}
\end{algorithm}
\end{minipage}\quad
\begin{minipage}{.42\textwidth}
\begin{algorithm}[H]
\caption{Coordinate-wise estimation $\coordest$}
\label{alg:1d}
\begin{algorithmic}[1]
\REQUIRE{Points $x_1,\ldots,x_{2m}\in\mathbb{R}$, corruption level $\lambda\le\alpha$}
\STATE{Randomly partition the points into two groups $X_1,X_2$ of size $m$.}
\STATE{Let $I=[a,b]$ be the shortest interval containing $m(1-5\alpha)$ points of $X_1$.}
\STATE{$Z\gets X_2\cap I$}
\STATE{$z\gets\frac{1}{|Z|}\sum_{x\in Z}x$}
\STATE{\textbf{Return} $z$}
\end{algorithmic}
\end{algorithm}
\end{minipage}
\end{figure*}

For each coordinate, we find the smallest interval $I$ that contains a $(1-4\alpha)$ fraction of the points in $X_1$. We show that for label error rate $\lambda\le\alpha$, this ``training'' phase removes any outliers and thus provides a rough estimation to the location of the ``true'' points that are labeled $i$. To remove dependency issues, we then ``test'' $X_2$ on $I$ by computing the mean of $X_2\cap I$. This allows us to get empirical centers that are a sufficiently good approximation to the coordinates of the true center for each coordinate. We then repeat on the other labels. The key insight is that the error from mean-estimation can be directly charged to the approximation error due to the special structure of the $k$-means problem. Our main theoretical result considers predictors that err on at most a $\lambda$-fraction of all cluster labels. Note that all omitted proofs appear in the supplementary material.

\begin{restatable}{theorem}{thmmain}
\label{thm:main}
Let $\alpha\in(10\log n/\sqrt{n},1/7)$, $\Pi$ be a predictor with label error rate $\lambda\le\alpha$, and $\gamma \ge 1$ a sufficiently large constant.
If each cluster in the $(1+\alpha)$-approximately optimal $k$-means clustering of the predictor has at least $\gamma \eta k/\alpha$ points, then Algorithm \ref{alg:main} outputs a $(1+20\alpha)$-approximation to the $k$-means objective with prob.\ $1-1/\eta$, using $O(kdn\log n)$ total time. 
\end{restatable}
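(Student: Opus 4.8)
The plan is to reduce the entire theorem to a one-dimensional robust mean estimation guarantee for $\coordest$, and then combine it with the parallel-axis (bias--variance) identity for the $k$-means cost. Fix the $(1+\alpha)$-approximately optimal clustering $\mathcal{C}=\{\mathcal{C}_1,\dots,\mathcal{C}_k\}$ with which $\Pi$ is consistent, let $\nu_i$ be the mean (optimal center) of $\mathcal{C}_i$, and let $\Delta_{i,j}=\sum_{p\in\mathcal{C}_i}(p_j-(\nu_i)_j)^2$ be the variance of $\mathcal{C}_i$ in coordinate $j$, so that $\sum_{i,j}\Delta_{i,j}=\cost(P,\mathcal{C})\le(1+\alpha)\,\cost(P,C^{opt})$. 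Since assigning each $p\in\mathcal{C}_i$ to the output center $C'_i$ only increases the cost, $\cost(P,C')\le\sum_i\sum_{p\in\mathcal{C}_i}\|p-C'_i\|_2^2$, and the parallel-axis theorem (applied coordinate by coordinate) rewrites this as $\cost(P,\mathcal{C})+\sum_{i,j}|\mathcal{C}_i|\bigl((C'_i)_j-(\nu_i)_j\bigr)^2$. Hence it is enough to show that for every cluster $i$ and coordinate $j$, $\coordest$ returns a value within $O(\sqrt\alpha)$ standard deviations of $(\nu_i)_j$, i.e.\ $\bigl((C'_i)_j-(\nu_i)_j\bigr)^2\le c\,\alpha\,\Delta_{i,j}/|\mathcal{C}_i|$ for a small constant $c$; summing over $i,j$ and multiplying through by the $(1+\alpha)$ approximation factor then gives the claimed $(1+20\alpha)$ bound after a routine accounting of constants.

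\noindent\textbf{The one-dimensional estimate.} Fix $i,j$ and drop subscripts: write $\mu=(\nu_i)_j$, $\sigma^2=\Delta_{i,j}/|\mathcal{C}_i|$, let $Y$ be the $j$-th coordinates of the points labeled $i$, $G=Y\cap\mathcal{C}_i$ the ``good'' points, and $B=Y\setminus G$. Precision gives $|B|\le\lambda|Y|\le\alpha|Y|$; recall gives $|G|\ge(1-\lambda)|\mathcal{C}_i|$; and a Cauchy--Schwarz averaging argument shows $G$ has mean within $\tfrac{\sqrt\lambda}{1-\lambda}\sigma=O(\sqrt\alpha)\sigma$ of $\mu$ and variance at most $\sigma^2/(1-\lambda)$. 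Next I would establish, by concentration on the random split, that with probability $1-1/(k\eta)$ each of $X_1,X_2$ contains at most $O(\alpha)m$ points of $B$, has good-part sample mean within $O(\sqrt\alpha)\sigma$ of $\mu$, and has empirical interval counts matching those of $G$ up to an $O(\alpha)$ fraction --- this is exactly where the hypotheses enter: $|\mathcal{C}_i|\ge\gamma\eta k/\alpha$ makes the relevant sample sizes large enough that these are $O(\alpha)$-scale deviations at failure probability $O(1/(k\eta))$, and $\alpha>10\log n/\sqrt n$ keeps additive error terms negligible. Given this, Chebyshev's inequality produces an interval of length $O(\sigma/\sqrt\alpha)$ centered at the mean of $G$ that already contains $(1-5\alpha)m$ points of $X_1$, so the shortest such interval $I$ has length $O(\sigma/\sqrt\alpha)$; since $I$ contains $(1-5\alpha)m$ points of $X_1$, all but $O(\alpha)m$ of them good, $I$ captures all but an $O(\alpha)$ fraction of $G$ (hence of $X_2\cap G$), and the mean of the good points inside $I$ --- which lies in $I$ --- is within $O(\sqrt\alpha)\sigma$ of $\mu$, forcing $I\subseteq[\mu-O(\sigma/\sqrt\alpha),\,\mu+O(\sigma/\sqrt\alpha)]$. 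Finally $z$ is the mean of $X_2\cap I$: its good part has size $(1-O(\alpha))m$ and mean within $O(\sqrt\alpha)\sigma$ of $\mu$, while its at most $O(\alpha)m$ bad points each lie within $O(\sigma/\sqrt\alpha)$ of $\mu$, so a weighted-average estimate gives $|z-\mu|=O(\sqrt\alpha)\sigma$, as required.

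\noindent\textbf{Bookkeeping.} The overall success probability is $1-1/\eta$ by a union bound over the $k$ clusters (the $d$ coordinates of a single cluster may reuse one random split). The running time is dominated, for each cluster $i$, by sorting each of the $d$ coordinates of $Y_i$ and sliding a window of width $(1-5\alpha)|Y_i|$ to find the shortest interval, which is $O(d|Y_i|\log|Y_i|)$; since $\sum_i|Y_i|=n$ this totals $O(dn\log n)$, within the stated $O(kdn\log n)$.

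\noindent\textbf{Main obstacle.} The crux is the one-dimensional lemma, and within it the handling of the adversarial points: they survive the filter precisely when they fall inside $I$, so one must argue simultaneously that $I$ is short, that it is anchored near $\mu$, and that it discards only an $O(\alpha)$ fraction of the good mass --- and, most delicately, that choosing $I$ from $X_1$ does not bias the good points of $X_2$ that $I$ keeps (this decoupling is the whole reason the estimator splits the sample into $X_1$ and $X_2$). Pushing this through while spending only the concentration budget that the weak cluster-size hypothesis $\gamma\eta k/\alpha$ affords is the technical heart of the proof; the final constant $20$ is then just careful bookkeeping on the $O(\sqrt\alpha)$ and $O(\sigma/\sqrt\alpha)$ terms.
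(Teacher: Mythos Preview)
Your proposal is correct and follows essentially the same approach as the paper. Both arguments decompose the cost coordinate-wise, use Chebyshev/averaging to exhibit an interval of length $O(\sigma/\sqrt{\alpha})$ containing all but an $O(\alpha)$ fraction of the good points, exploit the $X_1/X_2$ split to decouple interval selection from mean estimation, and then bound the output center to within $O(\sqrt{\alpha})\sigma$ of the true center so that the parallel-axis identity yields the $(1+O(\alpha))$ cost factor. The only cosmetic difference is that the paper threads the argument through a chain of intermediate centers $C\to C_1\to C_0\to C'$ with a multiplicative cost bound at each step (its Lemmas on excluding an $\alpha$-fraction and on sampled means), whereas you go directly to the deviation bound $|z-\mu|=O(\sqrt{\alpha})\sigma$; your parenthetical about reusing one random split across all $d$ coordinates of a cluster is in fact needed to make the union bound over only $k$ events legitimate, a point the paper glosses over.
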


We improve the running time to $O(nd\log n+\poly(k,\log n))$ in Theorem~\ref{thm:fast} in Section~\ref{sec:fast}. 
Our algorithms can also tolerate similar error rates when failures correspond to random labels, adversarial labels, or a special failure symbol.

\paragraph{Error rate $\lambda$ vs.\, accuracy parameter $\alpha$.\,}\label{paragraph:error_rate}
We emphasize that $\lambda$ is the error rate of the predictor and $\alpha$ is only some loose upper bound on $\lambda$. It is reasonable that some algorithms can provide lossy guarantees on their outputs, which translates to the desired loose upper bound $\alpha$ on the accuracy of the predictor. Even if is not known, multiple instances of the algorithm can be run in parallel with separate exponentially decreasing ``guesses'' for the value $\alpha$. We can simply return the best clustering among these algorithms, which will provide the same theoretical guarantees as if we set $\alpha = 1.01\lambda$ , for example. Thus $\alpha$ does not need to be known in advance and it does not need to be tuned as a hyperparameter.

\section{Nearly Optimal Runtime Algorithm}\label{sec:fast}
We now describe Algorithm \ref{alg:fast}, which is an optimized runtime version of Algorithm \ref{alg:main} and whose guarantees we present in Theorem~\ref{thm:fast}. 
The bottleneck for Algorithm \ref{alg:main} is that after selecting $k$ empirical centers, it must still assign each of the $n$ points to the closest empirical center. 
The main intuition for Algorithm \ref{alg:fast} is that although reading all points uses $O(nd)$ time, we do not need to spend $O(dk)$ time per point to find its closest empirical center, if we set up the correct data structures.  
In fact, as long as we assign each point to a ``relatively good'' center, the assigned clustering is still a ``good'' approximation to the optimal solution. 
Thus we proceed in a similar manner as before to sample a number of input points and find the optimal $k$ centers for the sampled points. 

We use dimensionality reduction and an approximate nearest neighbor (ANN) data structure to efficiently assign each point to a ``sufficiently close'' center. 
Namely if a point $p\in P$ should be assigned to its closest empirical $C_i$ then $p$ must be assigned to some empirical center $C_j$ such that $\|p-C_j\|_2\le 2\|p-C_i\|_2$. 
Hence, points that are not assigned to their optimal centers only incur a ``small'' penalty due to the ANN data structure and so the cost of the clustering does not increase ``too much'' in expectation. 
Formally, we need the following definitions.
\begin{theorem}[JL transform]
\label{thm:jl}
\cite{JohnsonL84}
Let $d(\cdot,\cdot)$ be the standard Euclidean norm. 
There exists a family of linear maps $A:\mathbb{R}^d\to\mathbb{R}^k$ and an absolute constant $C>0$ such that for any $x,y\in\mathbb{R}^d$, $\PPr{\phi\in A, {d(\phi(x),\phi(y))\in(1\pm\alpha)d(x,y)}}\ge1-e^{-C\alpha^2 k}$.
\end{theorem}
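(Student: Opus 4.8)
The plan is to realize the family $A$ as suitably scaled random Gaussian matrices and reduce the statement to a one-dimensional concentration bound for a chi-squared random variable. Concretely, take $\phi = \frac{1}{\sqrt{k}}G$ with $G \in \mathbb{R}^{k\times d}$ having i.i.d.\ $\mathcal{N}(0,1)$ entries. Since $\phi$ is linear, $\phi(x)-\phi(y) = \phi(x-y)$, so it suffices to fix a single vector $v := x-y$ and bound the probability that $\|\phi v\|_2 \notin (1\pm\alpha)\|v\|_2$; moreover, as both the map and this event are positively homogeneous in $v$, we may rescale so that $\|v\|_2 = 1$.

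First I would identify the distribution of $\|\phi v\|_2^2$. Writing $g_1,\dots,g_k$ for the rows of $G$, each inner product $\langle g_i,v\rangle$ is $\mathcal{N}(0,\|v\|_2^2)=\mathcal{N}(0,1)$ and the $g_i$ are independent, so $\|\phi v\|_2^2 = \frac1k\sum_{i=1}^k\langle g_i,v\rangle^2$ has the law of $\frac1k\chi^2_k$, an average of $k$ i.i.d.\ squared standard Gaussians; in particular $\E\|\phi v\|_2^2 = 1 = \|v\|_2^2$. Hence the theorem reduces to showing $\PPr{\,\bigl|\tfrac1k\chi^2_k - 1\bigr| > \alpha\,} \le e^{-C\alpha^2 k}$ for all $\alpha \in (0,1)$ and an absolute constant $C$ (adjusting $C$ to absorb the factor $2$ lost in union-bounding the two one-sided tails, which is legitimate in the only relevant regime where $\alpha^2 k$ is at least a fixed constant), together with the elementary fact that on the complementary event $\|\phi v\|_2^2 \in (1-\alpha,1+\alpha)$ forces $\|\phi v\|_2 \in (1\pm\alpha)$, using $\sqrt{1-\alpha}\ge 1-\alpha$ and $\sqrt{1+\alpha}\le 1+\alpha$ for $\alpha \in [0,1]$.

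The substantive step is the chi-squared tail bound, which I would establish by a Chernoff/moment-generating-function argument. For a standard Gaussian $Z$ one computes $\E[e^{t(Z^2-1)}] = e^{-t}(1-2t)^{-1/2}$ for $t \in (0,\tfrac12)$; multiplying over the $k$ independent coordinates and applying Markov's inequality to $e^{tk(\frac1k\chi^2_k-1)}$ bounds the upper tail $\PPr{\frac1k\chi^2_k - 1 > \alpha}$ by $\exp\bigl(k\bigl(-t\alpha + (-t - \tfrac12\ln(1-2t))\bigr)\bigr)$. Using $-t-\tfrac12\ln(1-2t) = \tfrac12\int_0^{2t}\frac{s}{1-s}\,ds \le \frac{t^2}{1-2t}$ and choosing $t$ a suitable constant multiple of $\alpha$ (kept bounded away from $\tfrac12$) yields an exponent of the form $-c\alpha^2 k$ valid over the whole range $\alpha \le 1$; the lower tail is symmetric and in fact easier, since $-\ln(1+u)+u$ is well behaved for all $u>0$. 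A union bound and a final adjustment of the constant complete the argument.

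I expect the main obstacle to be exactly this MGF computation: controlling $-t-\tfrac12\ln(1-2t)$ tightly enough to extract a clean $\alpha^2 k$ exponent over the \emph{entire} range $\alpha\in(0,1)$ rather than only in the small-$\alpha$ regime, and picking the Chernoff parameter $t$ to balance the two terms. Everything else --- the reduction via linearity and homogeneity, recognizing the $\chi^2$ law, and passing from squared norms to norms --- is routine. If one is content to cite a black box, a standard $\chi^2$ concentration inequality (e.g.\ Laurent--Massart) gives the tail bound immediately, but since the theorem only asserts an unspecified absolute constant, the self-contained Chernoff argument is short enough to include in full.
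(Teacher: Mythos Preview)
Your proposal is correct and is the standard modern proof of the Johnson--Lindenstrauss lemma via Gaussian projections and $\chi^2$ concentration. Note, however, that the paper does not actually prove this statement: Theorem~\ref{thm:jl} is simply quoted from \cite{JohnsonL84} as a known tool, with no accompanying argument, so there is no ``paper's own proof'' to compare against. Your write-up would serve perfectly well as a self-contained justification if one were desired.
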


\begin{definition}[Terminal dimension reduction]
Given a set of points called terminals $C\subset\mathbb{R}^d$, we call a map $f:\mathbb{R}^d\to\mathbb{R}^k$ a \emph{terminal dimension reduction} with distortion $D$ if for every terminal $c\in C$ and point $p\in\mathbb{R}^d$, we have $d(p,c)\le d(f(p),f(c))\le D\cdot d(p,c)$.
\end{definition}

\begin{definition}[Approximate nearest neighbor search]
Given a set $P$ of $n$ points in a metric space $(X,d)$, a \emph{$(c,r)$-approximate nearest neighbor search} (ANN) data structure takes any query point $q\in X$ with non-empty $\{p\in P\,:\, 0<d(p,q)\le r\}$ and outputs a point in $\{p\in P\,:\,0<d(p,q)\le cr\}$. 
\end{definition}

\begin{algorithm}[!htb]
\caption{Fast learning-augmented algorithm for $k$-means clustering.}
\label{alg:fast}
\begin{algorithmic}[1]
\REQUIRE{A point set $X$, a predictor $\Pi$ with label error rate $\lambda\le\alpha$, and a tradeoff parameter $\zeta$}
\ENSURE{A $(1+\alpha)$-approximate $k$-means clustering of $X$}
\STATE{Form $S$ by sampling each point of $X$ with probability $\frac{100 \log k}{\alpha |A_x|}$ where $A_x$ is the set of points with the same label as $x$ according to $\Pi$.}
\STATE{Let $C_1,\ldots,C_k$ be the output of Algorithm~\ref{alg:main} on $S$.}
\STATE{Let $\phi_2$ be a random JL linear map with distortion $\frac{5}{4}$, i.e., dimension $O(\log n)$.}
\STATE{Let $\phi_1$ be a terminal dimension reduction with distortion $\frac{5}{4}$.}
\STATE{Let $\phi:=\phi_1\circ\phi_2$ be the composition map.}
\STATE{Let $A$ be a $(2,r)$-ANN data structure on the points $\phi(C_1),\ldots,\phi(C_k)$.}
\FOR{$x\in X$}
\STATE{Let $\ell_x$ be the label of $x$ from $\Pi$.}
\STATE{$\varrho\gets d(x,C_{\ell_x})$}
\STATE{Query $A$ to find the closest center $\phi(C_{p_x})$ to $x$ with $r=\frac{\varrho}{2}$.}
\IF{$d(x,C_{p_x})<2d(x,C_{\ell_x})$}
\STATE{Assign label $p_x$ to $x$.}
\ELSE
\STATE{Assign label $\ell_x$ to $x$.}
\ENDIF
\ENDFOR
\end{algorithmic}
\end{algorithm}



To justify the guarantees of Algorithm \ref{alg:fast}, we need runtime guarantees on creating a suitable dimensionality reduction map and an ANN data structure. These are from \cite{MakarychevMR19} and \cite{IndykM98,Har-PeledIM12,AndoniIR18} respectively, and are stated in Theorems $A.12$ and $A.13$ in the supplementary section. They ensure that each point is mapped to a ``good'' center. Thus, we obtain our main result describing the guarantees of Algorithm \ref{alg:fast}. 
\begin{restatable}{theorem}{thmfast}
\label{thm:fast}
Let $\alpha\in(10\log n/\sqrt{n},1/7)$, $\Pi$ be a predictor with label error rate $\lambda\le\alpha$, and $\gamma \ge 1$ be a sufficiently large constant.
If each cluster in the optimal $k$-means clustering of the predictor has at least $\gamma k \log k/\alpha$ points, then Algorithm \ref{alg:fast} outputs a $(1+20\alpha)$-approximation to the $k$-means objective with probability at least $3/4$, using $O(nd\log n+\poly(k,\log n))$ total time. 
\end{restatable}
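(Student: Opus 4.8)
The plan is to reduce the analysis to that of Theorem~\ref{thm:main} run on the subsample $S$, and then to pay separately for the two extra error sources introduced by Algorithm~\ref{alg:fast}: clustering $S$ rather than all of $P$, and replacing exact nearest-center assignment by approximate nearest neighbor search.

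First I would analyze the sampling in Line~1. Because $\Pi$ has precision and recall at least $1-\lambda$ for each label, $|A_i|=\Theta(|C^{opt}_i|)$, so a point of label class $i$ is kept with probability $\Theta(\log k/(\alpha|A_i|))$, giving $\E|S\cap A_i|=\Theta(\log k/\alpha)$ and $\E|S|=O(k\log k/\alpha)$. A Chernoff bound and a union bound over the $k$ labels show that with probability $1-o(1)$, simultaneously for every $i$: $|S\cap A_i|=\Theta(\log k/\alpha)$; the number of label-$i$ points of $S$ lying outside $C^{opt}_i$ is below the $5\alpha$-fraction threshold that \coordest\ tolerates; and $|S\cap C^{opt}_i|=\Omega(\log k/\alpha)$. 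One subtlety to handle carefully is that a point $x\in C^{opt}_i$ is kept with a probability that depends on its (possibly incorrect) label $\Pi(x)$, so mislabeled points are not sampled at the ``clean'' rate; but the hypothesis $|A_j|\ge(1-\lambda)\gamma k\log k/\alpha$ forces each mislabeled point to be kept with probability $O(1/k)$, so mislabeled points contribute only $O(k\log k)$ to $|S|$ in total and the per-label precision bound on $S$ still goes through. Conditioned on these events, $\coordest$ run on each label class of $S$ is exactly the situation of Theorem~\ref{thm:main} with ``cluster size'' $m=\Theta(\log k/\alpha)$; re-running that argument, the interval $I$ has width $O(\sigma_i/\sqrt{\alpha})$ and discards the outliers, and since the per-coordinate, per-cluster failure probability is $e^{-\Omega(\alpha m)}=k^{-\Omega(1)}$, a union bound over the $k$ clusters and the coordinates still succeeds.

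Next I would upgrade ``good clustering of $S$'' to ``good clustering of $P$''. Up to the outlier removal, the returned center $C_i$ is coordinate-wise the mean of a uniform size-$\Theta(m)$ subsample of $A_i\cap C^{opt}_i$. Writing $\mu_i$ for the mean of $C^{opt}_i$ and $\sigma_i^2=\cost(C^{opt}_i,\mu_i)/|C^{opt}_i|$, I would split $\|C_i-\mu_i\|^2$ by the triangle inequality into the deviation of an empirical mean of $\Theta(m)$ samples (of order $\sigma_i^2/m=O(\alpha\sigma_i^2)$, concentrated since \coordest\ averages only points inside the bounded interval $I$) plus the squared shift of the mean caused by deleting a $\le\lambda$ fraction of $C^{opt}_i$ (at most $O(\lambda\sigma_i^2)=O(\alpha\sigma_i^2)$). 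Using $\cost(C^{opt}_i,C_i)=\cost(C^{opt}_i,\mu_i)+|C^{opt}_i|\cdot\|C_i-\mu_i\|^2$ and summing over the $k$ independently sampled clusters gives $\sum_i\cost(C^{opt}_i,C_i)\le(1+O(\alpha))\sum_i\cost(C^{opt}_i,\mu_i)\le(1+O(\alpha))\,\cost(P,C^{opt})$ with constant probability, where the last inequality uses that $\Pi$'s clustering is $(1+\alpha)$-approximately optimal. In particular, assigning every point of $P$ to its nearest center among $C_1,\dots,C_k$ would already be a $(1+O(\alpha))$-approximation, and the remaining work is to perform this assignment in near-linear time.

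Finally I would argue the ANN step loses only another $(1+O(\alpha))$ factor. By Theorem~\ref{thm:jl} and the terminal dimension reduction guarantee, $\phi=\phi_1\circ\phi_2$ preserves every data-point-to-center distance up to a factor $5/4$ while mapping into $O(\log n)$ dimensions, so the $(2,r)$-ANN over $\phi(C_1),\dots,\phi(C_k)$ is built and queried in $\poly(k,\log n)$ time. For each $x$, querying at radius $r=d(x,C_{\ell_x})/2$ together with the fallback assigns $x$ to a center within a factor $2$ of its nearest center among $C_1,\dots,C_k$; for the correctly labeled points of each cluster---all but a $\lambda$ fraction, and for which $C_{\ell_x}$ is already essentially the nearest center in a near-optimal clustering---no factor is lost, while the extra cost of the (possibly adversarially) mislabeled points must be charged against $O(\alpha)\,\cost(P,C^{opt})$. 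I expect this charging to be the main obstacle: the $\le\lambda$ fraction of mislabeled points can have arbitrarily large distance to $C_{\ell_x}$, so one has to combine the precision guarantee with the fact that \coordest\ places $C_{\ell_x}$ inside the bulk of $A_{\ell_x}$ in order to telescope their contribution into cluster costs already accounted for. For the runtime: reading and pushing the $n$ points through $\phi_2$ costs $O(nd\log n)$, building $\phi_1$ and the ANN over $k$ points in $O(\log n)$ dimensions costs $\poly(k,\log n)$, and the $n$ ANN queries cost $\poly(k,\log n)$; running Algorithm~\ref{alg:main} on $|S|=O(k\log k/\alpha)$ points takes $O(k^2 d\,\polylog n/\alpha)$ time, absorbed into $O(nd\log n)$ since the cluster-size hypothesis forces $n\ge\gamma k^2\log k/\alpha$. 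A union bound over the sampling event, the \coordest\ events, and the concentration event of the previous paragraph---the ANN charging being deterministic once the centers are fixed---yields overall success probability at least $3/4$.
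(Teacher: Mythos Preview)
Your overall architecture---sample to form $S$, run Algorithm~\ref{alg:main} on $S$ to obtain centers, lift the center guarantee from $S$ to all of $P$ via a variance bound on the subsampled mean, and then use the JL/terminal reductions plus ANN to label each point cheaply---is exactly the paper's decomposition (the lifting step is the paper's Lemma~\ref{lem:cluster:opt:query} followed by Markov's inequality), and your runtime accounting matches the paper's proof of Theorem~\ref{thm:fast} line for line.

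The substantive divergence is the ANN charging for mislabeled points, which you rightly flag as the obstacle. The paper does \emph{not} attempt the deterministic telescoping you sketch. In Lemma~\ref{lem:fast:approx} it first argues that every point is assigned a center within factor $2$ of its nearest center in $\Gamma$, so a mislabeled $x$ contributes at most $4\,d(x,\Gamma)^2$; it then writes ``since each point $x\in X$ is assigned the incorrect label with probability $\lambda\le\alpha/2$, the expected cost of the labels assigned to $X_2$ is $\alpha\,\cost(X,\Gamma)$'' and applies Markov. That is, the paper silently switches to a \emph{random}-error model at this step. Under the adversarial label-error-rate model in the theorem statement, an adversary can place all errors on the highest-cost points of each cluster, making $\sum_{x\in X_2}d(x,\Gamma)^2$ a constant fraction of $\cost(X,\Gamma)$; neither the paper's Markov step nor your proposed ``precision plus $C_{\ell_x}$ lies in the bulk of $A_{\ell_x}$'' telescoping yields $(1+O(\alpha))$ in that setting. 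If you want to match the paper, the route is to state this step under the random-error interpretation and use Markov; the deterministic argument you outline will not close the gap.

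One smaller point: you propose to ``union bound over the $k$ clusters and the coordinates'' with per-event failure $k^{-\Omega(1)}$. The paper does not union bound over coordinates; it \emph{sums} the per-coordinate cost inequalities (cost decomposes additively) and union bounds only over the $k$ clusters. Also, the per-cluster failure in Lemma~\ref{lem:one:d} comes from a Markov step (Lemma~\ref{lem:cluster:queries}), not Chernoff, so it is $1/(\eta k)$ rather than $e^{-\Omega(\alpha m)}$; a coordinate-wise union bound with that rate would not survive large $d$.
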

Note that if we wish to only output the $k$ centers rather than labeling all of the input points, then the query complexity of Algorithm \ref{alg:fast} is $\widetilde{O}(k/\alpha)$ (see Step $1$ of Algorithm \ref{alg:fast}) with high probability. We show in the supplementary material that this is nearly optimal.

\begin{restatable}{theorem}{thmhardnessapprox}
\label{thm:hardness:approx}
For any $\delta\in(0,1]$, any algorithm that makes $O\left(\frac{k^{1-\delta}}{\alpha\log n}\right)$ queries to the predictor with label error rate $\alpha$ cannot output a $(1+C\alpha)$-approximation to the optimal $k$-means clustering cost in time $2^{O(n^{1-\delta})}$ time, assuming the Exponential Time Hypothesis. 
\end{restatable}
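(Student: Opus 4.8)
I would prove this statement by a reduction from $3$-SAT that pairs an ETH time lower bound with a query-complexity argument. Given a $3$-SAT instance $\varphi$ on $v$ variables and $\Theta(v)$ clauses --- which under ETH (via the Sparsification Lemma) cannot be decided in $2^{o(v)}$ time --- I would build a $k$-means instance $\mathcal{I}(\varphi)$ on $n$ points with $k$ clusters together with a predictor $\Pi$ of label error rate at most $\alpha$, with two properties: (i) the cost of any $(1+C\alpha)$-approximate clustering of $\mathcal{I}(\varphi)$ lies below a fixed threshold exactly when $\varphi$ is satisfiable, so producing such a clustering (and then evaluating its cost, which is cheap) decides $\varphi$; and (ii) an algorithm that observes only the labels of $q = O(k^{1-\delta}/(\alpha\log n))$ points still faces a problem as hard as $\varphi$ itself. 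Feeding $\mathcal{I}(\varphi)$ to a hypothetical algorithm meeting the theorem's query and time bounds would then decide $\varphi$ in time $2^{O(n^{1-\delta})}$; tuning the padding so that $v=\Theta(n^{1-\delta})$ contradicts ETH.

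The construction has three parts. The \emph{base gadget} is the known APX-hardness reduction for Euclidean $k$-means (via $3$-SAT / label cover), which is polynomial in $v$ and yields an instance whose near-optimal clusterings encode satisfying assignments, with optimal cost $\le B$ when $\varphi$ is satisfiable and $\ge (1+c_0)B$ otherwise for an absolute constant $c_0>0$. Next comes \emph{gap dilution}: to push the separation down from $1+c_0$ to $1+\Theta(\alpha)$, I would attach a rigid ``ballast'' of extra clusters whose contribution to the cost is a fixed $\Theta(c_0/\alpha)\cdot B$ independent of $\varphi$ (and which cannot be exploited --- relocating a ballast center never lowers the cost), so the ratio between the two regimes becomes $1+\Theta(\alpha)$; one then takes $C$ to be a small constant below the resulting gap constant. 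This step, together with the base reduction, inflates the point and cluster counts by a $\poly(1/\alpha)$ factor, which I would absorb, together with any further trivial zero-cost padding needed to meet prescribed $n$ and $k$, into the identities $v=\Theta(n^{1-\delta})$ and $k = \Theta(v)\cdot\poly(1/\alpha)$. Finally, the \emph{query-resistant encoding}: each informative cluster is arranged (in the spirit of the robust-estimation clusters that force Algorithm~\ref{alg:fast} to draw $\Theta(\log k/\alpha)$ samples per cluster) so that pinning down its center to the accuracy needed for a $(1+C\alpha)$-approximation requires $\Omega(1/\alpha)$ queries into that cluster, and the global placement of these centers is precisely what encodes $\varphi$. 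Since each query returns an element of $[k]$, i.e.\ $O(\log n)$ bits, a transcript of $q$ queries conveys at most $O(q\log n)$ bits about the encoding; when $q = O(k^{1-\delta}/(\alpha\log n))$, and after accounting for the $\Omega(1/\alpha)$ queries ``spent'' per informative cluster, this fixes only $o(v)$ variables of $\varphi$, so by self-reducibility of $3$-SAT the residual instance has $\Omega(v)$ free variables and under ETH still needs $2^{\Omega(v)}$ time.

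To conclude, an algorithm as in the theorem, run on $\mathcal{I}(\varphi)$ with $\Pi$ answering its $\le q$ queries, would output in $2^{O(n^{1-\delta})}$ time a $(1+C\alpha)$-approximate clustering whose cost (computable in $\poly(n)$ extra time) decides $\varphi$; but the previous paragraph shows that even with the $\le q$ query answers in hand, deciding $\varphi$ requires $2^{\Omega(v)}$ time under ETH. Choosing $v=\Theta(n^{1-\delta})$ so that the ETH lower bound exceeds the running time gives the contradiction, and randomized algorithms are handled in the standard way, fixing the randomness by an averaging argument over the family $\{\mathcal{I}(\varphi)\}_\varphi$.

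The main obstacle is engineering $\mathcal{I}(\varphi)$ and $\Pi$ so that three requirements hold simultaneously: $\Pi$ respects both the per-cluster and the per-predicted-label error-rate-$\alpha$ semantics of the model; the near-optimal clusterings of $\mathcal{I}(\varphi)$ essentially \emph{are} the satisfying assignments, so that ``output some $(1+C\alpha)$-approximation'' is genuinely as hard as finding one; and the construction is \emph{robustly} query-resistant, so that every informative cluster really does demand $\Omega(1/\alpha)$ queries before it gives up its part of the encoding. It is exactly in balancing these --- while controlling the polynomial blow-ups of the base reduction and the dilution (which is what the exponent $1-\delta$ and the $k^{1-\delta}$ absorb) and the $O(\log n)$ bits carried per query --- that the $\Omega(k^{1-\delta}/(\alpha\log n))$ query bound and the $2^{O(n^{1-\delta})}$ time bound in the statement are matched; the remaining bookkeeping is routine.
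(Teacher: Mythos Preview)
Your approach is genuinely different from the paper's, and the difference matters.

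The paper does \emph{not} reduce directly from $3$-SAT with an information-theoretic bit-counting argument. Instead it threads ETH through minimum vertex cover on $4$-regular graphs and then invokes the Lee--Schmidt--Wright $k$-means reduction, whose key structural feature is that each cluster center corresponds to a single vertex chosen in the cover. The ``partial information'' step is then handled by a pure \emph{guessing} argument (their helper theorem): if some algorithm could finish in $2^{O(n^{1-\delta})}$ time once it has been told a set $S$ of $n^{1-\delta}/\log n$ vertices of the cover, one could simply enumerate all such $S$ (there are $2^{O(n^{1-\delta})}$ of them) and solve vertex cover outright in subexponential time, contradicting ETH. Queries to the predictor reveal labels, labels reveal centers, centers reveal vertices --- so $q$ queries are worth at most $q$ vertices, and the guessing bound kicks in. The gap dilution to $1+C\alpha$ is done just as you suggest, by adding far-away ballast contributing a $\Theta(1/\alpha)$ share of the cost. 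No information theory, no self-reducibility, and crucially no bespoke ``query-resistant encoding'' is needed.

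Your plan has a real gap precisely at the step you flag as ``the main obstacle.'' You assert that each informative cluster can be engineered so that $\Omega(1/\alpha)$ queries are \emph{required} before it gives up its part of the encoding, and that the resulting $O(q\log n)$ bits pin down only $o(v)$ SAT variables. Neither claim is substantiated. A single query to $\Pi$ returns, with probability $1-\alpha$, the \emph{correct} label of the queried point; if the clusters are geometrically separated (as they must be to make the gap argument work), one correct label can already localize a center, so the $\Omega(1/\alpha)$-queries-per-cluster premise does not follow from anything you have set up. The bit-counting is likewise only heuristic: predictor answers are not arbitrary bits but are constrained to agree with a fixed near-optimal labeling up to an $\alpha$ fraction, and you have not tied ``bits revealed'' to ``SAT variables fixed'' through the actual geometry of your (unconstructed) instance. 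The paper's route avoids all of this by exploiting the one-to-one center/vertex correspondence of the vertex-cover reduction and replacing your information bound with a subset-enumeration bound.
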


\section{Experiments}\label{sec:experiments}
In this section we evaluate Algorithm \ref{alg:main} empirically on real datasets. We choose to implement Algorithm \ref{alg:main}, as opposed to the runtime optimal Algorithm \ref{alg:fast}, for simplicity and because the goal of our experiments is to highlight the error guarantees of our methodology, which both algorithms share. Further, we will see that Algorithm \ref{alg:main} is already very fast compared to alternatives. Thus, we implement the simpler of the two algorithms. We primarily fix the number of clusters to be $k=10$ and $k=25$ throughout our experiments for all datasets. Note that our predictors can readily generalize to other values of $k$ but we focus on these two values for clarity. All of our experiments were done on a CPU with i5 2.7 GHz dual core and 8 GB RAM. Furthermore, all our experimental results are averaged over $20$ independent trials and $\pm$ one standard deviation error is shaded when applicable. We give the full details of our datasets below.


\textbf{1) Oregon}: Dataset of $9$ graph snapshots sampled across $3$ months from an internet router communication network \citep{oregon}. We then use the top two eigenvectors of the normalized Laplacian matrix to give us node embeddings into $\mathbb{R}^2$ for each graph which gives us $9$ datasets, one for each graph. Each dataset has roughly $n \sim 10^4$ points. This is an instance of spectral clustering.  \textbf{2) PHY}: Dataset from KDD cup $2004$ \citep{kdd}. We take $10^4$ random samples to form our dataset. \textbf{3) CIFAR10}: Testing portion of CIFAR-10 ($n = 10^4$, dimension $3072$) \citep{cifar10}.
\vspace{-3mm}
\paragraph{Baselines.} We compare against the following algorithms. Additional experimental results on Lloyd's heuristic are given in Section~\ref{sec:lloyds_compare} in the supplementary material.
\\

\vspace{-3mm}
\noindent \textbf{1) \texttt{kmeans++}}: We measure the performance of our algorithm in comparison to the \texttt{kmeans++} seeding algorithm. Since \texttt{kmeans++} is a randomized algorithm, we take the average clustering cost after running \texttt{kmeans++} seeding on $20$ independent trials. We then standardize this value to have cost $1.0$ and report all other costs in terms of this normalization. For example, the cost $2.0$ means that the clustering cost is twice that of the average \texttt{kmeans++} clustering cost. We also use the labels of \texttt{kmeans++} \emph{as the predictor} in the input for Algorithm \ref{alg:main} (denoted as ``Alg + \texttt{kmeans++}'') which serves to highlight the fact that one can use any heuristic or approximate clustering algorithm as a predictor.
\\

\vspace{-3mm}
\noindent \textbf{2) Random sampling}: For this algorithm, we subsample the predictor labels with probability $q$ ranging from $1\%$ to $50\%$. We then construct the $k$-means centers using the labels of the sampled points and measure the clustering cost using the whole dataset. We use the \emph{best} value of $q$ in our range every time to give this baseline as much power as possible. We emphasize that random sampling cannot have theoretical guarantees since the random sample can be corrupted (similarly as in the example in Section \ref{sec:motivation}). Thus some outlier detection steps (such as our algorithms) are required.
\\

\vspace{-3mm}

\paragraph{Predictor Description.} We use the following predictors in our experiments.
\\

\vspace{-3mm}
\noindent \textbf{1) Nearest neighbor}: We use this predictor for the Oregon dataset. We find the best clustering of the node embeddings in Graph $\#1$. In practice, this means running many steps of Lloyd's algorithm until convergence after initial seeding by \texttt{kmeans++}. Our predictor takes as input a point in $\mathbb{R}^2$ representing a node embedding of any of the later $8$ graphs and outputs the label of the closest node in Graph $\# 1$. 

\textbf{2) Noisy predictor.} This is the main predictor for PHY. We form this predictor by first finding the best $k$-means solution on our datasets. This again means initial seeding by \texttt{kmeans++} and then many steps of Lloyd's algorithm until convergence. We then randomly corrupt the resulting labels by changing them to a uniformly random label independently with error probability ranging from $0$ to $1$. We report the cost of clustering using only these noisy labels versus processing these labels using Algorithm \ref{alg:main}.

\textbf{3) Neural network.} We use a standard neural network architecture (ResNet18) trained on the training portion of the CIFAR-10 dataset as the oracle for the testing portion which we use in our experiments. We used a pretrained model obtained from \cite{huy_pytorch}. Note that the neural network is predicting the class of the input image. However, the class value is highly correlated with the optimal $k$-means cluster group.

\vspace{-3mm}
\paragraph{Summary of results.}
Our experiments show that our algorithm can leverage predictors to significantly improve the cost of $k$-means clustering and that good predictors can be easily tailored to the data at hand. The cost of $k$-means clustering reduces significantly after applying our algorithm compared to just using the predictor labels for two of our predictors. Lastly, the quality of the predictor remains high for the Oregon dataset even though the later graphs have changed and ``moved away'' from Graph $\#1$.
\vspace{-3mm}
\paragraph{Selecting $\alpha$ in Algorithm \ref{alg:1d}.}
In practice, the choice of $\alpha$ to use in our algorithm depends on the given predictor whose properties may be unknown. Since our goal is to minimize the $k$-means clustering objective (\ref{eq:objective}), we can simply pick the `best' value. To do so, we iterate over a small range of possible $\alpha$ from $.01$ to $.15$ in Algorithm \ref{alg:1d} with a step size of $0.01$ and select the clustering that results in the lowest objective cost. The range is fixed for all of our experiments. (See Paragraph \ref{paragraph:error_rate}

\subsection{Results}
\begin{figure*}[!ht]
    \centering
    \subfigure[$k = 10$\label{fig:oregon10_all}]{\includegraphics[scale=0.21]{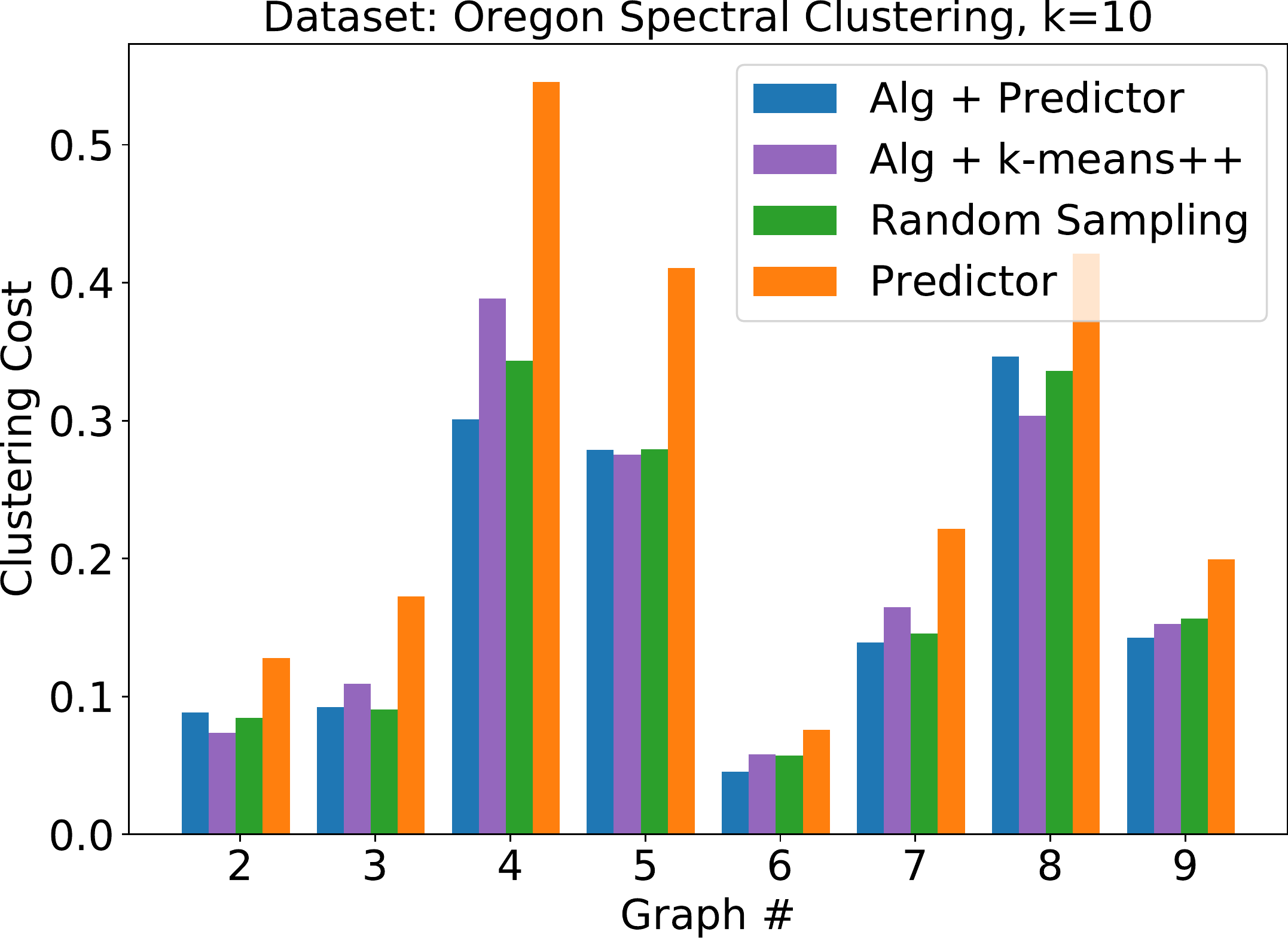}} 
    \subfigure[$k = 25$\label{fig:oregon25_all}]{\includegraphics[scale=0.21]{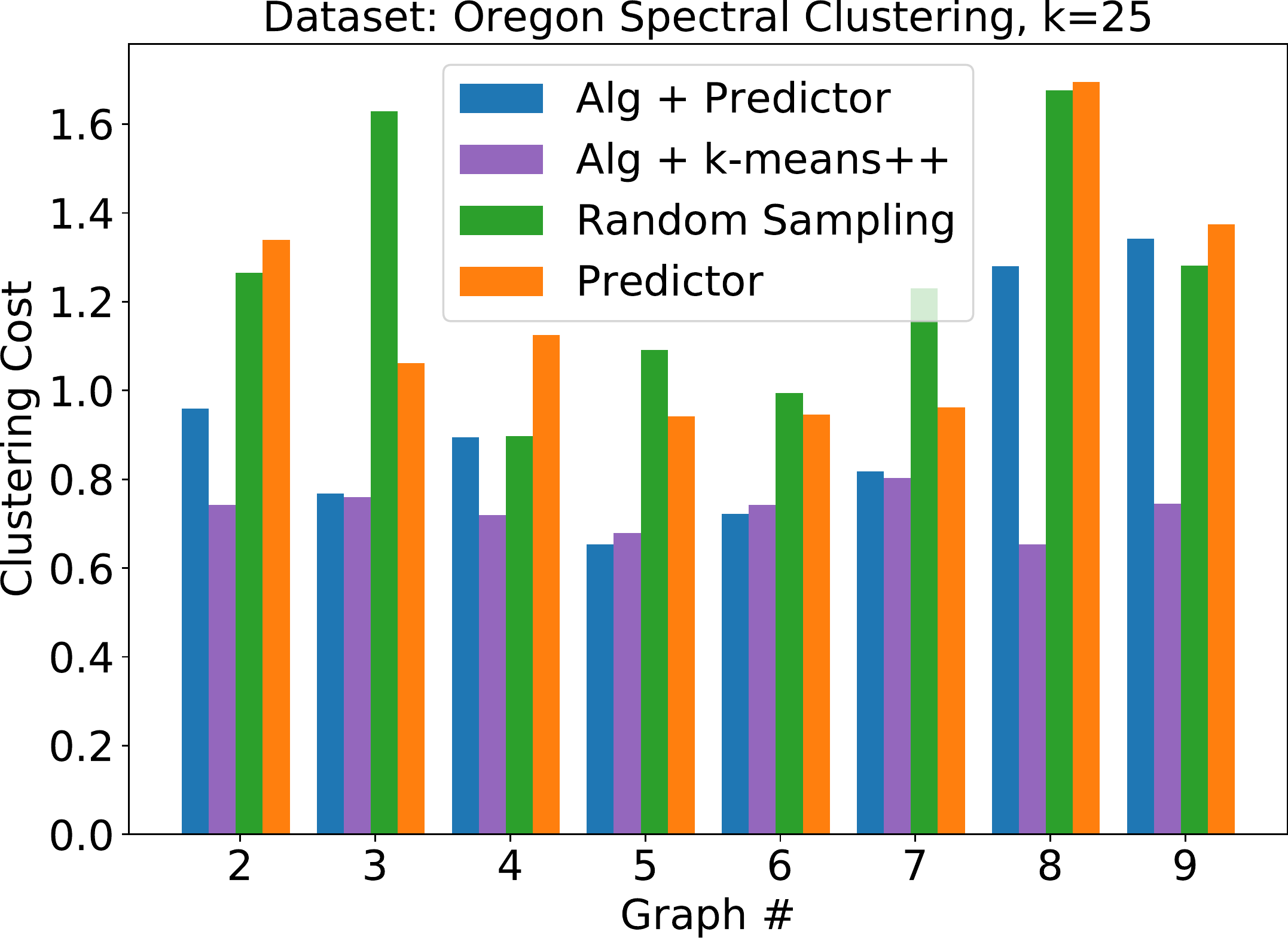}}
     \subfigure[$k = 25$\label{fig:oregon_10_graph5}]{\includegraphics[scale=0.21]{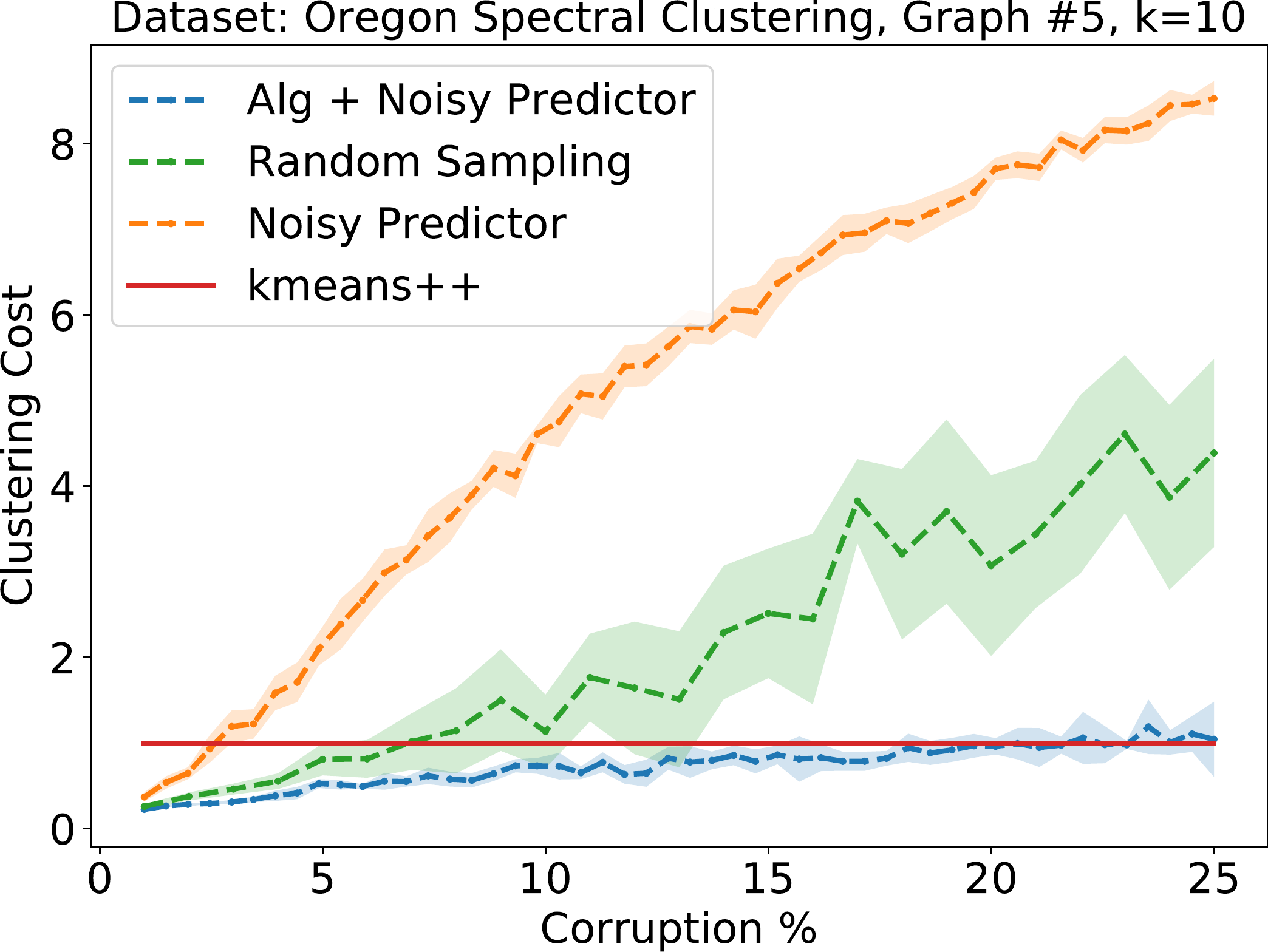}}
    \caption{Performance of Algorithm \ref{alg:main} on later graph embeddings using Graph $\#1$ as predictor. \label{fig:oregon}} 
\end{figure*}
\vspace{-3mm}
\begin{figure*}[!ht]
    \centering
    \subfigure[PHY, $k = 10$\label{fig:phy_10}]{\includegraphics[scale=0.21]{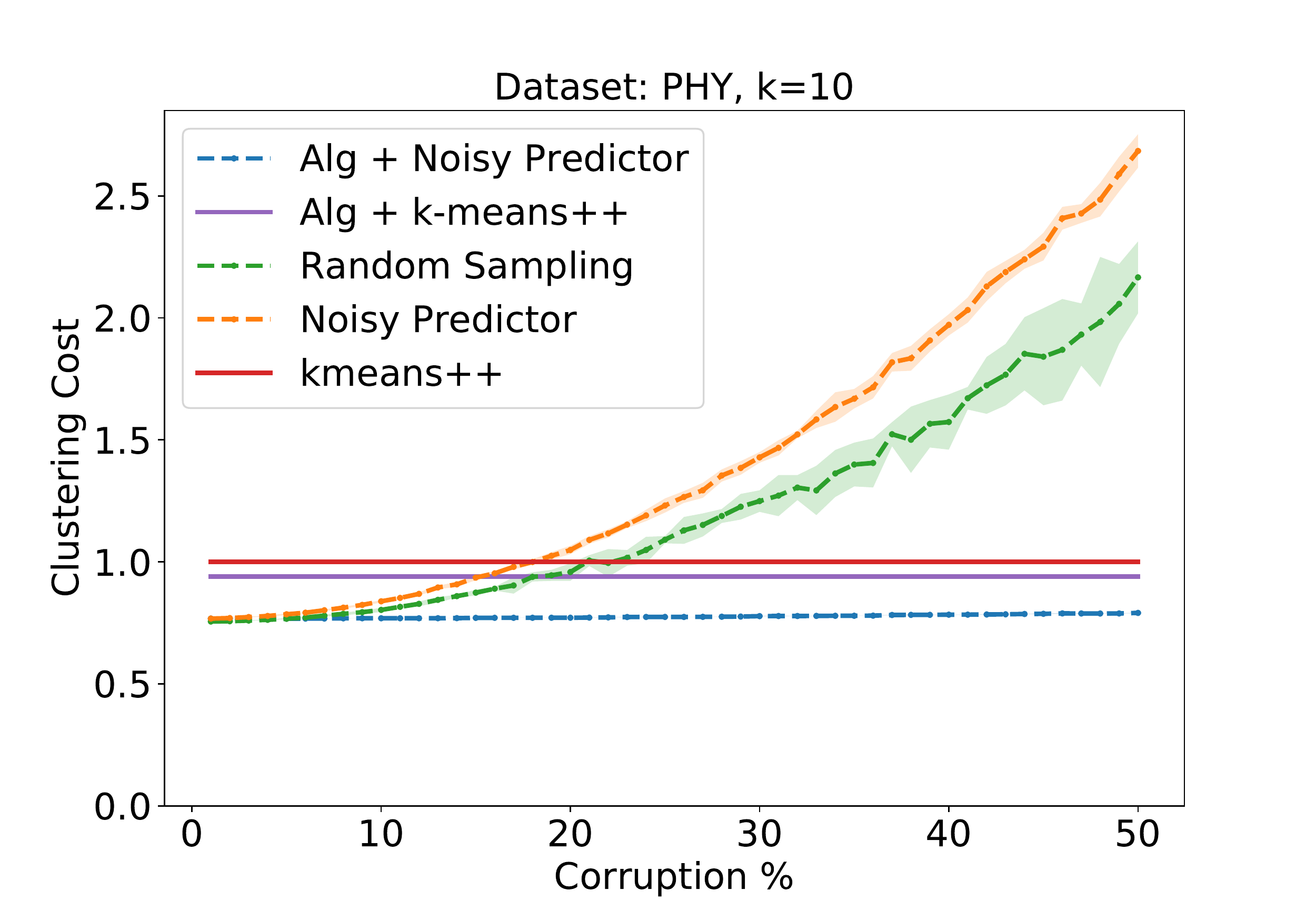}}
    \subfigure[CIFAR-10, $k = 10$\label{fig:cifar_fraction}]{\includegraphics[scale=0.21]{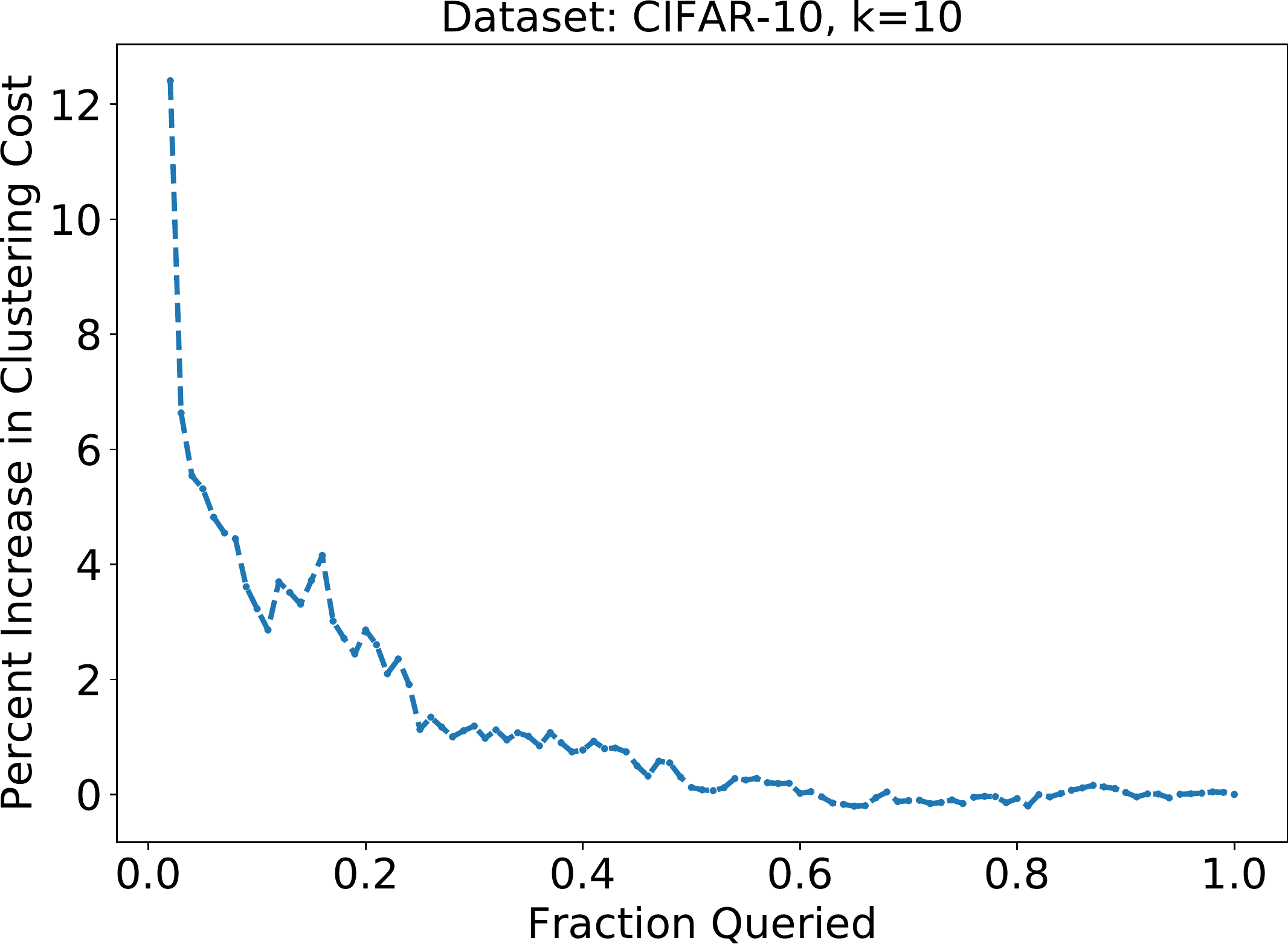}}
    \caption{Our algorithm is able to recover a good clustering even for very high levels of noise.}
    \vspace{-3mm}
\end{figure*}

\paragraph{Oregon. \,} We first compare our algorithm with Graph $\#1$ as the predictor against various baselines. This is shown in Figures \ref{fig:oregon10_all} and Figure \ref{fig:oregon25_all}. In the $k=10$ case, Figure \ref{fig:oregon10_all} shows that the predictor returns a clustering better than using just the \texttt{kmeans++} seeding, which is normalized to have cost $1.0$. This is to be expected since the subsequent graphs represent a similar network as Graph $\#1$, just sampled later in time. However, the clustering improves significantly after using our algorithm on the predictor labels as the average cost drops by $\textbf{55} \%$. We also see that using our algorithm after \texttt{kmeans++} is also sufficient to give significant decrease in clustering cost. Lastly, random sampling also gives comparable results. This can be explained because we are iterating over a large range of subsampling probabilities for random sampling.

In the $k=25$ case, Figure \ref{fig:oregon25_all} shows that the oracle performance degrades and is worse than the baseline in $5$ of the $8$ graphs. However our algorithm again improves the quality of the clustering over the oracle across all graphs. Using \texttt{kmeans++} as the predictor in our algorithm also improves the cost of clustering. The performance of random sampling is also worse. For example in Graph $\#3$ for $k=25$, it performed the worst out of all the tested algorithms.

Our algorithm also remains competitive with \texttt{kmeans++} seeding even if the predictor for the Oregon dataset is highly corrupted. We consider a later graph, Graph $\#5$, and corrupt the labels of the predictor randomly with probability $q$ ranging from $1\%$ to $25\%$ for the $k=10$ case in Figure \ref{fig:oregon_10_graph5}. While the cost of clustering using just the predictor labels can become increasingly worse, our algorithm is able to sufficiently ``clean'' the predictions. In addition, the cost of random sampling also gets worse as the corruptions increase, implying that it is much more sensitive to noise than our algorithm. The qualitatively similar plot for $k=25$ is given in the supplementary section. Note that in spectral clustering, one may wish to get a mapping to $\R^d$ for $d > 2$. We envision that our results translate to those settings as well since having higher order spectral information only results in a stronger predictor.
\vspace{-3mm}
\paragraph{PHY. \,} We use the noisy predictor for this dataset. We see in Figure \ref{fig:phy_10} that as the corruption percentage rises, the clustering given by just the predictor labels can have increasingly large cost.  Nevertheless, even if the clustering cost of the corrupted labels is rising, the cost decreases significantly after applying Algorithm \ref{alg:main} by roughly a factor of \textbf{3x}. Indeed,  we see that our algorithm can beat the \texttt{kmeans++} seeding baseline for $q$ as high as $50\%$. Just as in Figure \ref{fig:oregon_10_graph5}, random sampling is sensitive to noise. Lastly, we also remain competitive with the purple line which uses the labels output by \texttt{kmeans++} as the predictor in our algorithm (no corruptions added). The qualitatively similar plot for $k=25$ is given in the supplementary material.
\vspace{-3mm}
\paragraph{CIFAR-10.\,} The cost of clustering on CIFAR-10 using only the predictor, the predictor with our algorithm, random sampling, and \texttt{kmeans++} as the predictor for our algorithm were $0.733$, $0.697$, $0.721$, and $0.640$,  respectively, where $1.0$ represents the cost of \texttt{kmeans++}. The neural network was very accurate ($\sim 93\%$) in predicting the class of the input points which is highly correlated with the optimal $k$-means clusters. Nevertheless, our algorithm improved upon this highly precise predictor.

Note that using \texttt{kmeans++} as the predictor for our algorithm resulted in a clustering that was \textbf{13$\%$ better} than the one given by the neural network predictor. This highlights the fact that an approximate clustering combined with our algorithm can be competitive against a highly precise predictor, such as a neural network, even though creating the highly accurate predictor can be expensive. Indeed, obtaining a neural network predictor requires prior training data and also the time to train the network. On the other hand, using a heuristic clustering as a predictor is extremely flexible and can be applied to any dataset even if no prior training dataset is available ($50,000$ test images were required to train the neural network predictor but \texttt{kmeans++} as a predictor requires no test images), in addition to considerable savings in computation.

For example, the time taken to train the particular neural network we used was approximately $18$ minutes using the optimized PyTorch library (see training details under the ``Details Report'' section in \cite{huy_pytorch}). In general, the time can be much longer for more complicated datasets. On the other hand, our unoptimized algorithm implementation which used the labels of a sample run of \texttt{kmeans++} was still able to achieve a better clustering than the neural network predictor with $\alpha= 0.01$ in Algorithm \ref{alg:1d}  in $4.4$ seconds. In conclusion, we can achieve a better clustering by combining a much weaker predictor with our algorithm with the additional benefit of using a more flexible and computationally inexpensive methodology.

We also conducted an experiment where we only use a small fraction of the predictor labels in our algorithm. We select a $p$ fraction of the images, query their labels, and run our algorithm on only these points. We then report the cost of clustering on the entire dataset as $p$ ranges from $1\%$ to $100\%$. Figure \ref{fig:cifar_fraction} shows the percentage increase in clustering cost relative to querying the whole dataset is quite low for moderate values of $p$ but increasingly worse as $p$ becomes smaller. This suggests that the quality of our algorithm does not suffer drastically by querying a smaller fraction of the dataset.
 \vspace{-3mm}
 \paragraph{Comparison to Lloyd's Heuristic.\,} In Section \ref{sec:lloyds_compare}, we provide additional results on experiments using Lloyd's heuristic. In summary, we give both theoretical and empirical justifications for why our algorithms are superior to blindly following a predictor and then running Lloyd's heuristic.

\section*{Acknowledgements}
Zhili Feng, David P. Woodruf, and Samson Zhou would like to thank partial support from NSF grant No. CCF- 181584, Office of Naval Research (ONR) grant N00014-18-1-256, and a Simons Investigator Award. 
Sandeep Silwal was supported in part by a NSF Graduate Research Fellowship Program. 

\bibliography{references}
\bibliographystyle{style/iclr_style/iclr2022_conference}

\appendix
\newpage
\section{Appendix}
\begin{theorem}[Chernoff Bounds]
Let $X_1,\ldots,X_n$ be independent random variables taking values in $\{0,1\}$. 
Let $X=\sum_{i=1}^n X_i$ denote their sum and let $\mu=\mathbb{E}[X]$ denote the sum's expected value. Then for any $\delta\in(0,1)$ and $t > 0$,
\[\PPr{X\le(1-\delta)\mu}\le e^{-\frac{\delta^2\mu}{2}}.\]
For any $\delta>0$,
\[\PPr{X\ge(1+\delta)\mu}\le e^{-\frac{\delta^2\mu}{3}}.\]
Furthermore,
\[\PPr{|X - \mu| \ge t}\le e^{-\frac{t^2}{4n}}.\]
 \end{theorem}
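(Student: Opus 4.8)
The plan is to treat all three inequalities as instances of the exponential-moment (Chernoff) method, setting up that machinery once and then reducing each claim to an elementary one-variable estimate. For the upper tail I would begin from the exponential Markov inequality: for every $s>0$, $\PPr{X\ge a}\le e^{-sa}\,\E[e^{sX}]$. Independence gives $\E[e^{sX}]=\prod_{i=1}^n\E[e^{sX_i}]$, and writing $p_i=\PPr{X_i=1}$ each factor satisfies $\E[e^{sX_i}]=1+p_i(e^s-1)\le\exp\!\big(p_i(e^s-1)\big)$ via $1+x\le e^x$. Multiplying, $\E[e^{sX}]\le\exp\!\big(\mu(e^s-1)\big)$ with $\mu=\sum_i p_i$. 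Taking $a=(1+\delta)\mu$ and optimizing over $s$ (the optimum is $s=\ln(1+\delta)$) yields the classical form $\PPr{X\ge(1+\delta)\mu}\le\big(e^{\delta}/(1+\delta)^{1+\delta}\big)^{\mu}$.

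For the lower tail I would apply the same inequality to $-X$: for $s>0$, $\PPr{X\le(1-\delta)\mu}\le e^{s(1-\delta)\mu}\,\E[e^{-sX}]\le\exp\!\big(\mu(e^{-s}-1)+s(1-\delta)\mu\big)$, and choosing $s=-\ln(1-\delta)$ gives $\PPr{X\le(1-\delta)\mu}\le\big(e^{-\delta}/(1-\delta)^{1-\delta}\big)^{\mu}$. It then remains to verify the two scalar inequalities $e^{-\delta}/(1-\delta)^{1-\delta}\le e^{-\delta^2/2}$ for $\delta\in(0,1)$ and $e^{\delta}/(1+\delta)^{1+\delta}\le e^{-\delta^2/3}$ on the claimed range of $\delta$. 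Each reduces, after taking logarithms, to showing a single-variable function is nonnegative: for the first, $g(\delta)=\delta+(1-\delta)\ln(1-\delta)-\delta^2/2$ satisfies $g(0)=g'(0)=0$ and $g''(\delta)=\delta/(1-\delta)\ge0$, so $g\ge0$ on $(0,1)$; the second is handled identically, comparing $(1+\delta)\ln(1+\delta)$ against its Taylor expansion $\delta+\delta^2/2-\delta^3/6+\cdots$. Plugging these back into the bounds above gives exactly the stated exponents.

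For the additive bound $\PPr{|X-\mu|\ge t}\le e^{-t^2/(4n)}$ I would run the Chernoff method a third time, but replace the crude factor bound by Hoeffding's lemma: since each $X_i$ lies in $[0,1]$, $\E[e^{s(X_i-p_i)}]\le e^{s^2/8}$, hence $\E[e^{s(X-\mu)}]\le e^{ns^2/8}$ for every $s\in\R$. Applying exponential Markov to $X-\mu$ and, symmetrically, to $\mu-X$ and optimizing over $s$ bounds each one-sided deviation by $e^{-2t^2/n}$; a union bound over the two tails then yields the claimed estimate after routine simplification. (Alternatively, one can obtain it directly from the two multiplicative bounds by substituting $t=\delta\mu$ and using $\mu\le n$.) The only genuinely non-mechanical step is the pair of scalar inequalities in the previous paragraph — this is where the constants $2$ and $3$ come from, and for the upper tail one must be mildly careful about the range of $\delta$ over which the Gaussian-exponent form is valid. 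The exponential Markov step, the factorization by independence, and Hoeffding's lemma are all entirely standard.
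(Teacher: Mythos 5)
The paper states this theorem as a standard fact and gives no proof of its own, so there is nothing to compare against; your proposal is the canonical exponential-moment argument and is essentially the proof one would expect. The derivation of the two multiplicative tails (exponential Markov, factorization by independence, $1+x\le e^x$, optimization of $s$, then the scalar comparisons $\delta+(1-\delta)\ln(1-\delta)\ge\delta^2/2$ and $(1+\delta)\ln(1+\delta)-\delta\ge\delta^2/3$) is correct as outlined, and you are right to flag the range of $\delta$: the exponent $\delta^2/3$ for the upper tail is only valid for bounded $\delta$ (the inequality $(1+\delta)\ln(1+\delta)-\delta\ge\delta^2/3$ fails for large $\delta$, where one must fall back on $e^{-\delta^2\mu/(2+\delta)}$), which is really an imprecision in the statement rather than in your argument.

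The one step I would not call routine is the last one. Hoeffding plus a union bound gives $\PPr{|X-\mu|\ge t}\le 2e^{-2t^2/n}$, and $2e^{-2t^2/n}\le e^{-t^2/(4n)}$ only when $\tfrac{7t^2}{4n}\ge\ln 2$, i.e.\ $t^2\ge\tfrac{4}{7}n\ln 2$; for smaller $t$ the claimed bound does not follow from your derivation, and indeed the stated inequality can fail outright in degenerate cases (e.g.\ $n=1$, $X_1$ Bernoulli$(1/2)$, $t=1/2$ gives left side $1$ and right side $e^{-1/16}<1$). So either restrict $t$ to the regime where the factor of $2$ can be absorbed into the weakened exponent, or note explicitly that the third inequality, as stated, holds only up to such a caveat; the paper only ever invokes it for $t=\Theta(\sqrt{m\log m})$ with $m$ large, where your absorption argument goes through.
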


 \subsection{Proof of Theorem \ref{thm:main}}

We first prove Theorem~\ref{thm:main}, which shows that Algorithm \ref{alg:main} provides a $(1+\alpha)$-approximation to the optimal $k$-means clustering, but uses suboptimal time compared to a faster algorithm we present in Section \ref{sec:fast}. All omitted proofs of lemmas appear in Section \ref{sec:aux_lem}. 

We first show that for each coordinate, the empirical center for any $(1-\alpha)$-fraction of the input points provides a good approximation to the optimal $k$-means clustering cost. 

\begin{restatable}{lemma}{lemepsexcludeapprox}
\label{lem:eps:exclude:approx}
Let $P,Q\subseteq\mathbb{R}$ be sets of points on the real line such that $|P|\ge(1-\alpha)n$ and $|Q|\le\alpha n$. 
Let $X=P\cup Q$, $C_P$ be the mean of $P$ and $C_X$ be the mean of $X$. 
Then $\cost(X,C_P)\le\left(1+\frac{\alpha}{1-\alpha^2}\right)\cost(X,C_X)$. 
\end{restatable}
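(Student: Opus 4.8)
The plan is to reduce everything to the classical parallel-axis (bias--variance) identity. In one dimension the objective is just $\cost(X,c)=\sum_{x\in X}(x-c)^2$, and for any finite multiset $S\subseteq\R$ with mean $\mu_S$ and any $c\in\R$ we have $\cost(S,c)=\cost(S,\mu_S)+|S|\,(\mu_S-c)^2$. Applying this with $S=X$, $\mu_S=C_X$, $c=C_P$ gives the exact relation
\[
\cost(X,C_P)=\cost(X,C_X)+|X|\,(C_X-C_P)^2 ,
\]
so the whole statement reduces to bounding the displacement term $|X|(C_X-C_P)^2$ by $\tfrac{\alpha}{1-\alpha^2}\cost(X,C_X)$. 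This is exactly the place where the special structure of $k$-means is used: the optimal one-point center of a group is its mean, so the ``error'' from averaging a slightly corrupted set is captured precisely, not just crudely.

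First I would compute the displacement. Assume $P,Q$ are disjoint (overlap only shrinks $Q$, hence the ratio); write $N=|X|=|P|+|Q|$, let $\mu_Q$ be the mean of $Q$, and set $\Delta=C_P-\mu_Q$. Since $C_X=\frac{|P|C_P+|Q|\mu_Q}{N}$ is the $|P|:|Q|$ weighted average of $C_P$ and $\mu_Q$, we get $C_X-C_P=\frac{|Q|}{N}(\mu_Q-C_P)$ and $C_X-\mu_Q=\frac{|P|}{N}(C_P-\mu_Q)$, so $|X|(C_X-C_P)^2=\frac{|Q|^2}{N}\Delta^2$.

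Next I would lower-bound $\cost(X,C_X)$ by splitting the sum over $P$ and $Q$ and applying parallel axis to each piece:
\begin{align*}
\cost(X,C_X)&=\cost(P,C_X)+\cost(Q,C_X)\\
&\ge |P|(C_P-C_X)^2+|Q|(\mu_Q-C_X)^2=\frac{|P|\,|Q|^2+|Q|\,|P|^2}{N^2}\Delta^2=\frac{|P|\,|Q|}{N}\Delta^2 ,
\end{align*}
where the inequality discards the nonnegative within-group variances $\cost(P,C_P)$ and $\cost(Q,\mu_Q)$. Combining the two computations, $|X|(C_X-C_P)^2=\frac{|Q|}{|P|}\cdot\frac{|P|\,|Q|}{N}\Delta^2\le\frac{|Q|}{|P|}\cost(X,C_X)$, and then $|Q|\le\alpha n$ together with $|P|\ge(1-\alpha)n$ turns $\tfrac{|Q|}{|P|}$ into the claimed multiplicative factor; the degenerate case $|Q|=0$ is trivial since then $C_P=C_X$.

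I do not expect a real obstacle here — the argument is a short calculation. The two points requiring care are (i) not being wasteful in the lower bound on $\cost(X,C_X)$, since that single step fixes the value of the final constant, and (ii) the degenerate and overlap cases. The only genuine idea is the observation that, because $k$-means centers are means, the cost penalty of using the mean of a contaminated set instead of the mean of the full set equals $|X|$ times the squared shift of the mean, which the parallel-axis identity lets us charge directly against the optimal cost.
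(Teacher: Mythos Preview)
Your argument is correct and follows essentially the same route as the paper: apply the parallel-axis identity $\cost(X,C_P)=\cost(X,C_X)+|X|\,(C_X-C_P)^2$ and then bound the shift term by a small multiple of $\cost(X,C_X)$. The only difference is in the lower bound on $\cost(X,C_X)$: the paper drops the $P$-contribution entirely and uses only $\cost(Q,C_X)\ge|Q|\,(\mu_Q-C_X)^2$, obtaining the constant $\frac{\alpha}{(1-\alpha)^2}$, whereas you keep both between-group terms and get the sharper $\frac{|Q|}{|P|}\le\frac{\alpha}{1-\alpha}$. One small caveat: your final line (``turns $\tfrac{|Q|}{|P|}$ into the claimed multiplicative factor'') is not literally right, since $\frac{\alpha}{1-\alpha}$ is not $\frac{\alpha}{1-\alpha^2}$; but the stated $1-\alpha^2$ is evidently a typo for $(1-\alpha)^2$ (that is what the paper's own proof establishes and what is used downstream), and your constant is strictly better than that.
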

We now show that a conceptual interval $I^* \subset \mathbb{R}$ with ``small'' length contains a significant fraction of the true points. 
Ultimately, we will show that the interval $I$ computed in the ``training'' phase in $\coordest$ has smaller length than $I^*$ with high probability and yet $I$ also contains a significant fraction of the true points. 
The main purpose of $I^*$ (and eventually $I$) is to filter out extreme outliers because the ``testing'' phase only considers points in $I\cap X_2$. 

\begin{restatable}{lemma}{lemintervalcount}
\label{lem:interval:count}
For a fixed set $X\subseteq\mathbb{R}$, let $C$ be the mean of $X$ and $\sigma^2=\frac{1}{2|X|}\sum_{x\in X}(x-C)^2$ be the variance. 
Then the interval $I^*=\left[C-\frac{\sigma}{\sqrt{\alpha}},C+\frac{\sigma}{\sqrt{\alpha}}\right]$ contains at least a $(1-4\alpha)$ fraction of the points in $X$. 
\end{restatable}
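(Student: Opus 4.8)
The plan is to recognize Lemma~\ref{lem:interval:count} as a Chebyshev-type tail bound for the uniform (empirical) distribution on $X$, where the unusual normalization $\sigma^2=\frac{1}{2|X|}\sum_{x\in X}(x-C)^2$ simply absorbs a harmless factor of two: the ordinary empirical variance of $X$ equals $2\sigma^2$. Writing $n=|X|$ and letting $B=\{x\in X:|x-C|>\sigma/\sqrt{\alpha}\}$ be the set of points lying strictly outside $I^*$, it suffices to show $|B|\le 2\alpha n$, which is even stronger than the required $|B|\le 4\alpha n$, since $X\cap I^*=X\setminus B$ and hence $|X\cap I^*|=n-|B|$.

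First I would dispose of the degenerate case $\sigma=0$: then every point of $X$ equals $C$, so $B=\emptyset$ and $I^*=\{C\}\supseteq X$. Assuming $\sigma>0$, the one substantive step is a direct counting inequality. Each $x\in B$ contributes $(x-C)^2>\sigma^2/\alpha$ to the sum $\sum_{x\in X}(x-C)^2$, and every term of that sum is nonnegative, so
\[
2n\sigma^2 \;=\; \sum_{x\in X}(x-C)^2 \;\ge\; \sum_{x\in B}(x-C)^2 \;>\; |B|\cdot\frac{\sigma^2}{\alpha},
\]
which rearranges to $|B|<2\alpha n$. (Equivalently, this is Markov's inequality applied to the nonnegative random variable $(Y-C)^2$ for $Y$ uniform on $X$, at threshold $\sigma^2/\alpha$, using $\E[(Y-C)^2]=2\sigma^2$.) Since the complement $X\setminus B$ consists precisely of the points with $|x-C|\le\sigma/\sqrt{\alpha}$, i.e.\ the points in $I^*=[C-\sigma/\sqrt{\alpha},\,C+\sigma/\sqrt{\alpha}]$, we conclude $|X\cap I^*|=n-|B|>(1-2\alpha)n\ge(1-4\alpha)n$.

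There is no real obstacle here; the only points requiring care are (i) matching the strict inequality defining $B$ against the closed interval $I^*$, so that any points landing exactly on the endpoints are counted as inside, and (ii) the $\sigma=0$ corner case. The gap between the $2\alpha$ fraction that the argument actually yields and the $4\alpha$ stated in the lemma is deliberate slack, presumably to leave room for the subsequent comparison of $I^*$ with the empirically chosen interval $I$ (which contains a $1-5\alpha$ fraction of the label-$i$ points) and the union-bound/sampling steps in the proof of Theorem~\ref{thm:main}.
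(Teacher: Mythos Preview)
Your proof is correct and follows essentially the same Chebyshev/Markov-type argument as the paper's: any point outside $I^*$ contributes more than $\sigma^2/\alpha$ to the sum defining $\sigma^2$, so too many such points would force the variance to exceed itself. You are in fact more careful than the paper (handling $\sigma=0$ and the strict-versus-closed endpoint issue) and correctly observe that the argument actually yields the sharper bound $1-2\alpha$ rather than $1-4\alpha$.
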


Using Lemma~\ref{lem:interval:count}, we show that the interval $I$ that is computed in the ``training'' phase contains a significant fraction of the true points. 

\begin{restatable}{lemma}{lemintervalintersection}
\label{lem:interval:intersection}
Let $m$ be a sufficiently large consatnt. We have that $I:=[a,b]$ contains at least a $1-6\alpha$ fraction of points of $X_2$ and $b-a\le 2\sigma/\sqrt{\alpha}$, with high probability, i.e., $1-1/\poly(m)$. 
\end{restatable}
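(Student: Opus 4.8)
The plan is to use the idealized interval $I^{*}$ from Lemma~\ref{lem:interval:count} as a short ``witness'' interval, and then to move point counts between the two halves $X_1,X_2$ of the random partition using concentration. Let $X\subseteq\mathbb{R}$ denote the coordinates of the points that are \emph{correctly} labeled $i$; since $\Pi$ has label error rate $\lambda\le\alpha$ on the points it assigns label $i$, at most $2m\lambda\le 2m\alpha$ of the $2m$ input points are outliers, so $|X|\ge 2m(1-\alpha)$. Let $C$ and $\sigma^2$ be the mean and variance of $X$ as in Lemma~\ref{lem:interval:count}. That lemma says the interval $I^{*}=[C-\sigma/\sqrt{\alpha},\,C+\sigma/\sqrt{\alpha}]$, whose length is exactly $2\sigma/\sqrt{\alpha}$, contains at least $(1-4\alpha)|X|\ge 2m(1-4\alpha)(1-\alpha)\ge 2m(1-5\alpha)$ of the $2m$ points.

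To bound $b-a$, I would first show that $I^{*}$ already contains at least $m(1-5\alpha)$ points of $X_1$. Each of the $\ge 2m(1-5\alpha)$ points lying in $I^{*}$ lands in $X_1$ with probability $1/2$ (up to the mild without-replacement correction), so a Chernoff bound gives $|X_1\cap I^{*}|\ge m(1-5\alpha)$ except with probability $1/\poly(m)$, once $m$ is large relative to $1/\alpha^2$ (up to logarithmic factors) --- which is exactly what the assumption that every cluster has $\Omega(k/\alpha)$ points provides in the application to Theorem~\ref{thm:main}. Since Algorithm~\ref{alg:1d} defines $I$ to be the \emph{shortest} interval containing $m(1-5\alpha)$ points of $X_1$, the existence of the competitor $I^{*}$ forces $b-a=|I|\le|I^{*}|=2\sigma/\sqrt{\alpha}$.

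For the count inside $X_2$, the subtlety is that $I$ is chosen as a function of $X_1$, so concentration cannot be applied to $I$ directly. I would instead show that the random partition simultaneously balances \emph{every} interval whose endpoints lie among the $2m$ points: there are only $O(m^2)$ such intervals, and for a fixed one, $J$, the count $|X_1\cap J|$ is hypergeometric with mean equal to half the number of points in $J$, so a Chernoff bound together with a union bound over the $O(m^2)$ intervals gives $\big||X_1\cap J|-|X_2\cap J|\big|\le\alpha m$ for all of them at once, with probability $1-1/\poly(m)$ (again using $m$ large relative to $1/\alpha^2$ up to logarithmic factors). Taking $J=I$, with endpoints at data points, and using $|X_1\cap I|\ge m(1-5\alpha)$ then yields $|X_2\cap I|\ge m(1-5\alpha)-\alpha m=m(1-6\alpha)$, i.e.\ a $1-6\alpha$ fraction of $X_2$. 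A slightly cleaner alternative avoids the union bound by first invoking Chebyshev's inequality (through the variance bound underlying Lemma~\ref{lem:interval:count}) to confine $I$ to a fixed window of length $O(\sigma/\sqrt{\alpha})$ centered at $C$, and then applying a single concentration bound for that fixed window.

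The step I expect to be the main obstacle is exactly this transfer from $X_1$ to $X_2$: dealing with the dependence of $I$ on $X_1$ forces either the union bound over all $O(m^2)$ candidate intervals or the Chebyshev localization, and it is here that the lower bound on the cluster size --- which makes $m$ dominate $1/\alpha^2$ up to logarithmic factors --- is used. Everything else is bookkeeping: combining the $(1-4\alpha)$ guarantee of Lemma~\ref{lem:interval:count}, the at most $\alpha$ fraction of mislabeled points, and the $O(\sqrt{m\log m})$ fluctuations of the partition into the claimed $1-6\alpha$ fraction and length $2\sigma/\sqrt{\alpha}$.
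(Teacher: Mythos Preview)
Your proposal is correct and follows essentially the same route as the paper: use $I^{*}$ from Lemma~\ref{lem:interval:count} as the witness interval to bound $|I|$, apply a Chernoff bound to get $|X_1\cap I^{*}|\ge m(1-5\alpha)$, and then union-bound over the $O(m^2)$ candidate intervals (with endpoints at data points) to transfer the count from $X_1$ to $X_2$. The paper's proof does exactly this, including the observation that $|J_1-J_2|\le\alpha m c$ for every such interval $J$; your extra bookkeeping for the at most $2m\alpha$ mislabeled points is, if anything, slightly more careful than the paper's own write-up.
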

We next show that the optimal clustering on a subset obtained by independently sampling each input point provides a rough approximation of the optimal clustering. 
That is, the optimal center is well-approximated by the empirical center of the sampled points. 

\begin{restatable}{lemma}{lemclustermeta}
\label{lem:cluster:meta}
Let $S$ be a set of points obtained by independently sampling each point of $X\subseteq\mathbb{R}^d$ with probability $p=\frac{1}{2}$.   
Let $C$ be the optimal center of these points and $C_S$ be the empirical center of these points. 
Conditioned on $|S| \ge 1$, then $\mathbb{E}[C_S] = \overline{x}$ and there exists a constant $\gamma$ such that for $\eta\ge 1$ and $|X|>\frac{\eta\gamma k}{\alpha}$,
\begin{align*}
\mathbb{E}\left[\|C_S - \overline{x} \|_2^2\right] \le \frac{\gamma}{|X|^2} \cdot \left(\sum_{x \in X} \|x - \overline{x} \|_2^2 \right)\\
\PPr{\cost(X,C_S)>\left(1+\alpha\right)\cost(X,C)}<1/(\eta k).
\end{align*}
\end{restatable}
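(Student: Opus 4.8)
The plan is to view all three conclusions as statements about how far the sample mean $C_S$ lies from the true mean $\overline{x}$ of $X$, and then to convert the second-moment bound into the cost guarantee using the bias--variance decomposition of the $k$-means objective, namely that $\cost(X,c)=\cost(X,\overline{x})+|X|\cdot\|c-\overline{x}\|_2^2$ for every $c\in\R^d$ (the cross term vanishes at the mean). Here $C$ denotes the optimal single center of $X$, which is exactly $\overline{x}$, so $\cost(X,C)=\cost(X,\overline{x})$.

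For the first two displays I would condition on $|S|=s$ for each $s\ge 1$: then $S$ is a uniformly random $s$-subset of $X$, so with $n:=|X|$ each point lies in $S$ with probability $s/n$ and each unordered pair with probability $\frac{s(s-1)}{n(n-1)}$. Unbiasedness, $\E[C_S\mid |S|\ge 1]=\overline{x}$, then follows by linearity. Writing $x_i=\overline{x}+v_i$ with $\sum_i v_i=0$ and $V:=\sum_{x\in X}\|x-\overline{x}\|_2^2$, we have $C_S-\overline{x}=\frac{1}{s}\sum_{i\in S}v_i$ when $|S|=s$; expanding the square and using $\sum_{i\ne j}\langle v_i,v_j\rangle=\|\sum_i v_i\|_2^2-\sum_i\|v_i\|_2^2=-V$ yields the exact identity $\E[\,\|C_S-\overline{x}\|_2^2\mid |S|=s\,]=\frac{n-s}{s\,n\,(n-1)}V\le\frac{V}{s(n-1)}$. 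To pass from this to an unconditional bound I would condition on the event $|S|\ge n/3$, which by a Chernoff bound fails with probability $e^{-\Omega(n)}$: on this event the right-hand side is $O(V/n^2)$, and on the complementary event $\|C_S-\overline{x}\|_2^2\le\max_i\|v_i\|_2^2\le V$, contributing only $V\cdot e^{-\Omega(n)}$, which is lower order. Hence $\E[\|C_S-\overline{x}\|_2^2]\le\frac{\gamma}{n^2}V$ for a suitable constant $\gamma$, giving the first inequality.

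For the last display, the decomposition gives $\cost(X,C_S)=V+n\|C_S-\overline{x}\|_2^2$ while $\cost(X,C)=V$, so the event $\cost(X,C_S)>(1+\alpha)\cost(X,C)$ is exactly the event $\|C_S-\overline{x}\|_2^2>\alpha V/n$. Markov's inequality together with the second-moment bound then gives $\PPr{\|C_S-\overline{x}\|_2^2>\alpha V/n}\le\frac{\gamma V/n^2}{\alpha V/n}=\frac{\gamma}{\alpha n}$, which is below $1/(\eta k)$ precisely when $n>\eta\gamma k/\alpha$, matching the hypothesis on the cluster size.

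The main obstacle is the bookkeeping around the conditioning on $|S|$: one has to verify carefully that restricting to $|S|\ge 1$ and then to the typical event $|S|\ge n/3$ neither biases $C_S$ nor inflates its second moment — equivalently that $\E[1/|S|\mid |S|\ge 1]=O(1/n)$ and that the rare small-sample event contributes negligibly because $\|C_S-\overline{x}\|_2^2$ is always at most $V$. Once that is in place, the variance expansion and the application of Markov's inequality are routine.
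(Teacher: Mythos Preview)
Your proposal is correct and follows essentially the same strategy as the paper: condition on $|S|$, bound the conditional second moment of $C_S-\overline{x}$ via the expansion with $\sum_i v_i=0$, pass to the unconditional bound, and then combine the bias--variance identity $\cost(X,C_S)=\cost(X,\overline{x})+|X|\cdot\|C_S-\overline{x}\|_2^2$ with Markov's inequality. The only minor technical difference is that the paper controls the average over $|S|$ via the combinatorial identity $\sum_{j=1}^{m}\tbinom{m}{j}/(j\,2^{m})=\Theta(1/m)$, whereas you use Chernoff concentration on $|S|$ together with the crude deterministic bound $\|C_S-\overline{x}\|_2^2\le V$ on the rare event; both yield $\E[\|C_S-\overline{x}\|_2^2]=O(V/|X|^2)$.
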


Using Lemma~\ref{lem:eps:exclude:approx}, Lemma~\ref{lem:interval:intersection}, and Lemma~\ref{lem:cluster:meta}, we justify the correctness of the subroutine $\coordest$. 

\begin{restatable}{lemma}{lemoned}
\label{lem:one:d}
Let $\alpha\in(10\log n/\sqrt{n},1/7)$. 
Let $P,Q\subseteq\mathbb{R}$ be sets of points on the real line such that $|P|\ge(1-\alpha)2m$ and $|Q|\le2\alpha m$, and $X=P\cup Q$. 
Let $C$ be the center of $P$.
Then $\coordest$ on input set $X$ outputs a point $C'$ such that with probability at least $1-1/(\eta k)$, $\cost(P,C')\le(1+18\alpha)(1+\alpha)\left(1+\frac{\alpha}{(1-\alpha)^2}\right)\cost(P,C)$.
\end{restatable}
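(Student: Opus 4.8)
# Proof Proposal for Lemma \ref{lem:one:d}

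The plan is to track the error introduced in each of the three phases of $\coordest$ — the random partition into $X_1, X_2$, the ``training'' interval $I$, and the ``testing'' mean over $X_2 \cap I$ — and chain together the three estimates already proved. First I would set up notation: let $P, Q$ be as in the statement, so that among the $2m$ points, at least $(1-\alpha)2m$ are ``true'' points of $P$ and at most $2\alpha m$ are corruptions in $Q$. After the random split, a Chernoff bound shows that each of $X_1$ and $X_2$ contains at most roughly $2\alpha m \cdot \frac{1}{2}(1+o(1)) \le \tfrac{5}{2}\alpha m$ corruptions and at least $(1-\tfrac{5}{2}\alpha)m$ true points, with high probability in $m$; here is where the hypothesis $\alpha > 10\log n/\sqrt{n}$ (equivalently, $m$ large enough relative to $1/\alpha^2$) is used to make the deviations negligible. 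Condition on this good split event for the rest of the argument.

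Next I would apply Lemma \ref{lem:interval:intersection}: since the shortest interval $I = [a,b]$ capturing $m(1-5\alpha)$ points of $X_1$ must (by the counting of true points in $X_1$) be ``anchored'' on true points, we get that $b - a \le 2\sigma/\sqrt{\alpha}$ and that $I$ contains at least a $(1-6\alpha)$ fraction of $X_2$, hence at least a $(1-6\alpha - \tfrac{5}{2}\alpha) \ge (1 - 9\alpha)$ fraction of the \emph{true} points of $P$ lying in $X_2$ — call this set $P_2' := P \cap X_2 \cap I$. So $Z = X_2 \cap I$ decomposes as $P_2' \cup Q_2'$ where $|P_2'| \ge (1-9\alpha)|P \cap X_2|$ and $|Q_2'| \le \tfrac{5}{2}\alpha m$; relative to $|Z|$, the corrupted fraction of $Z$ is $O(\alpha)$, and moreover every point of $Q_2'$ lies within the short interval $I$, so it cannot be an extreme outlier. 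This is the conceptual heart: the ``training'' phase has bounded the corruptions' \emph{location}, not just their \emph{count}.

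Then I would invoke the charging lemmas. Write $\bar z$ for the mean of $Z$ (the output $z = C'$), $C_{P_2'}$ for the mean of $P_2'$. By Lemma \ref{lem:eps:exclude:approx} applied with the roles $P \leftrightarrow P_2'$, $Q \leftrightarrow Q_2'$ inside the ground set $Z$, we get $\cost(Z, C_{P_2'}) \le \left(1 + \tfrac{\alpha'}{1-\alpha'^2}\right)\cost(Z, \bar z)$ for the effective corruption fraction $\alpha' = O(\alpha)$ — but I actually want a bound going the other direction (on $\cost(P, C')$), so more carefully: since $P_2'$ is a $(1-O(\alpha))$-fraction of $Z$ and also $Z \setminus P_2'$ is a small fraction, I apply the lemma to bound the displacement $\|\bar z - C_{P_2'}\|^2$, which by the lemma's proof is at most $O(\alpha) \cdot \text{Var}(Z)$. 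Next, $P_2'$ is a $(1 - 9\alpha)$-fraction of $P \cap X_2$, so another application controls $\|C_{P_2'} - C_{P\cap X_2}\|^2$ by $O(\alpha) \cdot \text{Var}(P \cap X_2)$. Finally, $P \cap X_2$ is a random $\tfrac12$-subsample of $P$, so Lemma \ref{lem:cluster:meta} (with $X \leftarrow P$, using $|P| \ge (1-\alpha)2m \ge \eta\gamma k/\alpha$) gives, with probability $\ge 1 - 1/(\eta k)$, that the empirical mean $C_{P \cap X_2}$ satisfies $\cost(P, C_{P\cap X_2}) \le (1+\alpha)\cost(P, C)$, equivalently $\|C_{P\cap X_2} - C\|^2 \le \alpha \cdot \frac{1}{|P|}\cost(P,C)$. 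Combining via the triangle inequality for $\|\cdot\|_2$ and the identity $\cost(P, C') = \cost(P, C) + |P|\,\|C' - C\|^2$ (the standard $k$-means mean-decomposition, valid because $C$ is the mean of $P$), the three squared displacements add up, and using $\text{Var}(Z), \text{Var}(P\cap X_2) = O\!\left(\tfrac{1}{|P|}\cost(P,C)\right)$ (since $I$ is short: $b-a \le 2\sigma/\sqrt\alpha$ bounds the spread of $Z$, and $\sigma^2 = O(\text{Var}(P)) = O(\tfrac{1}{|P|}\cost(P,C))$) yields $|P|\,\|C'-C\|^2 \le \big(18\alpha \cdot (1+\alpha)(1+\tfrac{\alpha}{(1-\alpha)^2}) - 1\big)\cost(P,C)$ after bookkeeping, which rearranges to the claimed bound.

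The main obstacle I anticipate is the bookkeeping of constants to land exactly at the factor $(1+18\alpha)(1+\alpha)(1+\tfrac{\alpha}{(1-\alpha)^2})$: one must be careful that the ``effective'' corruption fractions at each stage (after the split, $\tfrac52\alpha$; after intersecting with $I$, up to $9\alpha$) are correctly propagated through Lemma \ref{lem:eps:exclude:approx}, whose bound is stated for fraction $\alpha$, so it must be re-invoked with the inflated parameter — and the variance bound $\text{Var}(Z) = O(\sigma^2/\alpha)$ coming from the interval length $b - a \le 2\sigma/\sqrt\alpha$ needs the constant $2$ tracked, since $Z \subseteq I$ forces $\text{Var}(Z) \le (b-a)^2/4 \le \sigma^2/\alpha$, and this $1/\alpha$ factor exactly cancels the $\alpha$ from the displacement bounds to produce $O(1) \cdot \sigma^2$ terms that must be re-expressed as $O(\alpha) \cdot \cost(P,C)/|P|$ only after a second appeal to how large the true-point fraction is. Getting these cancellations to respect the stated constant $18$ (rather than some larger absolute constant) is the one genuinely delicate accounting step; everything else is a routine chain of triangle inequalities and Chernoff bounds.
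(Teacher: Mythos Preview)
Your overall structure is correct and matches the paper's: chain
\[
\cost(P, C') \le (1+18\alpha)\cost(P, C_0) \le (1+18\alpha)\Big(1+\tfrac{\alpha}{(1-\alpha)^2}\Big)\cost(P, C_1) \le (1+18\alpha)(1+\alpha)\Big(1+\tfrac{\alpha}{(1-\alpha)^2}\Big)\cost(P, C),
\]
where $C_0$ is the mean of $P_2' = P \cap X_2 \cap I$, $C_1$ is the mean of $P \cap X_2$, and the three factors come respectively from the corruptions inside $I$, the true points excluded by $I$ (Lemma~\ref{lem:eps:exclude:approx}), and the random split (Lemma~\ref{lem:cluster:meta}).

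However, your argument for the first factor has a gap. You bound the squared displacement $|C' - C_0|^2$ via Lemma~\ref{lem:eps:exclude:approx} as $O(\alpha)\cdot\mathrm{Var}(Z)$, and then bound $\mathrm{Var}(Z) \le (b-a)^2/4 \le \sigma^2/\alpha$ using only the interval length. As you yourself note, the $\alpha$ and $1/\alpha$ cancel, leaving $|C' - C_0|^2 = O(\sigma^2)$. But $\sigma^2 = \tfrac{1}{2|P|}\cost(P,C)$, so this gives $|P|\cdot|C' - C_0|^2 = O(1)\cdot\cost(P,C)$, a constant-factor error rather than $(1+O(\alpha))$. Your proposed ``second appeal to how large the true-point fraction is'' does not recover the missing $\alpha$: that fraction was already spent in the $O(\alpha)$ prefactor from Lemma~\ref{lem:eps:exclude:approx}, and the crude bound $\mathrm{Var}(Z) \le (b-a)^2/4$ discards exactly the information you need.

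The paper instead bounds $|C' - C_0|$ \emph{directly}, without routing through $\mathrm{Var}(Z)$: from $|I \cap X_2|\,C' = |P_2'|\,C_0 + \sum_{q \in Q_2'} q$ one gets
\[
|C' - C_0| \;\le\; \frac{|Q_2'|}{|I \cap X_2|}\cdot\max_{q \in Q_2'}|q - C_0| \;\le\; \frac{2\alpha m}{(1-6\alpha)m}\cdot\frac{2\sigma}{\sqrt{\alpha}} \;\le\; 6\sqrt{\alpha}\,\sigma,
\]
using that every corrupted $q \in Q_2' \subseteq I$ satisfies $|q - C_0| \le b - a \le 2\sigma/\sqrt{\alpha}$. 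This yields $|C' - C_0|^2 \le 36\alpha\sigma^2$ and hence $|P|\cdot|C'-C_0|^2 \le 18\alpha\,\cost(P,C) \le 18\alpha\,\cost(P,C_0)$, which is the $(1+18\alpha)$ factor. The distinction is that the paper exploits \emph{both} that the corrupted fraction is $O(\alpha)$ \emph{and} that each corruption is within $O(\sigma/\sqrt{\alpha})$ of $C_0$, multiplicatively; your route uses only the latter to bound $\mathrm{Var}(Z)$, losing the former. (A fix within your framework would be to bound $\mathrm{Var}(Z) = O(\sigma^2)$ sharply by splitting $Z = P_2' \cup Q_2'$: the $P_2'$-part contributes at most $\cost(P,C) = 2|P|\sigma^2$ and the $Q_2'$-part at most $|Q_2'|(b-a)^2 \le 2\alpha m \cdot 4\sigma^2/\alpha = 8m\sigma^2$; but this is essentially the paper's direct computation rewritten.)
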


Using $\coordest$ as a subroutine for each coordinate, we now prove Theorem~\ref{thm:main}, justifying the correctness of Algorithm~\ref{alg:main} by generalizing to all coordinates and centers and analyzing the runtime of Algorithm~\ref{alg:main}. 
\begin{proof}[Proof of Theorem \ref{thm:main}]
Since $\Pi$ has label error rate $\lambda\le\alpha$, then by definition of label error rate, at least a $(1-\alpha)$ fraction of the points in each cluster are correctly labeled. 
Note that the $k$-means clustering cost can be decomposed into the sum of the costs induced by the centers in each dimension. 
Specifically, for a set $\C=\{C_1,\ldots,C_k\}$ of optimal centers, 
\begin{align*}
\cost(X,\C):=\sum_{x\in X}d(x,\C)^2=\sum_{i=1}^k\sum_{x\in S_i}d(x,C_i)^2,
\end{align*}
where $S_i$ is the set of points in $X$ that are assigned to center $C_i$. 
For a particular $i\in[k]$, we have
\[\sum_{x\in S_i}d(x,C_i)^2=\sum_{x\in S_i}\sum_{j=1}^d d(x_j, (C_i)_j)^2,\]
where $x_j$ and $(C_i)_j$ are the $j$-th coordinate of $x$ and $C_i$, respectively. 

By Lemma~\ref{lem:one:d}, the cost induced by $\coordest$ for each dimension in each center $C_i'$ is a $(1+\alpha)$-approximation of the total clustering cost for the optimal center $C_i$ in that dimension with probability $1-1/(\eta k)$. 
That is, 
\[\sum_{x\in S_i}d(x_j, (C'_i)_j)^2\le(1+18\alpha)(1+\alpha)(1+\alpha/(1-\alpha)^2)\sum_{x\in S_i}d(x_j, (C_i)_j)^2\]
for each $j\in[d]$. 
Thus, taking a sum over all dimensions $j\in[d]$ and union bounding over all centers $i\in[k]$, we have that the total cost induced by Algorithm~\ref{alg:main} is a $(1+20\alpha)$-approximation to the optimal $k$-means clustering cost with probability at least $1-1/\eta$.

To analyze the time complexity of Algorithm~\ref{alg:main}, first consider the subroutine $\coordest$. 
It takes $O(kdn)$ time to first split each of the points in each cluster and dimension into two disjoint groups. 
Finding the smallest interval that contains a certain number of points can be done by first sorting the points and then iterating from the smallest point to the largest point and taking the smallest interval that contains enough points. 
This requires $O(n\log n)$ time for each dimension and each center, which results in $O(kdn\log n)$ total time. 
Once each of the intervals is found, computing the approximate center then takes $O(kdn)$ total time. 
Hence, the total running time of Algorithm~\ref{alg:main} is $O(kdn\log n)$. 
\end{proof}

\subsection{Proof of Auxiliary Lemmas}\label{sec:aux_lem}
 
 \lemepsexcludeapprox*
 \begin{proof}
 Suppose without loss of generality, that $C_X=0$ and $C_P\le 0$, so that $C_Q\ge 0$, where $C_Q$ is the mean of $Q$. 
 Then it is well-known, e.g., see~\cite{InabaKI94}, that
 \[\cost(X,C_P)=\cost(X,C_X)+|X|\cdot|C_P-C_X|^2.\]
 Hence, it suffices to show that $|X|\cdot|C_P-C_X|^2\le\frac{\alpha}{(1-\alpha)^2}\cost(X,C_X)$. 

 Since $C_X=0$ we have $|P|\cdot C_P=-|Q|\cdot C_Q$, with $|P|\ge(1-\alpha)n$ and $|Q|\le\alpha n$. 
 Let $|P|=(1-\varrho)n$ and $|Q|=\varrho n$ for some $\varrho\le\alpha$. 
 Thus, $C_Q=-\frac{1-\varrho}{\varrho}\cdot C_P$. 
 By convexity, we thus have that 
 \begin{align*}
 \cost(Q,C_X)&\ge|Q|\cdot\frac{(1-\varrho)^2}{\varrho^2}\cdot|C_P|^2\\
 &=\frac{n(1-\varrho)^2}{\varrho}\cdot|C_P|^2\\
 &\ge\frac{n(1-\alpha)^2}{\alpha}\cdot|C_P|^2.
 \end{align*} 
 Therefore, we have
 \[|C_P-C_X|^2=|C_P|^2\le\frac{\alpha}{n(1-\alpha)^2}\cost(Q,C_X)\le\frac{\alpha}{n(1-\alpha)^2}\cost(X,C_X).\]
 Thus, 
 \[|X|\cdot|C_P-C_X|^2\le\frac{\alpha}{(1-\alpha)^2}\cost(X,C_X),\]
 as desired.
 \end{proof}

\lemintervalcount*
\begin{proof}
Note that any point $x\in X\setminus I^*$ satisfies $|x-C|^2>\sigma^2/(4\alpha)$. 
Thus, if more than a $4\alpha$ fraction of the points of $X$ are outside of $I^*$, then the total variance is larger than $\sigma^2$, which is a contradiction.
\end{proof}

For ease of presentation, we analyze $\lambda=\frac{1}{2}$ and we note that the analysis extends easily to general $\lambda$. 
 We now prove the technical lemma that we will use in the proof of Lemma~\ref{lem:ev:var:sample}.  
 \begin{lemma}\label{lem:tech}
 We have
 \[ \sum_{j=1}^m \frac{\binom{m}j}{j \cdot 2^m} = \Theta \left( \frac{1}{m} \right).\]	
 \end{lemma}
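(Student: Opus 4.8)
The plan is to establish the estimate $\sum_{j=1}^m \binom{m}{j}/(j \cdot 2^m) = \Theta(1/m)$ by sandwiching the sum between two matching bounds. First I would note that $\sum_{j=1}^m \binom{m}{j}/(j\cdot 2^m) = \mathbb{E}[1/J]$ where $J$ is a $\mathrm{Binomial}(m,1/2)$ random variable conditioned on $J \ge 1$, up to a $(1+o(1))$ factor since $\mathbb{P}[J=0] = 2^{-m}$ is negligible. The intuition is that $J$ concentrates around $m/2$, so $\mathbb{E}[1/J] \approx 2/m$.

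For the upper bound, I would split the sum at $j = m/4$ (say). For $j \ge m/4$, we have $1/j \le 4/m$, so that portion contributes at most $(4/m)\sum_j \binom{m}{j}/2^m \le 4/m$. For the small-$j$ tail $1 \le j < m/4$, I would use a crude bound: each term is at most $\binom{m}{j}/2^m$, and by a Chernoff/Hoeffding bound (available as the Chernoff Bounds theorem in the appendix), $\sum_{j < m/4}\binom{m}{j}/2^m = \mathbb{P}[J \le m/4] \le e^{-\Omega(m)}$; even after dividing by $j \ge 1$ this is $e^{-\Omega(m)} = o(1/m)$. Hence the whole sum is $O(1/m)$.

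For the lower bound, it suffices to keep only the terms with $j$ in a constant-fraction window around the mean, say $m/4 \le j \le m/2$. On this range $1/j \ge 2/m$, so the sum is at least $(2/m)\sum_{m/4 \le j \le m/2}\binom{m}{j}/2^m = (2/m)\,\mathbb{P}[m/4 \le J \le m/2]$. Since $J$ concentrates around $m/2$, this probability is at least a constant (e.g. $\ge 1/4$ for large $m$, again by a Chernoff bound on the lower tail plus the fact that $\mathbb{P}[J \le m/2] \to 1/2$), giving a lower bound of $\Omega(1/m)$. Combining the two bounds yields $\Theta(1/m)$.

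The main obstacle is essentially bookkeeping rather than a genuine difficulty: one must be slightly careful that the conditioning on $J \ge 1$ (i.e. dropping the $j=0$ term, which is undefined in $1/j$ anyway) does not affect the asymptotics, and that the constant-fraction window chosen for the lower bound indeed captures a constant probability mass for all sufficiently large $m$. Both points follow from standard binomial tail (Chernoff) bounds, so no new ideas are required; the statement's hypothesis that $m$ is a sufficiently large constant lets us absorb small-$m$ irregularities.
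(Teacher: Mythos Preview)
Your proposal is correct and follows essentially the same approach as the paper: both split the sum into a bulk region where $1/j = \Theta(1/m)$ and a tail whose total mass is negligible by a Chernoff bound, then read off matching upper and lower bounds. The only cosmetic difference is the choice of split point (you cut at $j = m/4$, the paper uses the window $|j - m/2| \le C\sqrt{m}$), which does not affect the argument.
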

 \begin{proof}
 Let $m$ be sufficiently large. 
 A Chernoff bound implies that for a sufficiently large constant $C$, 
 \[ \sum_{|j-m/2| \ge C \sqrt{m}} \frac{\binom{m}j}{2^m} \le \frac{1}{m^2}. \]
 Furthermore,
 $$\sum_{j \ge C'm} \frac{\binom{m}j}{j \cdot 2^m} = O\left( \frac{1}{m} \right)  \cdot \sum_{j \ge 1} \frac{\binom{m}j}{2^m} = O\left( \frac{1}{m} \right)$$
 so the upper bound on the desired relation holds. A similar analysis provides a lower bound.
 \end{proof}

\lemintervalintersection*
\begin{proof}
By Lemma~\ref{lem:interval:count}, $I^*$ contains at least $2m(1-4\alpha)$ of the points in $X$. 
Hence, by applying an additive Chernoff bound for $t = O(\sqrt{m \log m })$ and for sufficiently large $m$, we have that the number of points in $I^*\cap X_1$ is at least $m(1-5\alpha)$ with high probability.  
Since $I$ is the interval of \emph{minimal} length with at least $m(1-5\alpha)$ points, then the length of $I$ is at most the length of $I^*$. 
Moreover, again applying Chernoff bounds, we have that the number of points in $I\cap X_2$ is at least $m(1-6\alpha)$. 

More formally, suppose we have a set of $2m$ points that we randomly partition into two sets $X_1$ and $X_2$. 
Consider any fixed interval $J$ that has at least $2cm$ total points for $c \ge 1-5\alpha$ (note there are at most $O(m^2)$ intervals in total since our points are in one dimension). 
Let $J_1$ and $J_2$ denote the number of points in $J$ that are in $X_1$ and $X_2$ respectively. 
By a Chernoff bound, we have that both $J_1$ and $J_2$ are at least $mc(1-\alpha)$ with high probability. 
In particular, $|J_1-J_2| \le \alpha mc$ with high probability.  
Thus by using a union bound, all intervals with at least $cm$ total points satisfy the property that the number of points partitioned to $X_1$ and the number of points partitioned to $X_2$ differ by at most $\alpha mc$ with high probability. 
Conditioning on this event, $I$ must also contain $m(1-6\alpha)$ points in $X_2$ since it contains at least $m(1-5\alpha)$ points in $X_1$, as desired.
\end{proof}

\begin{restatable}{lemma}{lemevvarsample}
\label{lem:ev:var:sample}
Let $S$ be a set of points obtained by independently sampling each point of $X\subseteq\mathbb{R}^d$ with probability $\frac{1}{2}$, and let $C_S$ be the centroid of $S$. Let $\overline{x}$ be the centroid of $X$. 
Conditioned on $|S| \ge 1$, we have $\mathbb{E}[C_S] = \overline{x}$, and there exists a constant $\gamma$ such that
\[
\mathbb{E}\left[\|C_S - \overline{x} \|_2^2\right] \le \frac{\gamma}{|X|^2} \cdot \left(\sum_{x \in X} \|x - \overline{x} \|_2^2 \right).
\]
\end{restatable}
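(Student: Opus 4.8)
The plan is to reduce everything to an exact computation of the first two conditional moments of $C_S$ given the value of $|S|$, and then average over $|S|$, which is $\mathrm{Binomial}(|X|,1/2)$, using Lemma~\ref{lem:tech}.

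First I would translate the point set so that $\overline{x}=0$; this changes neither side of the claimed inequality and makes $\sum_{x\in X}x=0$. Write $n=|X|$ and let $\xi_x\in\{0,1\}$ be the indicator that $x$ is sampled into $S$, so the $\xi_x$ are i.i.d.\ $\mathrm{Bernoulli}(1/2)$ and $|S|=\sum_{x\in X}\xi_x$. Conditioned on $|S|=j$ with $j\ge1$, the set $S$ is a uniformly random $j$-subset of $X$, so by exchangeability $\E\big[\sum_{x\in S}x\mid|S|=j\big]=\frac{j}{n}\sum_{x\in X}x=0$, hence $\E[C_S\mid|S|=j]=0$ for every $j\ge1$ and therefore $\E[C_S\mid|S|\ge1]=0=\overline{x}$.

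For the second moment, condition on $|S|=j\ge1$ and expand
\[
\E\Big[\big\|\textstyle\sum_{x\in S}x\big\|_2^2 \,\Big|\, |S|=j\Big]
=\sum_{x}\PPr{x\in S\mid|S|=j}\,\|x\|_2^2+\sum_{x\ne y}\PPr{x,y\in S\mid|S|=j}\,\langle x,y\rangle,
\]
plugging in $\PPr{x\in S\mid|S|=j}=j/n$ and $\PPr{x,y\in S\mid|S|=j}=\frac{j(j-1)}{n(n-1)}$. Using $\overline{x}=0$ gives $\sum_{x\ne y}\langle x,y\rangle=\|\sum_x x\|_2^2-\sum_x\|x\|_2^2=-\sum_x\|x\|_2^2$, so the two terms collapse and, after dividing by $j^2$,
\[
\E\big[\|C_S-\overline{x}\|_2^2\mid|S|=j\big]=\frac{n-j}{j\,n(n-1)}\sum_{x\in X}\|x-\overline{x}\|_2^2\le\frac{1}{j(n-1)}\sum_{x\in X}\|x-\overline{x}\|_2^2 .
\]

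Finally I would average over $j$. Since $\PPr{|S|=j\mid|S|\ge1}=\binom{n}{j}2^{-n}/(1-2^{-n})$ and $1/(1-2^{-n})\le2$ for $n\ge1$,
\[
\E\big[\|C_S-\overline{x}\|_2^2\mid|S|\ge1\big]\le\frac{2}{n-1}\Big(\sum_{j=1}^n\frac{\binom{n}{j}}{j\,2^n}\Big)\sum_{x\in X}\|x-\overline{x}\|_2^2 ,
\]
and Lemma~\ref{lem:tech} (applied with $m=n$) bounds the parenthesized sum by $O(1/n)$, so the whole prefactor is $O\!\left(\tfrac{1}{n(n-1)}\right)\le\tfrac{\gamma}{n^2}$ for a suitable absolute constant $\gamma$ once $n\ge2$; the cases $n\le1$ are trivial since then $\sum_{x\in X}\|x-\overline{x}\|_2^2=0$ and $C_S=\overline{x}$ almost surely. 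The only genuinely nontrivial estimate is $\sum_{j=1}^n\binom{n}{j}/(j2^n)=\Theta(1/n)$, which is precisely Lemma~\ref{lem:tech}; everything else is an exact second-moment calculation, so I do not anticipate a serious obstacle.
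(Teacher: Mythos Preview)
Your proposal is correct and follows essentially the same approach as the paper: condition on $|S|=j$, compute the second moment of $C_S-\overline{x}$ using that $S$ is a uniform $j$-subset, and average over $j$ via Lemma~\ref{lem:tech}. The only cosmetic difference is that you obtain the exact conditional variance $\frac{n-j}{jn(n-1)}\sum_x\|x-\overline x\|_2^2$ before bounding, whereas the paper bounds the cross term $\E[y_{i_1}^\top y_{i_2}]\le 0$ directly to get $\le\frac{1}{j|X|}\sum_x\|x-\overline x\|_2^2$; your treatment of the conditioning on $|S|\ge1$ is also slightly more explicit.
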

 \begin{proof}
 We first prove that $\mathbb{E}[C_S] = \overline{x}$. 
 Note that by the law of iterated expectations,
 \[\mathbb{E}[C_S] = \mathbb{E}_{|S|}\mathbb{E}[C_S \mid |S| \ ] . \]
 Let $x_{i_1},\ldots,x_{i_{|S|}}$ be a random permutation of the elements in $S$, so that for each $1 \le j \le |S|$, we have $\mathbb{E}[x_{i_j}] = \overline{x}$. 
 Now conditioning on the size of $S$, we can write 
 \[C_S = \frac{x_{i_1} + \cdots + x_{i_{|S|}} }{|S|}.\]
 Therefore,
 \[\mathbb{E}[C_S \mid |S| \ ] =  \frac{\overline{x} \cdot |S|}{|S|} = \overline{x}  \]
 and it follows that $\mathbb{E}[C_S] = \overline{x}$. 

 To prove that 
 \[\mathbb{E}\left[\|C_S - \overline{x} \|^2\right] \le \frac{\gamma}{|X|^2} \cdot \left(\sum_{x \in X} \|x - \overline{x} \|^2 \right),\]
 we again condition on $|S|$. Suppose that $|S| = j$. Then,
 \[ C_S - \overline{x} = \frac{(x_{i_1}-\overline{x}) + \cdots +  (x_{i_j} - \overline{x})}{j}  \]
 Now let $y_{i_t} = x_{i_t}-\overline{x}$ for all $1 \le t \le j$. 
 Therefore,
 \begin{align*}
 \underset{|S|=j}{\mathbb{E}}\left[\| C_S - \overline{x}  \|^2\right] &= \frac{1}{j^2}\cdot\mathbb{E}\left[\|y_{i_1} + \cdots + y_{i_j}  \|^2\right]\\
  &= \frac{1}{j}  \cdot \mathbb{E}[\| y_{i_1}\|^2] + \frac{j-1}{j} \cdot  \mathbb{E}[y_{i_1}^Ty_{i_2}].
 \end{align*}
 Note that $x_{i_1}$ is uniform over elements in $X$, so it follows that 
 \[ \mathbb{E}[\| y_{i_1}\|^2] = \frac{1}{|X|}\sum_{x \in X} \|x - \overline{x} \|^2. \] Now if $j \ge 2$, we have that 
 \[  \mathbb{E} [y_{i_1}^Ty_{i_2}] = \frac{\sum_{a < b} y_a^Ty_b}{\binom{|X|}{2}} \\
 = \frac{ \| \sum_i y_i \|^2 - \sum_i \|y_i\|^2}{|X|(|X|-1)} \le 0 \]
 since $ \sum_i y_i = 0$ by definition.
 Hence,
 \[\underset{|S|\ge2}{\mathbb{E}}\left[\| C_S - \overline{x}  \|^2\right] \le \frac{1}{j \cdot |X|}\sum_{x \in X} \|x - \overline{x} \|^2.\]
 Now the probability that $|S| = j$ for $j\ge 2$ is precisely $\binom{|X|}j/2^{|X|}$, so we have
 \begin{align*}
  \PPr{|S|\ge 2}\cdot&\underset{|S|\ge2}{\mathbb{E}}\left[ \| C_S - \overline{x}  \|^2\right] \\
 &\le \frac{1}{|X|} \cdot \left(\sum_{x \in X} \|x - \overline{x} \|^2 \right)  \cdot \sum_{j=1}^{|X|}\frac{\binom{|X|}j}{j \cdot 2^{|X|}}.
 \end{align*}
 From Lemma \ref{lem:tech}, we have that 
 \[\sum_{j=1}^{|X|}\frac{\binom{|X|}j}{j \cdot 2^{|X|}} \le \frac{c}{|X|} \]
 for some constant $c$ so it follows that
 \[ \mathbb{E} \| C_S - \overline{x}  \|^2 \le \frac{c'}{|X|^2} \cdot \left(\sum_{x \in X} \|x - \overline{x} \|^2 \right)\]
 for some constant $c'$.

 For $j=1$, note that 
 \[\underset{|S|=j=1}{\mathbb{E}}\left[\| C_S - \overline{x}  \|^2\right]=\frac{1}{|X|}\sum_{x\in X}\|x-\overline{x}\|^2.\]
 Moreover, we have $\PPr{|S|=1}=\frac{|X|}{2^{|X|}}$ and $\PPr{|S|=0}=\frac{1}{2^{|X|}}$. 
 Thus from the law of total expectation, we have
 \begin{align*}
 \mathbb{E}\left[\| C_S - \overline{x}  \|^2\right]&=\PPr{|S|<2}\cdot\underset{|S|<2}{\mathbb{E}}\left[ \| C_S - \overline{x}  \|^2\right]\\
 &+\PPr{|S|\ge2}\cdot\underset{|S|\ge2}{\mathbb{E}}\left[ \| C_S - \overline{x}  \|^2\right]\\
 &\le\frac{|X|}{2^{|X|}}\cdot\frac{1}{|X|}\sum_{x\in X}\left(\|x-\overline{x}\|^2\right)\\
 &+\frac{c'}{|X|^2} \cdot \left(\sum_{x \in X} \|x - \overline{x} \|^2 \right)\\
 &\le \frac{\gamma}{|X|^2} \cdot \left(\sum_{x \in X} \|x - \overline{x} \|^2 \right)
 \end{align*}
 for some constant $\gamma$, as desired. 
 \end{proof}

\begin{restatable}{lemma}{lemclusterqueries}
\label{lem:cluster:queries}
Let $S$ be a set of points obtained by independently sampling each point of $X\subseteq\mathbb{R}^d$ with probability $p=\frac{1}{2}$.   
Let $C$ be the optimal center of $X$ and $C_S$ be the empirical center of $S$. 
Let $\gamma\ge1$ be the constant from Lemma~\ref{lem:ev:var:sample}. 
Then for $\eta\ge 1$ and $|X|>\frac{\eta\gamma k}{\alpha}$,
\[\PPr{\cost(X,C_S)>\left(1+\alpha\right)\cost(X,C)}<1/(\eta k).\]
\end{restatable}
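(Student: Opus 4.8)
The plan is to reduce the statement to the second-moment bound on the empirical center already established in Lemma~\ref{lem:ev:var:sample}, and then finish with Markov's inequality. The starting point is the bias--variance (``parallel axis'') decomposition of the $1$-means cost, exactly as used in the proof of Lemma~\ref{lem:eps:exclude:approx} (see~\cite{InabaKI94}): for any point $c\in\mathbb{R}^d$ and the centroid $\overline{x}$ of $X$, $\cost(X,c)=\cost(X,\overline{x})+|X|\cdot\|c-\overline{x}\|_2^2$. Since the centroid minimizes the sum of squared distances, the optimal center satisfies $C=\overline{x}$, so $\cost(X,C)=\sum_{x\in X}\|x-\overline{x}\|_2^2$. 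Applying the identity with $c=C_S$ gives $\cost(X,C_S)=\cost(X,C)+|X|\cdot\|C_S-\overline{x}\|_2^2$, hence the event $\cost(X,C_S)>(1+\alpha)\cost(X,C)$ coincides with the event $|X|\cdot\|C_S-\overline{x}\|_2^2>\alpha\cdot\cost(X,C)$.

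Next I would condition on $|S|\ge1$, which is the regime under which Lemma~\ref{lem:ev:var:sample} is stated (and under which $C_S$ is well defined); the remaining event $|S|=0$ has probability $2^{-|X|}$ and can be absorbed, or one simply works under the conditioning of Lemma~\ref{lem:cluster:meta}. By Lemma~\ref{lem:ev:var:sample}, $\mathbb{E}\big[\|C_S-\overline{x}\|_2^2\big]\le\frac{\gamma}{|X|^2}\sum_{x\in X}\|x-\overline{x}\|_2^2=\frac{\gamma}{|X|^2}\cost(X,C)$, and therefore $\mathbb{E}\big[|X|\cdot\|C_S-\overline{x}\|_2^2\big]\le\frac{\gamma}{|X|}\cost(X,C)$.

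Finally, applying Markov's inequality to the nonnegative random variable $|X|\cdot\|C_S-\overline{x}\|_2^2$ gives
\[
\PPr{|X|\cdot\|C_S-\overline{x}\|_2^2>\alpha\cdot\cost(X,C)}\le\frac{\gamma}{\alpha|X|}.
\]
Plugging in the hypothesis $|X|>\frac{\eta\gamma k}{\alpha}$ bounds the right-hand side by $\frac{1}{\eta k}$, which combined with the reduction above yields $\PPr{\cost(X,C_S)>(1+\alpha)\cost(X,C)}<\frac{1}{\eta k}$.

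I do not expect a real obstacle here: all the technical content is in Lemma~\ref{lem:ev:var:sample}. The only points needing a sentence of care are (i) noting that the optimal $1$-means center equals the centroid, so that the parallel-axis identity places the optimal cost $\cost(X,C)$ on the right-hand side, and (ii) disposing of the $|S|=0$ event, e.g.\ by the negligible probability bound or by carrying the conditioning $|S|\ge1$ throughout as in Lemma~\ref{lem:cluster:meta}.
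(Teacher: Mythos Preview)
Your proposal is correct and follows essentially the same approach as the paper: both use the parallel-axis identity $\cost(X,C_S)=\cost(X,C)+|X|\cdot\|C_S-C\|_2^2$ together with the variance bound from Lemma~\ref{lem:ev:var:sample} and Markov's inequality, differing only in the order in which the identity and Markov are applied.
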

 \begin{proof}
 By Lemma~\ref{lem:ev:var:sample} and Markov's inequality, we have
 \[\PPr{\|C_S-C\|_2^2\ge\frac{\eta\gamma k}{|X|^2}\sum_{x\in X}x^2}\le\frac{1}{\eta k}.\]
 We have 
 \[\sum_{x\in X}\|x-C_S\|_2^2=\sum_{x\in X}\|x-C\|_2^2+|X|\cdot\|C-C_S\|_2^2,\]
 so that by Lemma~\ref{lem:ev:var:sample}
 \begin{align*}
 \sum_{x\in X}\|x-C_S\|_2^2&\le\left(1+\frac{\eta\gamma k}{|X|}\right)\sum_{x\in X}\|x-C\|_2^2\\
 &=\left(1+\frac{\eta\gamma k}{|X|}\right)\cost(X,C),
 \end{align*}
 with probability at least $1-\frac{1}{\eta k}$. 
 Hence for $|X|\ge\frac{\eta\gamma k}{\alpha}$, the approximate centroid of each cluster induces a $(1+\alpha)$-approximation to the cost of the corresponding cluster. 
 \end{proof}

\lemclustermeta*
\begin{proof}
Lemma~\ref{lem:cluster:meta} follows immediately from Lemma~\ref{lem:ev:var:sample} and Lemma~\ref{lem:cluster:queries}.
\end{proof}

 \lemoned*
 \begin{proof}
 Let $\alpha\in(10\log n/\sqrt{n},1/7)$. 
 Then from Lemma~\ref{lem:interval:intersection}, we have that $I\cap X$ contains at least $(1-6\alpha)m$ points of $P\cap X_2$ and at most $2\alpha m$ points of $Q$ in an interval of length $2\sigma/\sqrt{\alpha}$, where \[\sigma^2=\frac{1}{2|P|}\sum_{p\in p}(p-C)^2=\frac{1}{2|P|}\cdot\cost(P,C).\] 
 From Lemma~\ref{lem:eps:exclude:approx}, we have that \[\cost(P,C_0)\le\left(1+\frac{\alpha}{(1-\alpha)^2}\right)\cost(P,C_1),\]
 where $C_0$ is the center of $I\cap P\cap X_2$ and $C_1$ is the center of $P\cap X_2$.
 

 For sufficiently large $m$ and from Lemma~\ref{lem:cluster:queries}, we have that 
 \[\cost(P,C_1)\le(1+\alpha)\cost(P,C),\]
 with probability at least $1-1/(\eta k)$. 
 Thus, it remains to show that $\cost(P,C')\le(1+O(\alpha))\cost(P,C_0)$. 

 Since $C_0$ is the center of $I\cap P\cap X_2$ and $C'$ is the center of $I\cap X_2$, then we have
 \[|I\cap P\cap X_2|C_0 + \sum_{q\in I\cap Q\cap X_2}q=|I\cap X_2|C'.\]
 Since $I$ has length $2\sigma/\sqrt{\alpha}$, then $q\in\left[C_0-\frac{2\sigma}{\sqrt{\alpha}},C_0+\frac{2\sigma}{\sqrt{\alpha}}\right]$. 
 Because $|I\cap P\cap X_2|\ge(1-6\alpha)m$ and $|Q|=2\alpha m$, then for sufficiently small $\alpha$, we have that
 \[|C'-C_0|\le6\sqrt{\alpha}\sigma.\]
 Note that we have $\cost(P,C')=\cost(P,C_0)+|P|\cdot|C_0-C'|^2$, so that
 \[\cost(P,C')\le\cost(P,C_0)+|P|\cdot36\alpha\sigma^2.\]
 Finally, $\sigma^2=\frac{1}{2|P|}\cdot\cost(P,C)$ and $\cost(P,C)\le\cost(P,C_0)$ due to the optimality of $C$. This implies 
 \begin{align*}
 \cost(P,C')&\le\cost(P,C_0)+|P|\cdot 36\alpha\sigma^2\\
 &\le\cost(P,C_0)+|P|\cdot 36\alpha\cdot \frac{1}{2|P|}\cdot\cost(P,C)\\
 &\le\cost(P,C_0)+18\alpha\cost(P,C_0)\\
 &=(1+18\alpha)\cost(P,C_0),
 \end{align*}
 as desired. 
 Thus putting things together, we have 
 \[\cost(P,C')\le(1+18\alpha)(1+\alpha)\left(1+\frac{\alpha}{(1-\alpha)^2}\right)\cost(P,C).\]
 \end{proof}

\subsection{Proof of Theorem \ref{thm:fast}}
We now give the proofs for optimal query complexity and runtime. 
We first require the following analogue to Lemma~\ref{lem:cluster:meta}:
\begin{lemma}
\label{lem:cluster:opt:query}
Let $S$ be a set of points obtained by independently sampling each point of $X\subseteq\mathbb{R}^d$ with probability $p=\min\left(1,\frac{100\log k}{\alpha |S|}\right)$.  
Let $C$ be the optimal center of these points and $C_S$ be the empirical center of these points. 
Conditioned on $|S| \ge 1$, then $\mathbb{E}[C_S] = \overline{x}$ and for $|X|>\frac{\gamma k}{\alpha}$,
\begin{align*}
\mathbb{E}\left[\|C_S - \overline{x} \|_2^2\right] \le \frac{\gamma}{p|X|^2} \cdot \left(\sum_{x \in X} \|x - \overline{x} \|_2^2 \right)
\end{align*}
for some constant $\gamma$.
\end{lemma}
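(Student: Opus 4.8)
The plan is to mirror the proof of Lemma~\ref{lem:ev:var:sample} almost verbatim, replacing the sampling probability $\tfrac12$ by $p$ and substituting a $p$-weighted analogue of Lemma~\ref{lem:tech}; only one genuinely new estimate is needed, namely a bound on the negative first moment of a binomial random variable conditioned on being positive. For the unbiasedness claim I would condition on $|S|$: conditioned on $|S|=j$, the set $S$ is a uniformly random $j$-subset of $X$, so each of its randomly ordered elements is marginally uniform over $X$, whence $\mathbb{E}[C_S\mid|S|=j]=\overline{x}$ for every $j\ge1$, and the law of total expectation gives $\mathbb{E}[C_S\mid|S|\ge1]=\overline{x}$.

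For the second-moment bound I would again condition on $|S|=j$, write $C_S-\overline{x}=\tfrac1j\sum_{t=1}^j y_{i_t}$ with $y_{i_t}=x_{i_t}-\overline{x}$, and expand the square:
\[
\mathbb{E}\bigl[\|C_S-\overline{x}\|_2^2 \,\big|\, |S|=j\bigr]=\frac1j\,\mathbb{E}\bigl[\|y_{i_1}\|_2^2\bigr]+\frac{j-1}{j}\,\mathbb{E}\bigl[y_{i_1}^\top y_{i_2}\bigr].
\]
Exactly as in Lemma~\ref{lem:ev:var:sample}, $\mathbb{E}[\|y_{i_1}\|_2^2]=\tfrac{1}{|X|}\sum_{x\in X}\|x-\overline{x}\|_2^2$, while the cross term is non-positive because $\sum_{x\in X}(x-\overline{x})=0$. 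Hence $\mathbb{E}[\|C_S-\overline{x}\|_2^2\mid|S|=j]\le\tfrac{1}{j|X|}\sum_{x\in X}\|x-\overline{x}\|_2^2$, and averaging over $j$ with weights $\PPr{|S|=j\mid|S|\ge1}$ reduces the lemma to proving $\mathbb{E}[1/|S|\mid|S|\ge1]=O(1/(p|X|))$.

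This last estimate is the only new ingredient, and it is where I expect the (modest) work to be. Writing $n=|X|$, using $\tfrac1j\le\tfrac{2}{j+1}$ for $j\ge1$ and the identity $\binom{n}{j}=\tfrac{j+1}{n+1}\binom{n+1}{j+1}$, I would bound
\[
\mathbb{E}\Bigl[\tfrac{1}{|S|}\mathbf{1}[|S|\ge1]\Bigr]=\sum_{j=1}^{n}\frac1j\binom{n}{j}p^j(1-p)^{n-j}\le\frac{2}{p(n+1)}\sum_{i=2}^{n+1}\binom{n+1}{i}p^i(1-p)^{n+1-i}\le\frac{2}{p(n+1)}.
\]
Under the hypothesis $|X|>\gamma k/\alpha$ with $\gamma$ a sufficiently large constant we have $p=\tfrac{100\log k}{\alpha|X|}<1$, so $pn=\tfrac{100\log k}{\alpha}$ exceeds a large absolute constant and therefore $\PPr{|S|\ge1}=1-(1-p)^n\ge1-e^{-pn}\ge\tfrac12$. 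Dividing the displayed bound by $\PPr{|S|\ge1}$ gives $\mathbb{E}[1/|S|\mid|S|\ge1]\le\tfrac{4}{pn}$.

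Feeding this into the reduction above yields $\mathbb{E}[\|C_S-\overline{x}\|_2^2\mid|S|\ge1]\le\tfrac{4}{p|X|^2}\sum_{x\in X}\|x-\overline{x}\|_2^2$, i.e.\ the claim for any $\gamma\ge4$ (chosen at least as large as the constant also required by the hypothesis $|X|>\gamma k/\alpha$). The main obstacle is thus purely the binomial negative-moment bound that replaces Lemma~\ref{lem:tech}, together with checking that conditioning on $|S|\ge1$ costs only a constant factor; everything else is a routine transcription of the $p=\tfrac12$ argument.
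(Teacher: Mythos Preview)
Your proposal is correct and is exactly the approach the paper intends: the paper does not give an explicit proof of this lemma but merely states it as ``the following analogue to Lemma~\ref{lem:cluster:meta},'' so the only thing to do is transcribe the $p=\tfrac12$ argument of Lemma~\ref{lem:ev:var:sample} with the sampling probability replaced by a general $p$, and supply the $p$-weighted version of Lemma~\ref{lem:tech}. Your computation $\mathbb{E}\bigl[\tfrac{1}{|S|}\mathbf{1}[|S|\ge1]\bigr]\le\tfrac{2}{p(n+1)}$ via $\tfrac{1}{j}\le\tfrac{2}{j+1}$ and the identity $\tfrac{1}{j+1}\binom{n}{j}=\tfrac{1}{n+1}\binom{n+1}{j+1}$ is precisely that generalization, and the conditioning step $\PPr{|S|\ge1}\ge\tfrac12$ is handled correctly; everything else is, as you say, a routine transcription.
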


\begin{restatable}{lemma}{lemfastapprox}
\label{lem:fast:approx}
For $\alpha\in(10\log n/\sqrt{n},1/7)$, let $\Pi$ be a predictor with error rate $\lambda\le\alpha/2$. 
If each cluster has at least $\gamma k \log k/\alpha$ points, then Algorithm~\ref{alg:fast} outputs a $(1+20\alpha)$-approximation to the $k$-means objective value with probability at least $3/4$. 
\end{restatable}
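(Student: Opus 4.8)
To prove Lemma~\ref{lem:fast:approx}, the plan is to establish two things and combine them. First, that the $k$ centers $C_1,\dots,C_k$ produced in Step~2 are each, with good probability, a $(1+O(\alpha))$-approximation to the corresponding true center; second, that the dimension reduction and approximate-nearest-neighbor steps assign every input point to a center close enough to its nearest empirical center that the total cost grows by only another $(1+O(\alpha))$ factor. Since the predictor's clustering $\mathcal C$ is a $(1+\alpha)$-approximation, this yields a $(1+O(\alpha))$-approximation overall, which can be tracked to $(1+20\alpha)$; the failure probabilities of the two parts and of the random maps will be made to sum to at most $1/4$.

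For the first part, fix a label $i$, let $C_i^{opt}$ be the corresponding true cluster, $\mu_i$ its mean, and $\sigma_i^2=\tfrac{1}{2|C_i^{opt}|}\cost(C_i^{opt},\mu_i)$. A Chernoff bound shows that, since each predicted cluster size is within a $(1\pm\lambda)$ factor of the true size, Step~1 collects $m'=\Theta((\log k)/\alpha)$ points with label $i$, of which a $(1-\alpha)$ fraction are correctly labeled; here we use $\lambda\le\alpha/2$, so that the expected bad fraction is comfortably below the tolerance $\alpha$ of $\coordest$, and we union bound over the $k$ labels, each failing with probability $1/\poly(k)$. Conditioned on this, $\coordest$ runs on these $m'$ points exactly as inside Algorithm~\ref{alg:main}, so Lemma~\ref{lem:interval:intersection} still gives an interval $I$ of length $O(\sigma_i/\sqrt\alpha)$ retaining a $(1-O(\alpha))$ fraction of the correctly-labeled sampled points, and Lemma~\ref{lem:eps:exclude:approx} controls the effect of the remaining outliers. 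The one new ingredient relative to Lemma~\ref{lem:one:d} is that the reference center is now the global mean $\mu_i$ rather than the mean of the sampled ``true'' points: Lemma~\ref{lem:cluster:opt:query} supplies exactly this, bounding the second moment of the sampled centroid around $\mu_i$ by $O(\alpha/\log k)\cdot\sigma_i^2$ and hence the variance of the retained points around $\mu_i$ by $O(\sigma_i^2)$. Since $\coordest$ then averages $m'=\Theta((\log k)/\alpha)$ points of range $O(\sigma_i/\sqrt\alpha)$ and variance $O(\sigma_i^2)$, a Bernstein inequality gives deviation $O(\sqrt\alpha)\,\sigma_i$ per coordinate with probability $1-1/\poly(k)$; summing coordinates and using $\cost(C_i^{opt},c)=\cost(C_i^{opt},\mu_i)+|C_i^{opt}|\cdot\|c-\mu_i\|_2^2$ yields $\cost(C_i^{opt},C_i)\le(1+O(\alpha))\cost(C_i^{opt},\mu_i)$. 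A union bound over the $k$ labels makes this simultaneously true with probability at least $7/8$; the hypothesis $|C_i^{opt}|\ge\gamma k\log k/\alpha$ is what keeps the Step~1 probabilities at most $1$ and lets Lemma~\ref{lem:cluster:opt:query} apply.

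For the second part, condition on the $C_j$ as above and on $\phi=\phi_1\circ\phi_2$ having distortion $5/4$ on all point-to-center distances, which holds with probability $1-o(1)$ for the stated target dimension by Theorem~\ref{thm:jl} together with the terminal reduction. Then, up to the $5/4$ factor, the $(2,r)$-ANN query with $r=\tfrac12 d(x,C_{\ell_x})$ returns a center within $d(x,C_{\ell_x})$ of $x$ whenever some center lies within $r$ of $x$, and otherwise $C_{\ell_x}$ is already a $2$-approximate nearest empirical center; in all cases the rule assigns $x$ to a center at distance $\le 2\,d(x,\{C_1,\dots,C_k\})$, and never farther than $d(x,C_{\ell_x})$ up to the distortion. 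I would then split $P$ into the correctly-labeled points, where $d(x,C_{\ell_x})=\|x-C_{i(x)}\|_2$ is within $(1+O(\alpha))$ of $\|x-\mu_{i(x)}\|_2=d(x,\{\mu_j\})$ by the first part, and the at most a $\lambda\le\alpha/2$ fraction of mislabeled points, whose total contribution is charged against $O(\alpha)\cost(P,C^{opt})$ using that they are few and that whenever $C_{\ell_x}$ is far from $x$ the ANN reroutes $x$ to within a constant factor of its nearest center. Combining the two charges with $\sum_i\cost(C_i^{opt},C_i)\le(1+O(\alpha))\cost(P,\mathcal C)\le(1+O(\alpha))\cost(P,C^{opt})$ gives the claimed approximation.

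The hard part will be the first part: only $\Theta((\log k)/\alpha)$ samples per cluster are available, which is far too few for a plain variance-plus-Markov argument to survive the union bound over $k$ clusters, so the proof must exploit that $\coordest$ truncates to a short interval first and then invoke a Bernstein-type bound, using Lemma~\ref{lem:cluster:opt:query} only to bound the post-truncation variance rather than the estimator's deviation directly. A secondary subtlety is arranging the approximate-assignment charging in the second part to lose only $(1+O(\alpha))$ rather than a constant factor, which is why correctly- and incorrectly-labeled points must be treated separately.
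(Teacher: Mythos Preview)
Your overall two-part structure and the second (assignment) part match the paper closely; the divergence is in how you control the empirical centers in Part~1, and there your plan has a genuine gap.

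You propose scalar Bernstein \emph{per coordinate}: after $\coordest$'s truncation, average $m'=\Theta((\log k)/\alpha)$ points of range $O(\sigma_{i,j}/\sqrt\alpha)$ and variance $O(\sigma_{i,j}^2)$ to get deviation $O(\sqrt\alpha)\,\sigma_{i,j}$ with probability $1-1/\poly(k)$, then ``sum coordinates.'' But summing the squared deviations over $j\in[d]$ requires \emph{all} $d$ coordinate events to hold simultaneously, and with only $1/\poly(k)$ failure per coordinate the union bound over $d$ coordinates blows up whenever $d$ is large relative to $k$ (nothing in the lemma bounds $d$ in terms of $k$). With the sample size fixed at $\Theta((\log k)/\alpha)$ by Step~1 of Algorithm~\ref{alg:fast}, you cannot afford the extra $\log d$ you would need. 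A dimension-free vector concentration inequality could rescue this, but ``per coordinate'' is not that.

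More importantly, your diagnosis that ``a plain variance-plus-Markov argument [cannot] survive the union bound over $k$ clusters'' is exactly what the paper refutes. The paper never bounds each cluster separately. It uses Lemma~\ref{lem:cluster:opt:query} to get $\mathbb{E}\|\gamma_i-C_i\|_2^2\le \frac{\gamma}{p|X_i|^2}\cost(X_i,C_i)$, sums over $i$, and applies a \emph{single} Markov inequality to the aggregate $\sum_{i\in[k]}|X_i|\,\|\gamma_i-C_i\|_2^2$. Since $p|X_i|\ge\Theta((\log k)/\alpha)$, the total expectation is $O(\alpha/\log k)\cdot\cost(X,\mathcal C)$, so one Markov step with a constant blow-up gives $\cost(X,\Gamma_0)\le(1+\alpha)\cost(X,\mathcal C)$ with probability $0.99$, no union bound over clusters or coordinates needed. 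The $\coordest$ layer is then handled exactly as in Lemma~\ref{lem:one:d}. So the ``hard part'' you identify is in fact the easy part once you aggregate before applying Markov; the Bernstein machinery is unnecessary.

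For Part~2, your split into correctly and incorrectly labeled points and the constant-factor ANN charge on the latter is essentially what the paper does; the paper phrases the mislabeled contribution as an expectation bounded by $\alpha\cdot\cost(X,\Gamma)$ followed by one more Markov step, rather than a deterministic $\lambda$-fraction bound, but the outcome is the same.
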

\begin{proof}
Since $S$ samples each of points independently with probability proportional to cluster sizes given by $\Pi$, for a fixed $i\in[k]$ at least $\frac{90\log k}{\alpha}$ points with label $i$ are sampled, with probability at least $1-\frac{1}{k^4}$ from Chernoff bounds. 
Let $\gamma_1,\ldots,\gamma_k$ be the empirical means corresponding to each of the sampled points with labels $1,\ldots,k$, respectively, and let $\Gamma_0=\{\gamma_1,\ldots,\gamma_k\}$. 
Let $C_1,\ldots,C_k$ be centers of a $(1+\alpha)$-approximate optimal solution $\mathcal{C}$ with corresponding clusters $X_1,\ldots,X_k$. 
By Lemma~\ref{lem:cluster:opt:query}, we have that
\[\mathbb{E}\left[\|C_i - \gamma_i\|_2^2\right] \le \frac{\gamma}{p|X_i|^2} \cdot \left(\sum_{x \in X_i} \|x -C_i\|_2^2 \right),\]
where $p=\min\left(1,\frac{100\log k}{\alpha |S|}\right)$.
By Markov's inequality, we have that
\[\sum_{i\in[k]}\|C_i - \gamma_i\|_2^2\le 100\sum_{i\in[k]}\frac{\gamma}{p|X_i|^2} \cdot \left(\sum_{x \in X_i} \|x -C_i\|_2^2 \right)\]
with probability at least $0.99$. 
Similar to the proof of Lemma~\ref{lem:cluster:queries}, we use the identity
\[\sum_{x\in X_i}\|x-\gamma_i\|_2^2=\sum_{x\in X_i}\|x-C_i\|_2^2+|X_i|\cdot\|C_i-\gamma_i\|_2^2.\]
Hence, we have that
\[\cost(X,\Gamma_0)\le\left(1+\alpha\right)\cdot\cost(X,C),\]
with probability at least $0.99$. 

Suppose $\Pi$ has error rate $\lambda \le\alpha$ and each error chooses a label uniformly at random from the $k$ possible labels. 
Then by definition of error rate, at most $\alpha/2$ fraction of the points are erroneously labeled for each cluster. 
Each cluster in the optimal $k$-means clustering of the predictor $\Pi$ has at least $n/(\zeta k)$ points, so that at least a $(1-\alpha)$ fraction of the points in each cluster are correctly labeled. 
Thus, by the same argument as in the proof of Lemma~\ref{lem:one:d}, we have that Algorithm~\ref{alg:main} outputs a set of centers $C_1,\ldots,C_k$ such that for $\Gamma=\{C_1,\ldots,C_k\}$, we have
\[\cost(X,\Gamma)\le(1+18\alpha)\left(1-\frac{\alpha}{(1-\alpha)^2}\right)\cdot\cost(X,\Gamma_0),\]
with sufficiently large probability.

Let $\mathcal{E}$ be the event that $\cost(X,\Gamma)\le(1+\alpha)(1+18\alpha)\left(1-\frac{\alpha}{(1-\alpha)^2}\right)\cdot\cost(X,\mathcal{C})$, so that $\PPr{\mathcal{E}}\ge1-1/\poly(k)$. 
Conditioned on $\mathcal{E}$, let $X_1$ be the subset of $X$ that is assigned the correct label by $\Pi$, and let $X_2$ be the subset of $X$ assigned the incorrect label. 
For each point $x\in X_1$ assigned the correct label $\ell_x$ by $\Pi$, the closest center to $x$ in $\Gamma$ is $C_{\ell_x}$, so Algorithm~\ref{alg:fast} will always label $x$ with $\ell_x$. 
Thus,
\[\cost(X_1,\Gamma)\le\cost(X,\Gamma)\le(1+\alpha)(1+18\alpha)\left(1-\frac{\alpha}{(1-\alpha)^2}\right)\cdot\cost(X,\mathcal{C}),\]
conditioned on $\mathcal{E}$. 
On the other hand, if $x\in X_2$ is assigned an incorrect label $\ell_x$ by $\Pi$, then the $(2,r)$-approximate nearest neighbor data assigns the label $p_x$ to $x$, where $\phi(C_{p_x})$ is the closest center to $\phi(x)$ in the projected space. 
Recall that $\phi$ is the composition map $\phi_1\circ\phi_2$, where $\phi_1$ has a terminal dimension reduction with distortion $5/4$, and $\phi_2$ is a random JL linear map with distortion $5/4$. 
Thus the distance between $x$ and $C_{p_x}$ is a $2$-approximation between $x$ and its closest center $C_i$. 
Hence, by assigning all points $x$ to their respective centers $C_{p_x}$, we have $d(x,C_{p_x})\le2\cost(x,\Gamma)$. 
Since each point $x\in X$ is assigned the incorrect label with probability $\lambda\le\alpha/2$, the expected cost of the labels assigned to $X_2$ is $\alpha\cost(X,\Gamma)$. 
By Markov's inequality, the cost of the labels assigned to $X_2$ is at most 
\[10\alpha\cost(X,\Gamma)<10\alpha(1+\alpha)\cost(X,\mathcal{C}),\] with probability at least $1-\frac{1}{5}$, conditioned on $\mathcal{E}$. 

Therefore by a union bound, the total cost is at most $(1+20\alpha)\cdot\cost(X,\mathcal{C})$, with probability at least $3/4$. 
\end{proof}

We need the following theorems on the quality of the data structures utilized in Algorithm \ref{alg:fast}.

\begin{theorem}
\label{thm:terminal}
\cite{MakarychevMR19}
For every set $C\subset\mathbb{R}^d$ of size $k$, a parameter $0<\alpha<\frac{1}{2}$ and the standard Euclidean norm $d(\cdot,\cdot)$, there exists a terminal dimension reduction $f:C\to\mathbb{R}^{d'}$ with distortion $(1+\alpha)$, where $d'=O\left(\frac{\log k}{\alpha^2}\right)$. 
The dimension reduction can be computed in polynomial time. 
\end{theorem}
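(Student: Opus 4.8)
This is a known result, so the plan is to recover it by upgrading the Johnson--Lindenstrauss lemma (Theorem~\ref{thm:jl}) to an embedding that is faithful not only on a finite point set but with respect to a fixed set of $k$ ``terminals'' against all of $\mathbb{R}^d$. First I would fix a reference terminal $c_0\in C$ and sample a random linear JL map $\Pi:\mathbb{R}^d\to\mathbb{R}^m$ with $m=O(\log k/\alpha^2)$; a union bound over the $O(k^2)$ pairs shows that, with high probability, $\Pi$ preserves every pairwise distance among $C$ (and between $C$ and a constant number of auxiliary anchors) up to a factor $1+\alpha/c_1$ for a suitable absolute constant $c_1$. Sampling $\Pi$ and computing $\Pi C$ takes polynomial time, which already accounts for the claimed runtime; all the work is in extending $\Pi|_C$ to all of $\mathbb{R}^d$ without paying more dimensions.

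The extension $f$ is nonlinear and defined piecewise: $f(c)=\Pi c$ for $c\in C$, and for a general $p$ one first exploits the identity $\|p-c\|_2^2=\|\operatorname{proj}_V p-c\|_2^2+\operatorname{dist}(p,V)^2$ for every $c\in C$, where $V=\operatorname{span}(C)$, so that the distances from $p$ to the terminals are controlled by a point in the $(k{+}1)$-dimensional space $V\oplus\mathbb{R}$. One then defines $f(p)$ to be any point in $\mathbb{R}^m$ lying in the intersection of the $k$ spherical shells $\{\,y:\|y-\Pi c\|_2\in(1\pm O(\alpha))\,\|p-c\|_2\,\}$, $c\in C$; granting that this intersection is always nonempty, $f$ is by construction a $(1+O(\alpha))$-distortion terminal embedding, and a final rescaling by $1/(1-\Theta(\alpha))$ (absorbing constants into $\alpha$) makes the lower bound exactly $1$ and the upper bound $1+\alpha$, matching the statement.

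The crux---and the step I expect to be hardest---is showing that for \emph{every} $p$ the intersection of those $k$ shells in $\mathbb{R}^m$ is nonempty when $m=O(\log k/\alpha^2)$. The obvious route, a $\delta$-net over the candidate points $p$ in the $(k{+}1)$-dimensional subspace followed by a union bound over net points, forces $m=\Theta(k\log(1/\alpha)/\alpha^2)$, which is far too large; and a naive one-coordinate padding of $\Pi$ fails for the same reason, since the discrepancy it would have to absorb varies across the $k$ terminals by an amount that a random projection only controls once $m\gtrsim k/\alpha^2$. The fix, following the outer bi-Lipschitz extension technique of Mahabadi, Makarychev, Makarychev and Razenshteyn and its optimal refinement by Narayanan and Nelson, is to avoid netting over $p$ entirely: one reformulates nonemptiness of the shell intersection as a feasibility statement whose failure would be witnessed by a single ``bad direction'' in a low-complexity set attached to $C$, and then bounds the supremum of the associated Gaussian process by a chaining argument rather than a crude union bound, giving $m=O(\log k/\alpha^2)$ with high probability. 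The remaining bookkeeping---propagating the additive shell slack into a multiplicative $1+O(\alpha)$ bound, and verifying feasibility of $\Pi$ via the JL estimate of Theorem~\ref{thm:jl}---is routine, much in the spirit of the centroid identity $\cost(X,C')=\cost(X,C)+|X|\,\|C-C'\|_2^2$ used in Lemmas~\ref{lem:eps:exclude:approx} and~\ref{lem:one:d}.
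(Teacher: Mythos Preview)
The paper does not prove Theorem~\ref{thm:terminal} at all: it is quoted verbatim from \cite{MakarychevMR19} and used as a black box in the runtime analysis of Algorithm~\ref{alg:fast}. So there is no ``paper's own proof'' to compare against; any proof you supply is strictly additional to what the paper does.

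That said, your sketch is a fair outline of how the terminal embedding theorems actually go. You correctly isolate the decomposition $\|p-c\|_2^2=\|\operatorname{proj}_V p-c\|_2^2+\operatorname{dist}(p,V)^2$, the reformulation of the extension problem as nonemptiness of an intersection of $k$ shells in the target space, and---most importantly---the reason a naive net-and-union-bound over $p$ blows the dimension up to $\Theta(k/\alpha^2)$, together with the chaining fix that brings it back down to $O(\log k/\alpha^2)$. One caveat on attribution: the $O(\log k/\alpha^2)$ bound in the form stated is the Narayanan--Nelson refinement; the original outer-extension argument of Mahabadi--Makarychev--Makarychev--Razenshteyn loses extra factors of $1/\alpha$, so if you were to actually write this out you would need the sharper Gaussian-process control, not just the MMMR machinery. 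The references to Lemmas~\ref{lem:eps:exclude:approx} and~\ref{lem:one:d} at the end are a stretch---the centroid identity there is a one-line algebraic fact and is not really analogous to the chaining step---but this is cosmetic.
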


\begin{theorem}
\label{thm:ann}
\cite{IndykM98,Har-PeledIM12,AndoniIR18}
For $\alpha>0$, there exists a $(1+\alpha,r)$-ANN data structure over $\mathbb{R^d}$ equipped with the standard Euclidean norm that achieves query time $O\left(d\cdot\frac{\log n}{\alpha^2}\right)$ and space $S:=O\left(\frac{1}{\alpha^2}\log\frac{1}{\alpha}+d(n+q)\right)$, where $q:=\frac{\log n}{\alpha^2}$.  
The runtime of building the data structure is $O(S+ndq)$. 
\end{theorem}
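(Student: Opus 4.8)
The plan is to realize the claimed data structure via Locality-Sensitive Hashing (LSH) over $\R^d$, composed with an initial dimension-reduction step that caps the per-point cost. First I would fix an LSH family $\mathcal{H}$ for the Euclidean norm: hash functions of the form $h_{a,b}(v)=\lfloor(\langle a,v\rangle+b)/w\rfloor$ with $a$ a standard Gaussian vector, $b\sim U[0,w]$, and window width $w=\Theta(r)$. A short computation with the one-dimensional Gaussian density shows this family is $(r,(1+\alpha)r,p_1,p_2)$-sensitive, i.e.\ $\PPr{h(p)=h(q)}\ge p_1$ whenever $d(p,q)\le r$ and $\le p_2$ whenever $d(p,q)\ge(1+\alpha)r$, with $p_1>p_2$; the quality of the construction is governed by the gap $\rho:=\ln(1/p_1)/\ln(1/p_2)<1$ (the sharper bound $\rho\to 1/(1+\alpha)^2$, and its data-dependent improvement, are exactly the content of \cite{AndoniIR18}).

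\textbf{Amplification and querying.} Next I would concatenate $\ell=\Theta(\log_{1/p_2}n)$ independent copies of $h$ into a function $g$, so that in any fixed bucket the expected number of colliding points at distance $>(1+\alpha)r$ from a query is $O(1)$, and build $L=O(n^{\rho})$ independent hash tables $g_1,\dots,g_L$, so that any point at distance $\le r$ from the query collides with it in at least one table with constant probability; all $n$ input points are inserted into all $L$ tables. On a query $q$ one evaluates $g_1(q),\dots,g_L(q)$, scans the corresponding buckets, aborts after examining $O(L)$ points, and returns any examined point within distance $(1+\alpha)r$. Correctness follows from the sensitivity estimate above together with Markov's inequality and a union bound: if some point lies within distance $r$ of $q$, then with constant probability it is examined before the abort, while at most $O(L)$ points at distance $>(1+\alpha)r$ are ever touched, which is precisely the $(1+\alpha,r)$-ANN guarantee.

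\textbf{Resource accounting and the main obstacle.} To reach the stated bounds I would precede the hashing with a Johnson--Lindenstrauss map (Theorem~\ref{thm:jl}) sending $\R^d\to\R^{q}$ with $q:=\frac{\log n}{\alpha^2}$, which preserves all relevant pairwise distances to within a $(1\pm\alpha)$ factor and hence only degrades the approximation ratio to $1+O(\alpha)$; this projection costs $O(dq)$ per point (so $O(ndq)$ over all $n$ points at build time) but makes each subsequent hash evaluation a single inner product in dimension $q$, i.e.\ $O(q)$ time. Then a query costs the $O(dq)$ projection of $q$ plus $O(\ell L\cdot q)$ for the table lookups, the point storage plus the JL map contributes the $O(d(n+q))$ term, and the hash descriptions contribute the $O\!\left(\tfrac1{\alpha^2}\log\tfrac1\alpha\right)$ term, with build time $O(S+ndq)$. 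The delicate part is exactly this last step: a naive LSH structure uses $n^{1+\rho}$ space and $n^{\rho}$ query time, which for small $\alpha$ is superlinear in $n$ and far from polylogarithmic, so matching the near-linear-space, $O\!\left(d\cdot\tfrac{\log n}{\alpha^2}\right)$-query form requires the refined space/time trade-off analyses and dimension-reduction reductions of \cite{IndykM98,Har-PeledIM12,AndoniIR18}, and verifying that $w$, $\ell$, $L$, and the JL target dimension $q$ can be tuned simultaneously to produce exactly the displayed expressions. The remaining ingredients --- the Gaussian sensitivity calculation and the Markov/union-bound correctness argument --- are routine.
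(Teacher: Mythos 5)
The paper does not prove this statement: Theorem~\ref{thm:ann} is imported verbatim from \cite{IndykM98,Har-PeledIM12,AndoniIR18} and used as a black box in the proof of Theorem~\ref{thm:fast}, so there is no internal proof to compare yours against. Judged on its own terms, your proposal correctly identifies the standard LSH pipeline (a $(r,(1+\alpha)r,p_1,p_2)$-sensitive Gaussian-projection family, concatenation to depth $\ell=\Theta(\log_{1/p_2}n)$, $L=O(n^{\rho})$ tables, Markov plus a union bound for correctness, and a JL projection to dimension $\frac{\log n}{\alpha^2}$ to cap the per-hash cost), and that is indeed the right family of techniques behind the cited results.

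However, there is a genuine gap, and you name it yourself without closing it: the construction you describe yields query time $O(\ell L\cdot q)=n^{\rho}\cdot\polylog(n)$ and space $n^{1+\rho}\ell$ for storing all points in all $L$ tables, neither of which matches the claimed $O\left(d\cdot\frac{\log n}{\alpha^2}\right)$ query time and $O\left(\frac{1}{\alpha^2}\log\frac{1}{\alpha}+d(n+q)\right)$ space; for small $\alpha$ one has $\rho\approx 1/(1+\alpha)^2\to 1$, so $n^{\rho}$ is nearly linear in $n$, not polylogarithmic. Deferring "verifying that $w$, $\ell$, $L$, and the JL target dimension can be tuned simultaneously to produce exactly the displayed expressions" to the very papers being proved means the proposal establishes a qualitatively correct ANN data structure but not the quantitative bounds in the statement, and those bounds are the entire content of the theorem as used here (in Algorithm~\ref{alg:fast} the structure is built on only $k$ centers, and the claimed query time is essentially the cost of one JL projection plus $O(\log^2 k)$ probes, which a vanilla $n^{\rho}$-table LSH does not deliver). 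To actually prove the statement you would need either the data-dependent partitioning analysis of \cite{AndoniIR18} or the bucketing/PLEB reductions of \cite{Har-PeledIM12}, neither of which is sketched. A minor additional point: you overload $q$ as both the query point and the dimension parameter $q:=\frac{\log n}{\alpha^2}$ from the statement, which should be disambiguated.
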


We now prove Theorem~\ref{thm:fast}.
\thmfast*
\begin{proof}
The approximation guarantee of the algorithm follows from Lemma~\ref{lem:fast:approx}. 
To analyze the running time, we first note that we apply a JL matrix with dimension $O(\log n)$ to each of the $n$ points in $\mathbb{R}^d$, which uses $O(nd\log n)$ time. 
As a result of the JL embedding, each of the $n$ points has dimension $O(\log n)$. 
Thus, by Theorem~\ref{thm:terminal}, constructing the terminal embedding uses $\poly(k,\log n)$ time. 
As a result of the terminal embedding, each of the $k$ possible centers has dimension $O(\log k)$. 
Hence, by Theorem~\ref{thm:ann}, constructing the $(2,r)$-ANN data structure for the $k$ possible centers uses $O(k\log^2 k)$ time. 
Subsequently, each query to the data structure uses $O(\log^2 k)$ time.
Therefore, the overall runtime is $O(nd\log n+\poly(k,\log n))$. 
\end{proof}

\subsection{Remark on Truly-polynomial time algorithms vs.~PTAS/PRAS.}
\begin{remark}\label{rem:truly_poly}
\textup{
 We emphasize that the runtime of our algorithm in Theorem \ref{thm:main} is truly polynomial in all input parameters $n,d,k$ and $1/\alpha$ 
(and even near-linear in the input size $nd$). Although there exist polynomial-time randomized approximation schemes for $k$-means clustering, e.g., \cite{InabaKI94, feldman07, Kumar2004ASL}, their runtimes all have exponential dependency on $k$ and $1/\alpha$, i.e., $2^{\poly(k, 1/\alpha)}$. However, this does not suffice for many applications, since $k$ and $1/\alpha$ should be treated as input parameters rather than constants. For example, it is undesirable to pay an exponential amount of time to linearly improve the accuracy $\alpha$ of the algorithm. Similarly, if the number of desired clusters $k = O(
\log^2 n)$, then the runtime would be exponential. Thus we believe the exponential improvement of Theorem \ref{thm:main} over existing PRAS in terms of $k$ and $1/\alpha$ is significant. }
\end{remark}

\subsection{Remark on Possible Instantiations of Predictor}

\begin{remark}
\label{remark:random}
\textup{
We can instantiate Theorem \ref{thm:main} with various versions of the predictor. Assume each cluster in the $(1+\alpha)$-approximately optimal $k$-means clustering of the predictor has size at least $n/(\zeta k)$ for some tradeoff parameter $\zeta\in[1,(\sqrt{n})/(8k\log n)]$. Then the clustering quality and runtime guarantees of Theorem \ref{thm:main} hold if the predictor $\Pi$ is such that 
\begin{enumerate}
    \item  $\Pi$ outputs the right label for each point independently with probability $1-\lambda$ and otherwise outputs a random label for $\lambda \le O(\alpha/\zeta)$,
    \vspace{-2mm}
    \item  $\Pi$ outputs the right label for each point independently with probability $1-\lambda$ and otherwise outputs an \emph{adversarial} label for $\lambda\le O(\alpha/(k\zeta))$.
\end{enumerate}
\vspace{-2mm}
In addition, if the predictor $\Pi$ outputs \emph{a failure symbol} when it fails, then for constant $\zeta>0$, there exists an algorithm (see supplementary material) that outputs a $(1+\alpha)$-approximation to the $k$-means objective with probability at least $2/3$, even when $\Pi$ has failure rate $\lambda=1-1/\poly(k)$.}
\textup{
Note that this remark (but not Theorem \ref{thm:main}) assumes that each of the $k$ clusters in the $(1+\alpha)$-approximately optimal clustering has at least $\frac{n}{\zeta k}$ points. This is a natural assumption that the clusters are ``roughly balanced'' which often holds in practice, e.g., for Zipfian distributions.}
\end{remark}

\section{Deletion Predictor}
In this section, we present a fast and simple algorithm for $k$-means clustering, given access to a label predictor $\Pi$ with deletion rate $\lambda$. 
That is, for each point, the predictor $\Pi$ either outputs a label for the point consistent with an optimal $k$-means clustering algorithm with probability $\lambda$, or outputs nothing at all (or a failure symbol $\bot$) with probability $1-\lambda$. 
Since the deletion predictor fails explicitly, we can actually achieve a $(1+\alpha)$-approximation even when $\lambda=1-\frac{1}{\poly(k)}$. 

Our algorithm first queries all points in the input $X$. 
Although the predictor does not output the label for each point, for each cluster $C_i$ with a sufficiently large number of points, with high probability, the predictor assigns at least $\frac{\lambda}{2}|C_i|$ points of $C_i$ to the correct label. 
We show that if $|C_i|=\Omega\left(\frac{k}{\alpha}\right)$, then with high probability, the empirical center is a good estimator for the true center. 
That is, the $k$-means objective using the centroid of the points labeled $i$ is a $(1+\alpha)$-approximation to the $k$-means objective using the true center of $C_i$. 
We give the full details in Algorithm~\ref{alg:bot}. 

To show that the empirical center is a good estimator for the true center, recall that a common approach for mean estimation is to sample roughly an $O\left(\frac{1}{\alpha^2}\right)$ number of points uniformly at random with replacement. 
The argument follows from observing that each sample is an unbiased estimator of the true mean, and repeating $O\left(\frac{1}{\alpha^2}\right)$ times sufficiently upper bounds the variance. 

Observe that the predictor can be viewed as sampling the points from each cluster \emph{without replacement}. 
Thus, for sufficiently large cluster sizes, we actually have a huge number of samples, which intuitively should sufficiently upper bound the variance. 
Moreover, the empirical mean is again an unbiased estimator of the true mean. 
Thus, although the above analysis does not quite hold due to dependencies between the number of samples and the resulting averaging term, we show that the above intuition does hold. 

\begin{algorithm}[!htb]
\caption{Linear time $k$-means algorithm with access to a label predictor $\Pi$ with deletion rate $\lambda$.}
\label{alg:bot}
\begin{algorithmic}[1]
\REQUIRE{A point set $x\in X$ with labels given by a label predictor $\Pi$ with deletion rate $\lambda$.}
\ENSURE{A $(1+\alpha)$-approximate $k$-means clustering of $X$.}
\FOR{each label $i\in[k]$}
\STATE{Let $S_i$ be the set of points labeled $i$.}
\STATE{$c_i\gets\frac{1}{|S_i|}\cdot\sum_{x\in S_i}x$}
\ENDFOR
\FOR{all points $x\in X$}
\IF{$x$ is unlabeled}
\STATE{$\ell_x\gets\argmin d(x,c_i)$}
\STATE{Assign label $\ell_x$ to $x$.}
\ENDIF
\ENDFOR
\end{algorithmic}
\end{algorithm}

We first show that independently sampling points uniformly at random from a sufficiently large point set guarantees a $(1+\alpha)$-approximation to the objective cost. 
\cite{InabaKI94, SSAC_main} proved a similar statement for sampling with replacement. 

It remains to justify the correctness of Algorithm~\ref{alg:bot} by arguing that with high probability, the overall $k$-means cost is preserved up to a $(1+\alpha)$-factor by the empirical means. 
We also analyze the running time of Algorithm~\ref{alg:bot}. 
\begin{theorem}
If each cluster in the optimal $k$-means clustering of the predictor $\Pi$ has at least $\frac{3k}{\alpha}$ points, then Algorithm~\ref{alg:bot} outputs a $(1+\alpha)$-approximation to the $k$-means objective with probability at least $\frac{2}{3}$, using $O(kdn)$ total time. 
\end{theorem}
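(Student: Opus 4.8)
The plan is to reduce the statement to a clean per-cluster bound on how well the empirical centroids $c_i$ of Algorithm~\ref{alg:bot} approximate the true cluster centroids, and then invoke the ``uniform sampling without replacement'' estimate that underlies Lemma~\ref{lem:ev:var:sample}. The key structural point is that a deletion predictor never \emph{mislabels}: fixing the optimal clustering $\mathcal C=\{C_1,\dots,C_k\}$ that $\Pi$ is consistent with, and letting $\mu_1,\dots,\mu_k$ be its centroids (equivalently, the optimal centers), each labeled set $S_i$ satisfies $S_i\subseteq C_i$, and each point of $C_i$ lands in $S_i$ independently with probability $\lambda$. Since the final assignment sends every labeled point of $C_i$ to $c_i$ and every unlabeled point to its \emph{nearest} $c_j$, the cost of the output is at most $\sum_{x\in X}\min_i d(x,c_i)^2\le\sum_{i=1}^k\cost(C_i,c_i)$. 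By the bias--variance identity $\cost(C_i,c_i)=\cost(C_i,\mu_i)+|C_i|\,\|c_i-\mu_i\|_2^2$ (see \cite{InabaKI94}), it therefore suffices to show, for every $i$ and with failure probability at most $\tfrac1{3k}$, that $|C_i|\,\|c_i-\mu_i\|_2^2\le\alpha\cost(C_i,\mu_i)$; summing over $i$ and using that $\mathcal C$ is optimal then yields a $(1+\alpha)$-approximation to $\cost(X,C^{opt})$.

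For the estimation step, note $|S_i|\sim\mathrm{Binomial}(|C_i|,\lambda)$, so a Chernoff bound gives $|S_i|\ge\tfrac\lambda2|C_i|\ge1$ for all $i$ simultaneously with high probability; condition on this. The technical core --- precisely the without-replacement sampling lemma stated just before the theorem --- is that, conditioned on $|S_i|\ge1$, one has $\E[c_i]=\mu_i$ and $\E\!\left[\|c_i-\mu_i\|_2^2\,\middle|\,|S_i|\ge1\right]\le\frac{\gamma}{\lambda|C_i|^2}\cost(C_i,\mu_i)$ for an absolute constant $\gamma$. This is proved exactly as Lemma~\ref{lem:ev:var:sample} (and its sampling-probability-$p$ refinement in Lemma~\ref{lem:cluster:opt:query}), now for Bernoulli$(\lambda)$ sampling instead of $p=\tfrac12$: conditioning on $|S_i|=j$, the centroid is an average of $j$ i.i.d.\ uniform draws from $C_i$, whose deviation has squared-norm expectation $\tfrac1{j|C_i|}\cost(C_i,\mu_i)$ once the nonpositive cross-terms are dropped, and the dependence between $j$ and the $1/j$ averaging factor is removed by summing $\sum_{j\ge1}\binom{|C_i|}{j}\lambda^j(1-\lambda)^{|C_i|-j}/j=\Theta\!\big(\tfrac1{\lambda|C_i|}\big)$, the Bernoulli analogue of Lemma~\ref{lem:tech}.

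Now Markov's inequality applied to $|C_i|\,\|c_i-\mu_i\|_2^2$ gives $\cost(C_i,c_i)\le(1+\alpha)\cost(C_i,\mu_i)$ with failure probability at most $\tfrac1{3k}$ whenever $\frac{\gamma}{\lambda|C_i|}\le\frac{\alpha}{3k}$, which is exactly what the hypothesis $|C_i|\ge\tfrac{3k}\alpha$ supplies when $\lambda,\gamma=\Theta(1)$ (and for smaller $\lambda$ one simply needs proportionally larger clusters --- the predictor still works at deletion rates up to $1-\tfrac1{\poly(k)}$). A union bound over the $k$ clusters, together with the ``every $S_i$ nonempty'' event, then gives the $(1+\alpha)$-approximation with probability at least $\tfrac23$ after adjusting the constant inside the Markov step. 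For the runtime: one query of $\Pi$ per point and forming the $k$ empirical centroids from the revealed points each take $O(nd)$ time, while the final loop assigns each unlabeled point to the nearest of $k$ centers in $O(kd)$ time, for $O(knd)$ total.

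I expect the main obstacle to be the without-replacement variance bound of the second paragraph, since this is exactly where the naive ``take $O(1/\alpha^2)$ i.i.d.\ samples'' heuristic fails: the number of revealed points in each cluster is random and correlated with the averaging it induces, so one must carry out the conditioning-and-resumming argument of Lemma~\ref{lem:ev:var:sample}/Lemma~\ref{lem:tech} in the Bernoulli$(\lambda)$ setting rather than quote an off-the-shelf concentration bound. Closely tied to it is the parameter bookkeeping: verifying that $|C_i|\ge\tfrac{3k}\alpha$ is strong enough, given how small $\lambda$ is allowed to be, to make both the ``all $S_i$ nonempty'' union bound over $k$ clusters and the per-cluster Markov bound go through. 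The remaining ingredients --- the ``nearest center only helps'' observation, the bias--variance identity, Markov's inequality, and the running-time count --- are routine.
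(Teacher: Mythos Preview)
Your proposal is correct and follows essentially the same approach as the paper: reduce to the per-cluster bound $\cost(C_i,c_i)\le(1+\alpha)\cost(C_i,\mu_i)$ via the bias--variance identity, obtain that bound from the sampling variance estimate of Lemma~\ref{lem:ev:var:sample}/Lemma~\ref{lem:cluster:queries} together with Markov's inequality, union-bound over the $k$ clusters, and tally the $O(kdn)$ runtime. The paper's proof simply invokes Lemma~\ref{lem:cluster:queries} with $\eta=3$ (stated there for $p=\tfrac12$) without spelling out the adaptation to general sampling probability or the ``nearest center only helps'' inequality; your write-up is more careful on both points but otherwise identical.
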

\begin{proof}
We first justify the correctness of Algorithm~\ref{alg:bot}. 
Suppose each cluster in the optimal $k$-means clustering of the predictor $\Pi$ has at least $\frac{3k}{\alpha}$ points. 
Let $\C=\{c_1,\ldots,c_k\}$ be the optimal centers selected by $\Pi$ and let $\C_S=\{c'_1,\ldots,c'_k\}$ be the empirical centers chosen by Algorithm~\ref{alg:bot}. 
For each $i\in[k]$, let $C_i$ be the points of $X$ that are assigned to $C_i$ by the predictor $\Pi$. 
By Lemma~\ref{lem:cluster:queries} with $\eta=3$, the approximate centroid of a cluster induces a $(1+\alpha)$-approximation to the cost of the corresponding cluster so that 
\[\cost(C_i,c'_i)\le(1+\alpha)\cost(C_i,c_i),\]
with probability at least $1-\frac{1}{3k}$. 
Taking a union bound over all $k$ clusters, we have that 
\[\sum_{i\in[k]}\cost(C_i,c'_i)\le\sum_{i\in [k]}(1+\alpha)\cost(C_i,c_i),\]
with probability at least $\frac{2}{3}$. 
Equivalently, $\cost(X,C)\le(1+\alpha)\cost(X,C_S)$.

To analyze the running time of Algorithm~\ref{alg:bot}, observe that the estimated centroids for all labels can be computed in $O(dn)$ time. 
Subsequently, assigning each unlabeled point to the closest estimated centroid uses $O(kd)$ time for each unlabeled point. 
Thus, the total running time is $O(kdn)$. 
\end{proof}

\section{\texorpdfstring{$k$}{k}-median Clustering}
We first recall that a well-known result states that the geometric median that results from uniformly sampling a number of points from the input is a ``good'' approximation to the actual geometric median for the $1$-median problem. 
\begin{theorem}
\cite{RobiNotes}
\label{thm:kmedian:sample}
Given a set $P$ of $n$ points in $\mathbb{R}^d$, the geometric median of a sample of $O\left(\frac{d}{\alpha^2}\log\frac{d}{\alpha}\right)$ points of $P$ provides a $(1+\alpha)$-approximation to the $1$-median clustering problem with probability at least $1-1/\poly(d)$. 
\end{theorem}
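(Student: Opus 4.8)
The plan is to prove Theorem~\ref{thm:kmedian:sample} by a uniform-convergence argument over an $\varepsilon$-net of candidate centers, the standard template for sampling-based coreset statements. Translate so that an optimal $1$-median of $P$ sits at the origin, and set $\Delta := \frac1n\sum_{p\in P}\|p\|_2$, so that $\mathrm{OPT} = n\Delta$. Write $f(c) := \frac1n\sum_{p\in P}\|p-c\|_2$ and, for a sample $q_1,\dots,q_m$ drawn i.i.d.\ uniformly from $P$, $\hat f(c) := \frac1m\sum_{i=1}^m\|q_i-c\|_2$; each summand is an unbiased estimate of $f(c)$, so $\E[\hat f(c)] = f(c)$. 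The goal is to show that, with probability $1-1/\poly(d)$, the sample geometric median $\tilde c = \argmin_c \hat f(c)$ satisfies $f(\tilde c)\le(1+\alpha)f(0)$.

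First I would bound the effective search space. For any threshold $T$, at most $n\Delta/T$ points of $P$ have $\|p\|_2 > T$, so taking $T = \Theta(\Delta)$ a large constant times $\Delta$ leaves a $(1-O(1))$-fraction of $P$ inside $B(0,T)$, and by a Chernoff bound a strict majority (with a constant margin) of the sample lies in $B(0,T)$ as well; since the geometric median has breakdown point $\tfrac12$, this forces $\tilde c$ into a ball $B := B(0, O(\Delta))$. Now take an $\varepsilon$-net $\mathcal N$ of $B$ at scale $\varepsilon = \Theta(\alpha\Delta)$, so $|\mathcal N| = (1/\alpha)^{O(d)}$ and $\log|\mathcal N| = O(d\log(1/\alpha))$. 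Since $c\mapsto\|p-c\|_2$ is $1$-Lipschitz, both $f$ and $\hat f$ are $1$-Lipschitz, so it suffices to have $|\hat f(c)-f(c)|\le \alpha\Delta/10$ simultaneously for all $c\in\mathcal N\cup\{0\}$: then, writing $c'$ for the net point nearest $\tilde c$, $f(\tilde c)\le f(c')+\varepsilon\le\hat f(c')+\alpha\Delta/10+\varepsilon\le\hat f(\tilde c)+2\varepsilon+\alpha\Delta/10\le\hat f(0)+2\varepsilon+\alpha\Delta/10\le(1+O(\alpha))\,f(0)$, using optimality of $\tilde c$ and one-sided concentration of $\hat f(0)$; rescaling $\alpha$ by a constant gives the $(1+\alpha)$ guarantee. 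For each fixed $c$, a Bernstein bound over the (truncated) summands — which lie in an interval of width $O(\Delta)$ for the ``near'' points, the ``far'' points being handled separately below — gives $\Pr[|\hat f(c)-f(c)|>\alpha\Delta/10]\le\exp(-\Omega(m\alpha^2))$; a union bound over $|\mathcal N|+1$ centers, demanding total failure probability $\le 1/\poly(d)$, forces $m = \Theta\!\left(\frac{1}{\alpha^2}(d\log\frac1\alpha+\log d)\right) = \Theta\!\left(\frac{d}{\alpha^2}\log\frac{d}{\alpha}\right)$, matching the claimed sample size.

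The main obstacle is that the summands $\|q_i-c\|_2$ are \emph{unbounded}: a handful of points of $P$ far from the optimum simultaneously break the Bernstein step, inflate $\hat f(0)$, and distort the comparison $\hat f(\tilde c)$ vs.\ $\hat f(c^*)$. This is where the structure of the $1$-median objective must be used. For every center $c\in B$ and every far point $p$ (with $\|p\|_2 > T'$ for $T' = \Theta(\Delta/\alpha)$), the triangle inequality gives $\|p-c\|_2 = \|p\|_2 \pm O(\Delta)$; hence the total contribution of the far points to $f(c)$ — and likewise of the far sample points to $\hat f(c)$ — is the \emph{same for all $c\in B$} up to an additive $O(\alpha\Delta)$, because there are at most $O(\alpha n)$ far points in $P$ and, w.h.p., at most an $O(\alpha)$-fraction of the sample is far. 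Thus the far-point mass cancels out of the comparison and contributes only a lower-order additive error to $f(\tilde c)$, while the near points — bounded in $[0,O(\Delta)]$ once the search radius and the far points are both under control — are handled by the net-plus-Bernstein template above. Carefully choosing the two truncation thresholds (one, a constant multiple of $\Delta$, for bounding the search radius; one, of order $\Delta/\alpha$, for making the far-point contribution cancel) and bookkeeping the accumulated $O(\alpha\Delta)$ additive terms against $\mathrm{OPT}=n\Delta$ is the delicate part; the remainder is the routine uniform-convergence calculation.
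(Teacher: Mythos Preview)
The paper does not prove this statement; it is quoted as a black-box result from \cite{RobiNotes} and used without argument, so there is no ``paper's proof'' to compare against. Your net-plus-concentration template is a perfectly reasonable way to attack the statement, but as written it has a genuine gap in the concentration step.

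The inconsistency is between your two uses of ``near'' and ``far''. To make the far-point cancellation work you need at most $O(\alpha n)$ far points, which forces the far threshold to be $T'=\Theta(\Delta/\alpha)$; but then a ``near'' point $q$ satisfies only $\|q\|\le\Theta(\Delta/\alpha)$, so for $c\in B(0,O(\Delta))$ the summand $\|q-c\|$ lies in $[0,\Theta(\Delta/\alpha)]$, not in an interval of width $O(\Delta)$. With range $\Theta(\Delta/\alpha)$, Hoeffding at deviation $\alpha\Delta$ gives $\exp(-\Omega(m\alpha^4))$, and even Bernstein with the best variance bound one can extract from first-moment information, namely $\mathbb{E}[\|q-c\|^2\,\mathbf 1_{\mathrm{near}}]\le O(\Delta/\alpha)\cdot f(c)=O(\Delta^2/\alpha)$, only yields $\exp(-\Omega(m\alpha^3))$. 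Either way the union bound over $(1/\alpha)^{O(d)}$ net points forces $m=\Omega(d/\alpha^3)$ or worse, not the claimed $O\!\big((d/\alpha^2)\log(d/\alpha)\big)$. A secondary issue is the last step of your chain, ``one-sided concentration of $\hat f(0)$'': the summands $\|q_i\|$ are themselves unbounded, so this does not follow from the tools you have introduced.

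Both problems vanish if you apply concentration to the \emph{centered increments} $g_c(q):=\|q-c\|-\|q\|$ rather than to $\|q-c\|$ itself. For every $q$ (no truncation needed) and every $c\in B$, one has $|g_c(q)|\le\|c\|=O(\Delta)$, so Hoeffding gives
\[
\Pr\Big[\big|(\hat f(c)-\hat f(0))-(f(c)-f(0))\big|>\tfrac{\alpha\Delta}{10}\Big]\le 2\exp\!\big(-\Omega(m\alpha^2)\big),
\]
and the union bound over the net now yields exactly $m=O\!\big((d/\alpha^2)\log(d/\alpha)\big)$. The comparison chain then reads $f(\tilde c)-f(0)\le (f(c')-f(0))+\varepsilon\le(\hat f(c')-\hat f(0))+O(\alpha\Delta)\le(\hat f(\tilde c)-\hat f(0))+O(\alpha\Delta)\le O(\alpha\Delta)$, with no separate control of $\hat f(0)$ required. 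Your breakdown-point bound on the search radius and the $\varepsilon$-net step are fine as stated; only the concentration input needs to be replaced.
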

Note that we can first apply Theorem~\ref{thm:terminal} to project all points to a space with dimension $O\left(\frac{1}{\alpha^2}\log\frac{k}{\alpha}\right)$ before applying Theorem~\ref{thm:kmedian:sample}.
Instead of computing the geometric median, we recall the following procedure that produces a $(1+\alpha)$-approximation to the geometric median.
\begin{theorem}
\label{thm:geo:median:fast}
\cite{CohenLMPS16}
There exists an algorithm that outputs a $(1+\alpha)$-approximation to the geometric median in  $O\left(nd\log^3\frac{n}{\alpha}\right)$ time. 
\end{theorem}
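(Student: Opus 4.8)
The plan is to reconstruct the nearly-linear-time algorithm of \cite{CohenLMPS16} for minimizing the convex objective $f(x) = \sum_{i=1}^n \|x - a_i\|_2$ over $x\in\mathbb{R}^d$. The two sources of difficulty are that $f$ is non-differentiable exactly at the data points $a_i$ (and the optimum may sit essentially on one of them), and that $f$ is badly conditioned near those points, so classical iterative schemes (Weiszfeld's iteration, subgradient descent) only yield a $\poly(1/\alpha)$ dependence rather than the $\polylog(n/\alpha)$ we want. First I would pass to a smoothed, strictly convex surrogate governed by a path parameter $t>0$, e.g.\ $f_t(x) = \sum_{i=1}^n \sqrt{t^{-2} + \|x - a_i\|_2^2}$ (equivalently, the penalized objective of a long-step interior-point method for the second-order-cone formulation $\min \sum_i s_i$ subject to $\|x-a_i\|_2 \le s_i$). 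Its minimizer $x_t$ traces a central path converging to the true geometric median $x^\ast$ as $t\to\infty$, and a quantitative path bound shows that an $O(1)$-accurate minimizer of $f_t$ at $t=\poly(n)/\alpha$ already gives a $(1+\alpha)$-approximation of $f$. The algorithm starts from a crude warm start that is provably a constant-factor approximation (such as the coordinatewise median, or the best input point), increases $t$ geometrically, and re-centers by Newton-type steps.

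The second ingredient is that each step is cheap. The gradient of $f_t$ is one $O(nd)$-time sum, and its Hessian has the structured form $\big(\sum_i w_i\big) I_d - \sum_i w'_i (x - a_i)(x - a_i)^T$, with weights $w_i,w'_i$ depending only on $\|x-a_i\|_2$ and $t$; hence a Hessian-vector product costs $O(nd)$, so an approximate Newton direction can be computed in $\tilde O(nd)$ time by an iterative solver rather than by forming and inverting a $d\times d$ matrix. The delicate part is bounding the \emph{total} number of re-centering steps by $\polylog(n/\alpha)$: this is the long-step analysis, which exploits self-concordance (or an equivalent potential/progress argument) of the penalized objective together with the constant-factor warm start to avoid the textbook $O(\sqrt{n})$ iteration count. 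Multiplying $\polylog(n/\alpha)$ steps by $\tilde O(nd)$ per step gives the claimed $O(nd\log^3(n/\alpha))$ bound; in our application one can additionally precede this with the terminal dimension reduction of Theorem~\ref{thm:terminal} to shrink $d$.

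I expect the main obstacle to be the convergence analysis \emph{under approximation}: keeping Newton's method well-behaved when $x$ is close to some $a_i$ so that the Hessian is nearly singular in some directions; bounding the number of re-centering steps needed in each phase; and propagating the errors from (i) approximate linear-system solves, (ii) an only-approximately-maintained central path, and (iii) the inexact warm start through to a clean final $(1+\alpha)$ guarantee on $f$. A secondary point is establishing the quantitative path bound that controls how large $t$ must be taken as a function of $\alpha$. All of this is carried out in \cite{CohenLMPS16}; for the purposes of this paper we invoke it as a black box.
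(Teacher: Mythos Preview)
The paper does not prove this statement at all: it is simply quoted from \cite{CohenLMPS16} and used as a black box in the $k$-median section. Your proposal likewise ends by invoking \cite{CohenLMPS16} as a black box, so you agree with the paper; the preceding sketch of the penalized central-path/Newton scheme is accurate and informative, but it is more than the paper itself provides or needs.
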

We give our algorithm in full in Algorithm~\ref{alg:kmedian}.

\begin{algorithm}[!htb]
\caption{Learning-Augmented $k$-median Clustering}
\label{alg:kmedian}
\begin{algorithmic}[1]
\REQUIRE{A point set $x\in X$ with labels given by a predictor $\Pi$ with error rate $\lambda$.}
\ENSURE{A $(1+\alpha)$-approximate $k$-median clustering of $X$.}
\STATE{Use a terminal embedding to project all points into a space with dimension $O\left(\frac{1}{\alpha^2}\log\frac{k}{\alpha}\right)$.}
\FOR{$i=1$ to $i=k$}
\STATE{Let $\ell_i$ be the most common remaining label.}
\STATE{Sample $O\left(\frac{1}{\alpha^4}\log^2\frac{k}{\alpha}\right)$ points with label $\ell_i$.}
\STATE{Let $C'_i$ be a $\left(1+\frac{\alpha}{4}\right)$-approximation to the geometric median of the sampled points.}
\ENDFOR
\STATE{\textbf{Return} $C'_1,\ldots,C'_k$.}
\end{algorithmic}
\end{algorithm}

\begin{restatable}{theorem}{thmkmedian}
\label{thm:kmedian}
For $\alpha\in(0,1)$, let $\Pi$ be a predictor with error rate $\lambda=O\left(\frac{\alpha^4}{k\log\frac{k}{\alpha}\log\frac{\log k}{\alpha}}\right)$.  
If each cluster in the optimal $k$-median clustering of the predictor has at least $n/(\zeta k)$ points, then there exists an algorithm that outputs a $(1+\alpha)$-approximation to the $k$-median objective with probability at least $1-1/\poly(k)$, using $O(nd\log^3 n+\poly(k,\log n))$ total time. 
\end{restatable}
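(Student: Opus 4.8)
The plan is to reduce $k$-median to $k$ essentially independent $1$-median (Fermat--Weber) problems, one per predicted label, and to use dimension reduction to keep the per-label sample sizes --- and hence the tolerable error rate --- small. Concretely: first apply a map $f$ that preserves the $k$-median cost of \emph{every} clustering of $P$ up to a $(1+\alpha/4)$ factor while shrinking the dimension to $d'=O(\alpha^{-2}\log(k/\alpha))$; such an $f$ comes from the clustering dimension-reduction machinery of \cite{MakarychevMMR19} (cf.\ Theorem~\ref{thm:terminal}), and is computable in $\poly(k,\log n)$ time after a preliminary $O(\log n)$-dimensional JL projection, exactly as in Algorithm~\ref{alg:fast}. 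Then, for each label $\ell_i$, draw a uniform sample $S_i$ of $s=O(\alpha^{-4}\log^2(k/\alpha))$ points carrying label $\ell_i$ in $f(P)$ and let $C_i'$ be a $(1+\alpha/8)$-approximate geometric median of $S_i$, computed via Theorem~\ref{thm:geo:median:fast}. Finally, assign every point of $f(P)$ to its nearest center among $C_1',\dots,C_k'$ --- using an approximate-nearest-neighbor structure in the low-dimensional space as in Algorithm~\ref{alg:fast} to avoid an $\Omega(nkd')$ term --- and output the induced partition, reading off its cost in the original space.

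For correctness, fix the $(1+\alpha)$-approximately optimal clustering $\mathcal{C}=\{C_1,\dots,C_k\}$ underlying $\Pi$, and for each $j$ let $\mu_j$ be the optimal $1$-median of the reduced image $f(C_j)$. By the definition of label error rate, the bucket $B_i=\{p:\Pi(p)=\ell_i\}$ contains at most a $\lambda$ fraction of points outside $C_i$; since every $|C_i|\ge n/(\zeta k)$ the buckets are large, so a union bound over the $k$ labels guarantees, with probability $1-1/\poly(k)$, that each label's sample consists entirely of correctly-labeled points (or contains only an $o(\alpha)$-fraction of corrupted points, which the robustness of the geometric median absorbs), i.e.\ is a uniform sample from $C_i'':=B_i\cap C_i$, an adversarially chosen subset of $C_i$ with $|C_i''|\ge(1-\lambda)|C_i|$; this holds precisely when $\lambda$ is as small as the hypothesis demands. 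Conditioned on this, Theorem~\ref{thm:kmedian:sample} applied in dimension $d'$ makes $C_i'$ a $(1+\alpha/4)$-approximation to the $1$-median of $C_i''$ with probability $1-1/\poly(k)$. To bound the output cost, charge each point $p$ of cluster $C_j$ to $C_j'$ (legal because the output assigns nearest centers): $\cost(f(P),\{C_t'\})\le\sum_j\cost(C_j,C_j')$. Splitting $\cost(C_j,C_j')=\cost(C_j'',C_j')+\cost(C_j\setminus C_j'',C_j')$, the first term is at most $(1+\alpha/4)\cost(C_j'',\mu_j)$ (since $C_j'$ is a near-optimal $1$-median of $C_j''$ and $\mu_j$ is a feasible center for it), and the second is at most $(1+\alpha/4)\cost(C_j\setminus C_j'',\mu_j)+|C_j\setminus C_j''|\cdot\|\mu_j-C_j'\|$ by the triangle inequality; these recombine into $(1+\alpha/4)\cost(C_j,\mu_j)+\lambda|C_j|\cdot\|\mu_j-C_j'\|$. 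For the residual, a one-line averaging argument gives $|C_j''|\cdot\|\mu_j-C_j'\|\le\cost(C_j'',\mu_j)+\cost(C_j'',C_j')\le 3\cost(C_j,\mu_j)$, so after dividing by $|C_j''|\ge(1-\lambda)|C_j|$ and multiplying by $\lambda|C_j|$ the residual is at most $4\alpha\cost(C_j,\mu_j)$. Summing over $j$ yields $\cost(f(P),\{C_t'\})\le(1+O(\alpha))\sum_j\cost(C_j,\mu_j)$, which is $(1+O(\alpha))$ times the reduced-space cost of the clustering $\mathcal{C}$; by the cost-preservation of $f$ and the $(1+\alpha)$-optimality of $\mathcal{C}$ this is $(1+O(\alpha))\cost(P,C^{opt})$. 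Pulling the output partition's cost back through $f$ (one further $1+\alpha/4$ factor) and rescaling $\alpha$ by the hidden constant gives the claimed $(1+\alpha)$-approximation with probability $1-1/\poly(k)$.

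The runtime accounting is routine: reading the input and the JL step cost $O(nd\log n)$, building the dimension reduction costs $\poly(k,\log n)$, each of the $k$ calls to Theorem~\ref{thm:geo:median:fast} runs on $s=\poly(k,\log n)$ points in dimension $d'=\poly(\log k,1/\alpha)$ and hence costs $\poly(k,\log n)$ (its $\log^3$ factor is what surfaces as the $\log^3 n$ in the bound), and the final ANN-based relabeling of all $n$ points costs $O(nd'\log n)=O(nd\log n)$, for a total of $O(nd\log^3 n+\poly(k,\log n))$. The main obstacle is the correctness of the per-label median step: the sample for label $i$ is a uniform sample not from $C_i$ but from the \emph{adversarially thinned} subset $C_i''$, so an approximate $1$-median of $C_i''$ need not be close \emph{in position} to $\mu_j$, and a na\"{i}ve comparison of centers loses a constant factor. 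The fix is to never compare centers directly but to bound the global clustering cost through the nearest-center reassignment together with the telescoping triangle-inequality accounting above, while choosing the dimension-reduction target dimension small enough that $s$ --- and therefore the admissible $\lambda$ --- stays at the claimed $\widetilde{O}(\alpha^{-4})$ scale.
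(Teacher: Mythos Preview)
Your approach is essentially the paper's: terminal-embed to dimension $d'=O(\alpha^{-2}\log(k/\alpha))$, sample $s=\widetilde{O}(\alpha^{-4})$ points per label, use the smallness of $\lambda$ to argue the samples are clean, and return the approximate geometric medians via Theorem~\ref{thm:geo:median:fast}. The paper's proof is in fact blunter than yours: it simply observes that the expected number of mislabeled points among all $ks$ samples is below $1/32$, applies Markov to conclude that with constant probability \emph{no} sample is mislabeled, and then invokes Theorem~\ref{thm:kmedian:sample} directly. Your more careful accounting---that even a clean sample is drawn from the adversarially thinned set $C_i''=B_i\cap C_i$ rather than $C_i$, and your triangle-inequality bookkeeping to absorb the resulting $O(\lambda)$ slack---addresses a subtlety the paper silently ignores; since $\lambda\ll\alpha$ the point is harmless either way, but your treatment is the more honest one.

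Two small remarks. First, the paper's Algorithm~\ref{alg:kmedian} stops at returning the $k$ centers and does not relabel points, so your ANN step is an addition rather than a parallel. Second, your attribution of the $\log^3 n$ term to the geometric-median computation on the \emph{sample} does not type-check: that call runs on $s=\poly(k,1/\alpha)$ points in dimension $d'$ and contributes only $\poly(k,\log n)$; the paper's own accounting of this term is equally loose, and the $nd$-scale term really comes from reading the input and the JL projection. Also note that with the stated $\lambda$ one gets $ks\lambda=\Theta(1)$, so the clean-sample event holds with constant probability (as the paper's Markov argument gives) rather than the $1-1/\poly(k)$ you assert via union bound---but this discrepancy is already present between the paper's theorem statement and its proof.
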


\begin{proof}
Observe that Algorithm~\ref{alg:kmedian} samples $O\left(\frac{1}{\alpha^4}\log^2\frac{k}{\alpha}\right)$ points for each of the clusters labeled $i$, with $i\in[k]$. 
Thus Algorithm~\ref{alg:kmedian} samples $O\left(\frac{k}{\alpha^4}\log^2\frac{k}{\alpha}\right)$ points in total. 
For $\lambda=O\left(\frac{\alpha^4}{k\log\frac{k}{\alpha}\log\frac{\log k}{\alpha}}\right)$ with a sufficiently small constant, the expected number of incorrectly labeled points sampled by Algorithm~\ref{alg:kmedian} is less than $\frac{1}{32}$. 
Thus, by Markov's inequality, the probability that no incorrectly labeled points are sampled by Algorithm~\ref{alg:kmedian} is at least $\frac{3}{4}$. 
Conditioned on the event that no incorrectly labeled points are sampled by Algorithm~\ref{alg:kmedian}, then by Theorem~\ref{thm:kmedian:sample}, the empirical geometric median for each cluster induces a $\left(1+\frac{\alpha}{4}\right)$-approximation to the optimal geometric median in the projected space. 
Hence the set of $k$ empirical geometric medians induces a $\left(1+\frac{\alpha}{4}\right)$-approximation to the optimal $k$-median clustering cost in the projected space. 
Since the projected space is the result of a terminal embedding, the set of $k$ empirical geometric medians for the sampled points in the projected space induces a $k$-median clustering cost that is a $\left(1+\frac{\alpha}{4}\right)$-approximation to the $k$-median clustering cost induced by the set of $k$ empirical geometric medians for the sampled points in the original space. 
Taking the set of $k$ empirical geometric medians for the sampled points in the original space induces a $\left(1+\frac{\alpha}{4}\right)^2$-approximation to the $k$-median clustering cost. 
We take a $\left(1+\frac{\alpha}{4}\right)$-approximation to each of the geometric medians. 
Thus for sufficiently small $\alpha$, Algorithm~\ref{alg:kmedian} outputs a $(1+\alpha)$-approximation to the $k$-median clustering problem. 

To embed the points into the space of dimension $O\left(\frac{1}{\alpha^2}\log\frac{k}{\alpha}\right)$, Algorithm~\ref{alg:kmedian} spends $O(nd\log n)$ total time. 
By Theorem~\ref{thm:geo:median:fast}, it takes $O(nd\log^3 n)$ total time to compute the approximate geometric medians.  
\end{proof}

\section{Lower Bounds}
MAX-E3-LIN-2 is the optimization problem of maximizing the number of equations satisfied by a system of linear equations of $\mathbb{Z}_2$ with exactly $3$ distinct variables in each equation. 
E$K$-MAX-E3-LIN-2 is the problem of MAX-E3-LIN-2 when each variable appears in exactly $k$ equations. 
\cite{FotakisLP16} showed that assuming the exponential time hypothesis (ETH) \citep{ETH}, there exists an absolute constant $C_1$ such that MAX $k$-SAT (and thus MAX $k$-CSP) instances with fewer than $O(n^{k-1})$ clauses cannot be approximated within a factor of $C_1$ in time $2^{O(n^{1-\delta})}$ for any $\delta>0$. 
As a consequence, the reduction by \cite{Hastad01} shows that there exist absolute constants $C_2, C_3$ such that E$K$-MAX-E3-LIN-2 with $k\ge C_2$ cannot be approximated within a factor of $C_3$ in time $2^{O(n^{1-\delta})}$ for any $\delta>0$. 
Hence, the reduction by \cite{ChlebikC06} shows that there exists a constant $C_4$ such that approximating the minimum vertex cover of $4$-regular graphs within a factor of $C_4$ cannot be done in time $2^{O(n^{1-\delta})}$ for any $\delta>0$. 
Thus the reduction by \cite{LeeSW17} shows that there exists a constant $C_5$ such that approximating $k$-means within a factor of $C_5$ cannot be done in time $2^{O(n^{1-\delta})}$ for any $\delta>0$, assuming ETH. 
Namely, the reduction of \cite{LeeSW17} shows that an algorithm that provides a $C_5$-approximation to the optimal $k$-means clustering can be used to compute a $C_4$-approximation to the minimum vertex cover. 

\begin{theorem}
\label{thm:eth:helper}
If ETH is true, then there does not exist an algorithm $\mathcal{A}$ that takes a set $S$ of $\frac{n^{1-\delta}}{\log n}$ vertices and finds a $C_4$-approximation to the minimum vertex cover that contains $S$ on a $4$-regular graph $G$, using $2^{O(n^{1-\delta})}$ time for some constant $\delta\in(0,1]$. 
\end{theorem}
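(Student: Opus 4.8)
The plan is a padding reduction from the \emph{unconstrained} hardness of approximating minimum vertex cover on $4$-regular graphs — which, as recalled above, follows under ETH from the chain of reductions of \cite{FotakisLP16,Hastad01,ChlebikC06,LeeSW17} and, since hardness of $c$-approximation implies hardness of $c'$-approximation for all $c'\in(1,c]$, holds for every constant factor in some range $(1,c_0]$ — to the constrained problem, with gadgets chosen so that prescribing the set $S$ costs essentially nothing. Fix any $C_4\in(1,c_0)$ and suppose toward a contradiction that an algorithm $\mathcal{A}$ takes a $4$-regular graph on $n$ vertices together with a set $S$ of $\frac{n^{1-\delta}}{\log n}$ of its vertices, outputs a vertex cover containing $S$ of size at most $C_4$ times optimal, and runs in time $2^{O(n^{1-\delta})}$.

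\textbf{The reduction.} Given a $4$-regular graph $G$ on $m$ vertices (an instance on which $c_0$-approximation is ETH-hard), form $G'$ as the disjoint union of $G$ with $t$ copies of $K_5$, where $n:=m+5t$ and $t:=\lceil n^{1-\delta}/\log n\rceil$; since $t=o(n)$ such an $n$ exists and $n=(1+o(1))m$, so $2^{O(n^{1-\delta})}=2^{O(m^{1-\delta})}$. As $K_5$ is $4$-regular, so is $G'$. Let $S$ contain one designated vertex from each copy of $K_5$, so $|S|=t$. Because any $4$ of the $5$ vertices of $K_5$ form a minimum vertex cover of it, there is a minimum vertex cover of $G'$ — namely, a minimum cover of $G$ together with, for each copy, the $4$ vertices including its designated one — that contains $S$, and $\mathrm{OPT}(G')=\mathrm{OPT}(G)+4t$. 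Hence $\mathcal{A}(G',S)$ returns a vertex cover $U$ of $G'$ with $S\subseteq U$ and $|U|\le C_4\,\mathrm{OPT}(G')$.

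\textbf{Extraction and contradiction.} The set $U':=U\cap V(G)$ covers every edge of $G$ (each is an edge of $G'$ with both endpoints in $V(G)$), and since $S\subseteq U$ is disjoint from $V(G)$, $|U'|\le|U|-|S|\le C_4\,\mathrm{OPT}(G)+(4C_4-1)t$. Since $G$ is $4$-regular, $\mathrm{OPT}(G)=\Theta(m)=\Theta(n)$, while $t=o(n)$, so the additive overhead $(4C_4-1)t$ is smaller than $(c_0-C_4)\,\mathrm{OPT}(G)$ once $m$ is large enough; thus $|U'|\le c_0\,\mathrm{OPT}(G)$, i.e.\ $U'$ is a $c_0$-approximate minimum vertex cover of $G$ computable in time $\poly(n)+2^{O(n^{1-\delta})}=2^{O(m^{1-\delta})}$, contradicting the unconstrained hardness and hence ETH. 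This proves the statement for the chosen constant $C_4$; as in the preceding discussion, only the existence of such a constant (not its exact value, which is flexible up to the slack in the hardness threshold) is needed.

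\textbf{Main obstacle.} The crux is choosing $S$ to simultaneously have the exact prescribed cardinality $\frac{n^{1-\delta}}{\log n}$ and be ``free'': we cannot place $S$ inside $G$, since we do not know any near-optimal cover of the hard instance in advance, so we must attach \emph{independent} gadgets, and the gadget must contribute a fixed amount to the optimum regardless of which of its near-minimum covers is used — which is exactly why disjoint copies of $K_5$ (a $4$-regular graph with minimum vertex cover $4$, any $4$ of its vertices working) are the right choice. Everything else — integrality of $t$, solving $n=m+5t$, and checking that the $o(\mathrm{OPT})$ gadget overhead is swallowed by the gap $c_0-C_4$ so that the $2^{O(n^{1-\delta})}=2^{O(m^{1-\delta})}$ time bound transfers — is routine bookkeeping.
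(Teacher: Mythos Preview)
Your proof is correct, but the paper takes a shorter and rather different route. Instead of a padding reduction, the paper argues by brute-force enumeration: if the constrained algorithm $\mathcal{A}$ existed, one could iterate over all $\binom{n}{n^{1-\delta}/\log n}$ subsets $S$ of the prescribed size (together with the $2^{n^{1-\delta}/\log n}$ choices of which of those vertices are asserted to be in the cover), run $\mathcal{A}(G,S)$ on each, and return the smallest cover found. Since $\mathrm{OPT}(G)\ge n/2$ in a $4$-regular graph, at least one such $S$ lies entirely inside a true minimum cover, and for that guess $\mathcal{A}$ returns a $C_4$-approximation to the unconstrained optimum. The enumeration has size $(en^{\delta}\log n)^{n^{1-\delta}/\log n}\cdot 2^{n^{1-\delta}/\log n}=2^{O(n^{1-\delta})}$, so the total time stays $2^{O(n^{1-\delta})}$, contradicting the unconstrained ETH-hardness directly.

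The paper's enumeration preserves the approximation constant $C_4$ exactly and avoids the gadget construction and the fixed-point bookkeeping for choosing $n$ and $t$. Your approach, on the other hand, is a genuine many-one reduction (a single call to $\mathcal{A}$) rather than a Turing reduction with $2^{O(n^{1-\delta})}$ oracle calls, and it would transfer cleanly to other regularity constraints by swapping $K_5$ for an appropriate regular gadget. The price you pay is the strict slack $C_4<c_0$ needed to absorb the additive $(4C_4-1)t$ term; as you note, this is harmless here since only the existence of some such constant is used downstream.
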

\begin{proof}
Suppose by way of contradiction that there exists an algorithm $\mathcal{A}$ that takes a set $S$ of $\frac{n^{1-\delta}}{\log n}$ vertices and finds a $C_4$-approximation to the minimum vertex cover that contains $S$ on a $4$-regular graph $G$, using $2^{O(n^{1-\delta})}$ time for some constant $\delta\in(0,1]$. 
We claim that we can use $\mathcal{A}$ to create an overall algorithm that violates ETH. 
Indeed, suppose we guess each subset of $\frac{n^{1-\delta}}{\log n}$ vertices and which vertices of the subset are in the cover. 
There are $\binom{n}{n^{1-\delta}/\log n}\cdot 2^{n^{1-\delta}/\log n}\le(en^{\delta}\log n)^{n^{1-\delta}/\log n}\cdot 2^{n/\log n}$ such combinations of vertices. 
For each guess, we then run the purported algorithm $\mathcal{A}$ that uses $2^{O(n^{1-\delta})}$ time. 
Thus we can identify a $C_4$-approximation to the minimum vertex cover in time
\[(en^{\delta}\log n)^{n^{1-\delta}/\log n}\cdot 2^{n/\log n}\cdot 2^{O(n^{1-\delta})}=2^{O(n^{1-\delta})}\cdot2^{O(n^{1-\delta})}=2^{O(n^{1-\delta})},\]
which would contradict ETH.
\end{proof}

Finally, we show the query complexity of Algorithm~\ref{alg:fast} is nearly optimal. 
\cite{LeeSW17} constructed an instance of $k$-means that cannot be approximated within a factor of $1.0013$ in polynomial time. 
The reduction of \cite{LeeSW17} creates $4n$ points in $\mathbb{R}^{3n}$ that must be clustered by $O(n)$ centers and an algorithm that provides a $C_5$-approximation to the optimal $k$-means clustering can be used to compute a $C_4$-approximation to the minimum vertex cover. 
Thus, there exists a constant $C_5$ such that approximating $k$-means within a factor of $C_5$ cannot be done in time $2^{O(n^{1-\delta})}$ for any $\delta>0$, assuming ETH. 

\thmhardnessapprox*
\begin{proof}
Let $\alpha$ be a fixed constant such that $\alpha<C_5$. 
Given an instance $\mathcal{I}$ of a $k$-means clustering constructed from the reduction of \cite{LeeSW17}, the optimal clustering cost is $\Omega(n)$ and $k_1=\Omega(n)$. 
We embed this instance into a $k$-means clustering by adding an additional $\Omega(n)$ points arbitrarily far from $\mathcal{I}$, so that the additional points contribute $\Omega(n/\alpha)$ cost upon partitioning into $k_2=\Omega(n)$ clusters. 
We set $k=k_1+k_2$. 

In summary, the resulting instance has $O(n)$ points and $k=\Omega(n)$. 
The optimal solution has cost $O(n/\alpha)$ cost so that $\mathcal{I}$ contributes an $\Omega(\alpha)$ fraction of the cost. 
By querying $O\left(\frac{k^{1-\delta}}{\alpha\log n}\right)$ points with sufficiently small constant, at most $O\left(\frac{k^{1-\delta}}{\alpha\log n}\right)$ of the cluster centers in $\mathcal{I}$ will be revealed by the construction of $\mathcal{I}$.  
Each center corresponds to a selected vertex in the corresponding vertex cover in the reduction from minimum vertex cover on $4$-regular graphs. 
Hence, in the corresponding vertex cover instance, at most $O\left(\frac{k^{1-\delta}}{\alpha\log n}\right)$ vertices are revealed. 
Thus by Theorem~\ref{thm:eth:helper}, any algorithm running in $2^{O(n^{1-\delta})}$ time cannot determine a $C_5$-approximation to the optimal $k$-means clustering cost on $\mathcal{I}$, as it would correspond to a $C_4$-approximation to the optimal vertex cover, assuming ETH. 
Since $\mathcal{I}$ induces an $\Omega(\alpha)$ fraction of the total clustering cost, it follows that any algorithm that makes $O\left(\frac{k^{1-\delta}}{\alpha\log n}\right)$ queries cannot output a $(1+C\alpha)$-approximation to the optimal $k$-means clustering cost in time $2^{O(n^{1-\delta})}$ time, assuming ETH. 
\end{proof}

\section{Additional Experimental Results}
\label{sec:app:experiments}
\subsection{Omitted Discussion}
In this section we continue the discussion from the experimental section of the main body. 
In Figure \ref{fig:oregon_25_graph5}, we show the qualitatively similar version of Figure $2(c)$ of the main body for the case of $k=25$.  In Figure \ref{fig:phy25}, we display the qualitatively similar version of Figure $3(a)$ of the main body for the $k=25$ case of dataset PHY.

\paragraph{PHY. \,} We use the noisy predictor for this dataset. We see in Figure \ref{fig:phy_10} that as the corruption percentage rises, the clustering given by just the predictor labels can have increasingly large cost.  Nevertheless, even if the clustering cost of the corrupted labels is rising, the cost decreases significantly after applying Algorithm \ref{alg:main} by roughly a factor of \textbf{3x}. Indeed,  we see that our algorithm can beat the \texttt{kmeans++} seeding baseline for $q$ as high as $50\%$. Just as in Figure \ref{fig:oregon_10_graph5}, random sampling is sensitive to noise. Lastly, we also remain competitive with the purple line which uses the labels output by \texttt{kmeans++} as the predictor in our algorithm (no corruptions added). The qualitatively similar plot for $k=25$ is given in the supplementary material.

\paragraph{CIFAR-10.\,} The cost of clustering on CIFAR-10 using only the predictor, the predictor with our algorithm, random sampling, and \texttt{kmeans++} as the predictor for our algorithm were $0.733$, $0.697$, $0.721$, and $0.640$,  respectively, where $1.0$ represents the cost of \texttt{kmeans++}. The neural network was very accurate ($\sim 93\%$) in predicting the class of the input points which is highly correlated with the optimal $k$-means clusters. Nevertheless, our algorithm improved upon this highly precise predictor.

Note that using \texttt{kmeans++} as the predictor for our algorithm resulted in a clustering that was \textbf{13$\%$ better} than the one given by the neural network predictor. This highlights the fact that an approximate clustering combined with our algorithm can be competitive against a highly precise predictor, such as a neural network, even though creating the highly accurate predictor can be expensive. Indeed, obtaining a neural network predictor requires prior training data and also the time to train the network. On the other hand, using a heuristic clustering as a predictor is extremely flexible and can be applied to any dataset even if no prior training dataset is available ($50,000$ test images were required to train the neural network predictor but \texttt{kmeans++} as a predictor requires no test images), in addition to considerable savings in computation.

For example, the time taken to train the particular neural network we used was approximately $18$ minutes using the optimized PyTorch library (see training details under the ``Details Report'' section in \cite{huy_pytorch}). In general, the time can be much longer for more complicated datasets. On the other hand, our unoptimized algorithm implementation which used the labels of a sample run of \texttt{kmeans++} was still able to achieve a better clustering than the neural network predictor with $\alpha= 0.01$ in Algorithm \ref{alg:1d}  in $4.4$ seconds. In conclusion, we can achieve a better clustering by combining a much weaker predictor with our algorithm with the additional benefit of using a more flexible and computationally inexpensive methodology.

We also conducted an experiment where we only use a small fraction of the predictor labels in our algorithm. We select a $p$ fraction of the images, query their labels, and run our algorithm on only these points. We then report the cost of clustering on the entire dataset as $p$ ranges from $1\%$ to $100\%$. Figure \ref{fig:cifar_fraction} shows the percentage increase in clustering cost relative to querying the whole dataset is quite low for moderate values of $p$ but increasingly worse as $p$ becomes smaller. This suggests that the quality of our algorithm does not suffer drastically by querying a smaller fraction of the dataset.

\begin{figure*}[!ht]
    \centering
    \subfigure[Oregon Graph $\#5$, $k = 25$\label{fig:oregon_25_graph5}]{\includegraphics[scale=0.29]{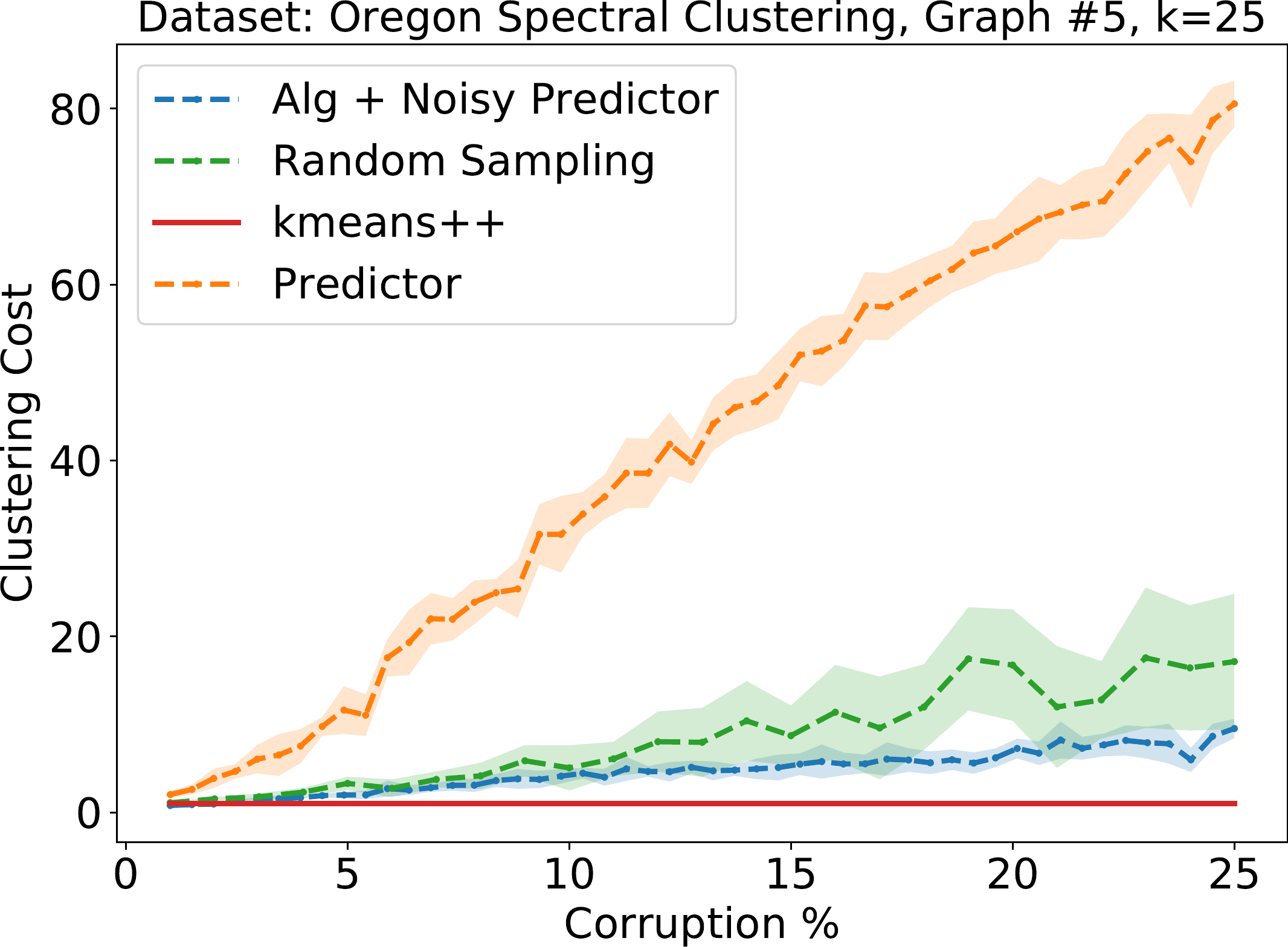}}
    \subfigure[PHY, $k = 25$\label{fig:phy25}]{\includegraphics[scale=0.29]{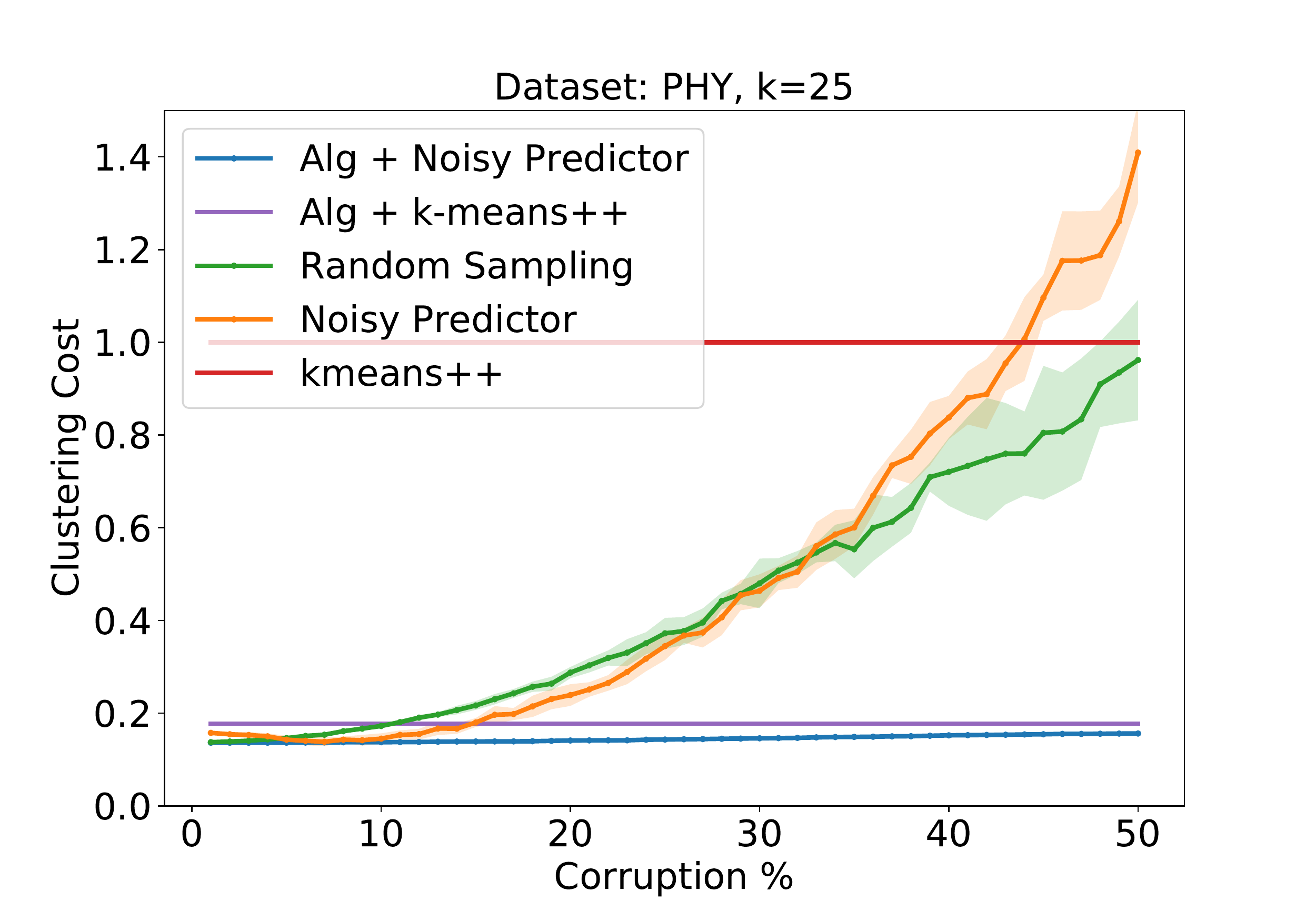}}
    \caption{Our algorithm is able to recover a good clustering even for very high levels of noise.}
\end{figure*}

\subsection{Synthetic Dataset}
We use a dataset of $10010$ points in $\mathbb{R}^d$ for $d = 10^3$ created using the \texttt{kmeans++} lower bound construction presented in \cite{kmeans++}. The dataset consists of $10$ well separated clusters in $\mathbb{R}^d$: let $e_i$ denote the basis vectors. Our dataset is $\{ 1000e_i\} \cup \{1000e_i + e_j\}$ for all $1 \le i \le 10, 1 \le j \le 1000$.

From the description of the dataset, we
can explicitly calculate the optimal clustering and its cost.
Our predictor for this dataset was to take the optimal clustering and randomly change each label with probability $1/2$ to
another uniformly random label. Empirically, \texttt{kmeans++}
seeding returned a clustering that had cost at least \textbf{1.9x} the
optimal clustering. Furthermore, using just the predictor
labels na\"ively resulted in a clustering with cost up to five
orders of magnitude larger than the optimal clustering. In
contrast, our algorithm was able to precisely recover the
true clustering after processing the predictor outputs. In addition, applying our algorithm using the labels of a sample run of \texttt{kmeans++} was also able to precisely recover the true clustering.

\subsection{Comparison to Lloyd's Heuristic}\label{sec:lloyds_compare}

We give both theoretical and empiricial justifications for why our algorithms could be superior to blindly following a predictor and then running Lloyd's heuristic.

\paragraph{Empirical Comparison.}

\begin{figure*}[!ht]
    \centering
    \subfigure[\label{fig:phy_extra1}]{\includegraphics[scale=0.21]{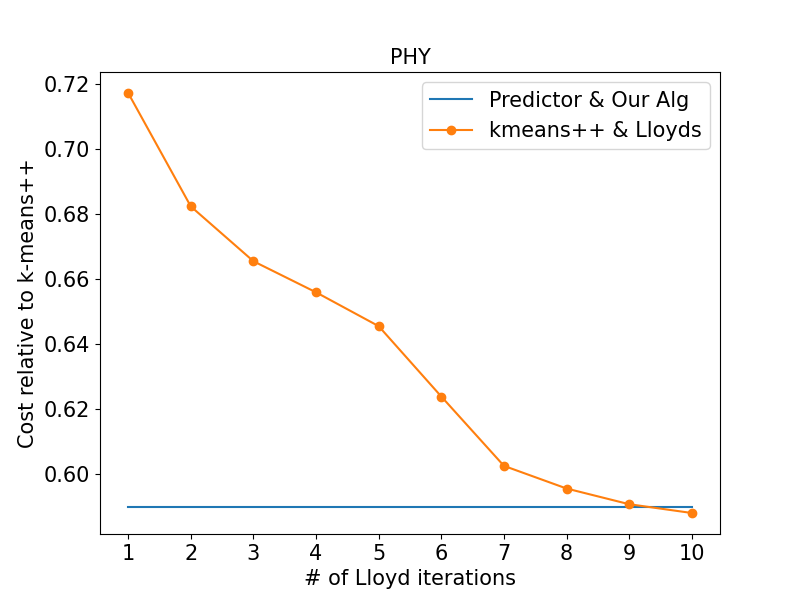}} 
    \subfigure[\label{fig:phy_extra2}]{\includegraphics[scale=0.21]{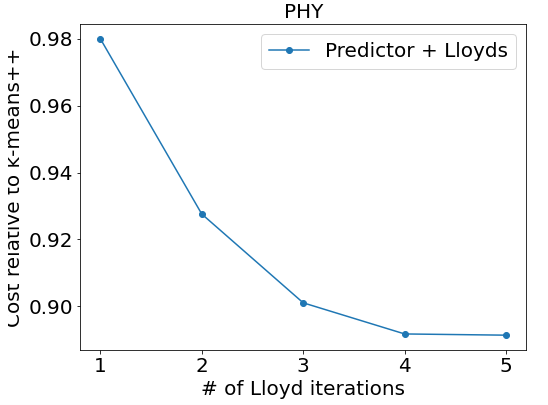}}
     \subfigure[\label{fig:cifar_extra}]{\includegraphics[scale=0.21]{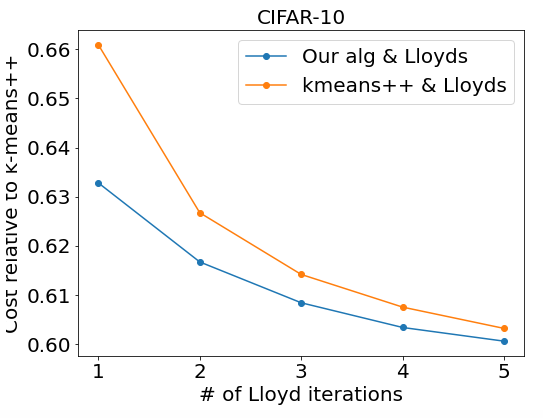}}
    \caption{ Additional experimental results for comparison to Lloyd's heuristic. \label{fig:lloyd}} 
\end{figure*}

We first compared Lloyd's algorithm on \texttt{kmeans++} seeding to our algorithm with the predictor on the PHY dataset. The predictor is a noisy predictor that has corruption level $50\%$ as described in Section \ref{sec:experiments} so that outputting the clustering from the predictor alone has cost 1.9x the average \texttt{kmeans++} cost. Hence, it is clear that the predictor is much worse than \texttt{kmeans++}, yet our algorithm using the predictor (horizontal line in Figure \ref{fig:phy_extra1}) is much better than \texttt{kmeans++} and Lloyd's algorithm (orange line in Figure \ref{fig:phy_extra1}).

Next, we took the noisy predictor for the PHY dataset with corruption level $50\%$ and repeatedly applied Lloyd's algorithm. We observe that even with $\sim 5$ Lloyd's iterations (Figure \ref{fig:phy_extra2}), the clustering cost does not seem to improve upon \texttt{kmeans++}, much less the clustering output by simply applying our algorithm to the noisy predictions (Figure \ref{fig:phy_extra1}).

Finally, we compared Lloyd's algorithm on \texttt{kmeans++} seeding to Lloyd's algorithm on the seeding output by our algorithm (with \texttt{kmeans++} as the predictor) on the CIFAR-10 dataset, similar to the experiments you suggested by \cite{sohler}. Lloyd's algorithm on the seeding output by our algorithm exhibits superior performance than Lloyd's algorithm on \texttt{kmeans++} (Figure \ref{fig:cifar_extra}), which is consistent with our previous experiments showing that our algorithm improves upon \texttt{kmeans++}. This further strengthens our claim that our algorithm and methodology with provable worst-case guarantees can be applied in conjunction with heuristics such as Lloyd's that do not have provable worst-case guarantees. Moreover, Figure \ref{fig:cifar_extra} indicates that our approach may be more advantageous than just running \texttt{kmeans++} with heuristics. Note that a na\"ive implementation of Lloyd's algorithm is $O(ndk)$ time while our algorithm can be implemented in nearly linear time.

We emphasize that all of our above experiments use a noisy predictor with corruption level $50\%$ as input to our algorithm. Our experiments exhibit even better behavior when a clustering produced by \texttt{kmeans++} is used for a predictor as input to our algorithm. Combined with our other experiments in Section \ref{sec:experiments}, this gives  empirical evidence that there exist many scenarios in which running our algorithm with an erroneous predictor is advantageous.
\paragraph{Theoretical Comparison.}
We now provide an example that demonstrates why blindly following the predictor and then running Lloyd's heuristic would run into issues. We emphasize that it is well-known e.g., see \cite{Dasgupta2003HowFI, HarPeled2005HowFI, Arthur2006HowSI, Vattani2011kmeansRE}, that Lloyd's algorithm can take a large number of steps to converge. In particular, \cite{Vattani2011kmeansRE} shows that an exponential number of Lloyd iterations can be required for the algorithm to converge to the optimal solution. Nevertheless, we offer the following concrete answer: 

We describe a simple set of points that guarantees Lloyd's algorithm will fail. This is based on the example given in \cite{HarPeled2005HowFI} and is also conceptually similar to the example given in \ref{sec:motivation}. Consider $4n$ points on the real line $x_1, \ldots, x_{2n}, y_1, \ldots, y_{2n}$ so that $y_1 \le \ldots \le y_{2n} \le A < B \le x_1 \le \ldots \le x_{2n}$ where $B-A$ is large. Suppose $k=2$ in which case the optimal clustering groups all the $y_i$ points together and all the $x_i$ points together as two separate clusters. Suppose the predictor initially gives label $``1"$ to points $y_1, \ldots, y_n$ and gives label $``2"$ to points $y_{n+1}, \ldots, x_1, \ldots, x_{2n}$, so that the predictor has corruption level $\lambda = 1/2$. Then our algorithm that uses this predictor will get a constant-factor approximation in only one iteration. However, since $B-A$ can be arbitrarily large without affecting the optimal clustering cost, blindly listening to the predictor will give a worse clustering. Furthermore, Theorem $2.1$ in \cite{HarPeled2005HowFI} implies that Lloyd's will take $\Theta(n)$ iterations to converge if initialized using this predictor. Note that even a single (na\"ive) iteration of Lloyd's algorithm already uses $O(ndk)$ time while our algorithm only uses $O(nd) + \poly(k, 1/\alpha)$ time. Note that there are also more complex examples in higher-dimensional spaces in the literature which provably have even worse convergence rates for Lloyd's method.
\subsection{Conclusion}
Although $1.07$-approximation for $k$-means clustering in polynomial time is NP-hard and a clustering consistent with the labels of any predictor with nonzero error can be arbitrarily bad, we give a $(1+\alpha)$-approximation algorithm that uses the labels output by the predictor as ``advice'' and runs in nearly linear time. We use a linear number of queries to the predictor, which can be improved to nearly optimal under natural assumptions about the cluster sizes. Our results are well-supported by empirical evaluations and are an important step in demonstrating the power of learning-augmented algorithms beyond the limitations of classical algorithms for clustering-based applications. 

\section{Learnability Results}
In this section, we present formal learning bounds for learning a good predictor for the learning-augmented $k$-means problem. 
Namely, we show that a good predictor can be learned efficiently if the problem instances are drawn from a particular distribution.  
Our result is inspired by derived from data-driven algorithm design and utilizes the PAC learning framework. 
Formally, our setting is the following.

Suppose there exists an underlying distribution $\mathcal{D}$ that generates independent $k$-means clustering instances, representing the case where similar instances of the $k$-means clustering problem are being solved. 
Note that this setting also mirrors some of our experiments in Section~\ref{sec:experiments}, specifically in the case of our spectral clustering datasets which are derived from snapshots of a dynamic graph across time. 

Our goal is to efficiently learn a good predictor $f$ among some family of functions $\mathcal{F}$. 
The input to each predictor $f$ is a clustering instance $C$ and the output is a feature vector.
We assume that the each input instance $C$ is encoded as a vector in $\mathbb{R}^{nd}$, that is, we think of $nd$ as the size of the description of the input. We also assume that the output of $f$ is
in $n$ dimensions, which represents the prediction of the oracle. 
For example in the case of $k$-means clustering, the output can be an integer in $[k]$ for each point in $C$. 

To select the ``best'' $f$, we need to formally define what we mean by best. In most learning settings, a loss function is used to measure the quality of a solution. Indeed, suppose we have a loss function $L: f \times C \rightarrow \mathbb{R}$ which represents how well a predictor $f$ performs on some input $C$. For example, $f$ can represent the cluster labels of a points in $C$ and $L$ outputs the cost of the clustering. Alternatively, $L$ can represent an algorithm which uses the hints given by $f$ and performs a clustering algorithm such as some number of steps of Lloyd's heuristic.

Our goal is to learn the best function $f \in \mathcal{F}$ that minimizes the following objective:
\begin{equation}\label{eq:best_f}
    \E_{C \sim D}[L(f,C)].
\end{equation}
We define $f^*$ to be an optimal function $f \in \mathcal{F}$, so that $f^*=\argmin \E_{C \sim D}[L(f,C)]$.  
We also assume that for each clustering instance $C$ and each $f \in F$, $f(C)$ and $L(f,C)$ can be computed in time $T(n,d)$ that should be interpreted as a (small) polynomial in $n$ and $d$.

Our main result is the following.
\begin{theorem}\label{thm:main_learning}
There exists an algorithm that uses $\poly(T(n,d), 1/\eps)$ samples and returns a $\hat{f}$ that satisfies
\[  \E_{C \sim D}[L(\hat{f},C)] \le  \E_{C \sim D}[L(f^*,C)] + \eps \]
with probability at least $9/10$. 
\end{theorem}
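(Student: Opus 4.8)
The plan is to treat this as a standard statistical learning / data-driven algorithm design problem: let $\mathcal{L}_\mathcal{F} = \{\, C \mapsto L(f,C) \,:\, f \in \mathcal{F}\,\}$ be the induced real-valued loss class, run empirical risk minimization over $\mathcal{F}$, and invoke uniform convergence. Concretely, the algorithm draws $m$ i.i.d.\ instances $C_1,\dots,C_m \sim \mathcal{D}$, computes $\widehat{L}(f) = \frac1m\sum_{i=1}^m L(f,C_i)$, and returns $\hat f = \argmin_{f\in\mathcal{F}} \widehat{L}(f)$; this uses $m$ samples and is well-defined irrespective of running time (the theorem only bounds sample complexity). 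The usual two-line argument shows that if $\sup_{f\in\mathcal{F}} |\widehat{L}(f) - \E_{C\sim\mathcal{D}}[L(f,C)]| \le \eps/2$, then $\E[L(\hat f,C)] \le \E[L(f^*,C)] + \eps$. So it suffices to establish this uniform deviation bound with probability at least $9/10$. As usual for an additive error guarantee to be estimable from few samples, I would normalize so that the range of $L$ is $[0,1]$, e.g.\ by rescaling each instance to unit diameter so that the $k$-means cost lies in $[0,1]$; this costs only a $\poly(n,d)\le\poly(T(n,d))$ factor absorbed into the final bound.

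By the standard generalization theorem for real-valued function classes, $\sup_{f}\,|\widehat{L}(f)-\E[L(f,\cdot)]|\le \eps/2$ holds with probability at least $9/10$ once $m = O(\eps^{-2}(\mathrm{Pdim}(\mathcal{L}_\mathcal{F})\log(1/\eps) + 1))$, where $\mathrm{Pdim}$ is the pseudo-dimension. The heart of the proof is thus to bound $\mathrm{Pdim}(\mathcal{L}_\mathcal{F})$ by $\poly(T(n,d))$. For this I would use the ``dual''/Goldberg--Jerrum viewpoint familiar from data-driven algorithm design: regard $\mathcal{F}$ as parameterized by $\theta\in\R^D$ with $D=\poly(n,d)$, fix any candidate shattered set $C_1,\dots,C_m$ with thresholds $r_1,\dots,r_m$, and view the map $\theta\mapsto (L(f_\theta,C_i))_{i\le m}$. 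Since by hypothesis each $f_\theta(C)$ and each $L(f_\theta,C)$ is produced by a program performing at most $T:=T(n,d)$ arithmetic operations and sign tests, unrolling this computation shows $\theta\mapsto L(f_\theta,C_i)$ is piecewise-polynomial over $\R^D$, with the degree and the number of region-defining polynomials bounded in terms of $T$. A Warren/Goldberg--Jerrum sign-pattern count then gives at most $(mT)^{O(DT)}$ achievable sign vectors of $(L(f_\theta,C_i)-r_i)_{i\le m}$ as $\theta$ ranges over $\R^D$. Shattering requires $2^m$ distinct sign vectors, so $m=\Omega(DT\log(DT))$ is impossible, yielding $\mathrm{Pdim}(\mathcal{L}_\mathcal{F}) = O(DT\log(DT)) = \poly(T(n,d))$. (If $\mathcal{F}$ happens to be finite, the cruder bound $\mathrm{Pdim}(\mathcal{L}_\mathcal{F})\le\log_2|\mathcal{F}|$ suffices.) Plugging this back in gives $m = \poly(T(n,d),1/\eps)$, which proves the theorem.

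The main obstacle is precisely the pseudo-dimension bound: it forces one to pin down a concrete computational model underlying the phrase ``computable in time $T(n,d)$'' — a real RAM with $\{+,-,\times,\div\}$ and comparisons is the natural choice — together with a parameterization of $\mathcal{F}$, and then to carry out the sign-pattern bookkeeping of the Goldberg--Jerrum argument on the unrolled computation of $L(f_\theta,\cdot)$, carefully tracking how the degree and number of region polynomials grow as a function of the number of operations $T$. Everything else — the ERM reduction, the symmetrization/uniform-convergence step, the range normalization, and the choice of failure probability $1/10$ — is routine and introduces only polynomial overhead in $T(n,d)$ and $1/\eps$.
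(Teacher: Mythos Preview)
Your proposal is correct and follows essentially the same route as the paper: ERM plus uniform convergence via a pseudo-dimension bound on the loss class, with the pseudo-dimension controlled by the Goldberg--Jerrum computational-complexity argument (which the paper invokes as Theorem~8.14 of Anthony--Bartlett after reducing pseudo-dimension to the VC dimension of the subgraph class). The only cosmetic difference is that you sketch the sign-pattern counting explicitly, whereas the paper cites it as a black box; the normalization of $L$ to $[0,1]$ and the ERM-to-uniform-convergence reduction are handled identically.
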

The above theorem is a PAC-style bound that shows only a small number of samples are needed in order to have a good probability of learning an approximately-optimal function $\hat{f}$. 
The algorithm to compute $\hat{f}$ is the following: we simply minimize the empirical loss after an appropriate number of samples are drawn, i.e., we perform empirical risk minimization. 
This result is proven by Theorem \ref{thm:uni-dim}. 
Before introducing it, we need to define the concept of pseudo-dimension for a function class, which is the more familiar VC dimension, generalized to real functions.
\begin{definition}[Pseudo-Dimension, e.g., Definition $9$ in \cite{feldman_gaussians}]
Let $\mathcal{X}$ be  a  ground  set  and $\mathcal{F}$ be a set of functions from $\mathcal{X}$ to the interval $[0,1]$. Fix a set $S= \{x_1,\cdots ,x_n\} \subset \mathcal{X}$, a set of reals numbers $R=\{r_1, \cdots, r_n\}$ with $r_i \in [0, 1]$ and a function $f \in \mathcal{F}$. The set $S_f= \{x_i \in S  \mid f(x_i) \ge r_i\}$ is called the induced subset of $S$ formed by $f$ and $R$. The set $S$ with associated values $R$ is shattered by $\mathcal{F}$ if $|\{S_f \mid f \in \mathcal{F} \}|= 2^n$. 
The \emph{pseudo-dimension} of $\mathcal{F}$ is the cardinality of the largest shattered subset of $\mathcal{X}$ (or $\infty$).
\end{definition}
The following theorem relates the performance of empirical risk minimization and the number of samples needed, to pseudo-dimension. We specialize the theorem statement to our situation at hand. For notational simplicity, we define $\mathcal{G}$ be the class of functions in $\mathcal{F}$ composed with $L$:
\[\mathcal{G} := \{L \circ f : f \in \mathcal{F} \}.  \]
Furthermore, by normalizing, we can assume that the range of $L$ is equal to $[0,1]$.
\begin{theorem}[\cite{anthony_bartlett_1999}]\label{thm:uni-dim}
Let $\mathcal{D}$ be a distribution over problem instances $C$ and $\mathcal{G}$ be a class of functions $g: \mathcal{C} \rightarrow[0, 1]$ with pseudo-dimension $d_{\mathcal{G}}$. Consider $t$ i.i.d.\ samples $C_{1}, C_{2}, \ldots, C_{t}$ from $\mathcal{D}$.  
There is a universal constant $c_{0}$, such that for any $\eps>0$, if $t \geq c_{0} \cdot d_{\mathcal{G}}/\eps^2,$ then we have
\[\left|\frac{1}{t} \sum_{i=1}^{t} g\left(C_{i}\right)-\mathbb{E}_{C \sim \mathcal{D}} \, g(C)\right| \leq \eps\]
for all $g \in \mathcal{G}$ with probability at least $9/10 .$
\end{theorem}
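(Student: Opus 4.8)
The plan is to prove this as a standard uniform-convergence (Glivenko--Cantelli-type) statement for bounded real-valued function classes, in which the pseudo-dimension $d_{\mathcal{G}}$ is used only to control the metric entropy of $\mathcal{G}$. Write $\hat{L}_t(g)=\frac1t\sum_{i=1}^t g(C_i)$ and $L(g)=\E_{C\sim\mathcal{D}}[g(C)]$; the goal is to bound $\sup_{g\in\mathcal{G}}|\hat{L}_t(g)-L(g)|$ by $\eps$ with probability at least $9/10$ once $t\ge c_0 d_{\mathcal{G}}/\eps^2$. The three ingredients are (i) a symmetrization step that replaces the unknown mean $L(g)$ by an independent ghost sample, (ii) a covering-number bound for $\mathcal{G}$ on any finite point set in terms of $d_{\mathcal{G}}$, and (iii) a concentration inequality applied over the cover and then aggregated.

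First I would apply the classical symmetrization lemma: for $t\eps^2$ bounded below by a constant, $\PPr{\sup_g|\hat{L}_t(g)-L(g)|>\eps}\le 2\,\PPr{\sup_g|\hat{L}_t(g)-\hat{L}_t'(g)|>\eps/2}$, where $\hat{L}_t'$ is the empirical average on an independent ghost sample $C_1',\dots,C_t'$. Conditioning on the $2t$ realized points and introducing i.i.d.\ Rademacher signs $\sigma_i$, the quantity $\sup_g|\hat{L}_t(g)-\hat{L}_t'(g)|$ has the same conditional law as $\sup_g\frac1t\bigl|\sum_{i=1}^t\sigma_i(g(C_i)-g(C_i'))\bigr|$, since swapping $C_i$ with $C_i'$ leaves the joint distribution unchanged. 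This reduces everything to controlling a Rademacher-type supremum over the finite ``projection'' $\{(g(C_1),\dots,g(C_t),g(C_1'),\dots,g(C_t')):g\in\mathcal{G}\}\subseteq[0,1]^{2t}$.

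Next I would bound the metric entropy of that projection. By the real-valued analogue of the Sauer--Shelah lemma (Haussler's / Pollard's packing bound), a class of pseudo-dimension $d_{\mathcal{G}}$ with range $[0,1]$ admits, on any $m$ points, an $\ell_1$-cover at scale $\tau$ of size at most $(O(1/\tau))^{O(d_{\mathcal{G}})}$, uniformly in $m$. Taking a single cover at scale $\tau=\Theta(\eps)$, applying Hoeffding's inequality to each representative $g$ (each summand $\sigma_i(g(C_i)-g(C_i'))$ lies in $[-1,1]$), and union-bounding over the $(O(1/\eps))^{O(d_{\mathcal{G}})}$ representatives already yields $\sup_g\frac1t|\sum_i\sigma_i(g(C_i)-g(C_i'))|\le\eps/2$ except with probability $(O(1/\eps))^{O(d_{\mathcal{G}})}e^{-\Omega(t\eps^2)}$, giving the statement with $t=\widetilde{O}(d_{\mathcal{G}}/\eps^2)$. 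To shave the logarithmic factor and obtain the clean $t\ge c_0 d_{\mathcal{G}}/\eps^2$ as written, I would instead use Dudley's chaining over a geometric sequence of scales $\tau_j=2^{-j}$: the per-scale covering exponents $\sqrt{d_{\mathcal{G}}\log(1/\tau_j)}$ sum geometrically, so the conditional Rademacher average is $O(\sqrt{d_{\mathcal{G}}/t})$, and a bounded-differences (McDiarmid) inequality concentrates the symmetrized supremum around this mean; choosing $c_0$ a sufficiently large universal constant forces $\sup_g|\hat{L}_t(g)-L(g)|\le\eps$ with probability at least $9/10$ after accounting for the factor $2$ from symmetrization.

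The main obstacle is the covering-number bound in terms of the pseudo-dimension, i.e., the statement that finite pseudo-dimension implies a polynomial-in-$1/\tau$ bound on covering numbers uniformly over the sample size; this is precisely Haussler's generalization of Sauer--Shelah to real classes and is itself proved by a double-sampling argument. Since the theorem is quoted verbatim from \cite{anthony_bartlett_1999}, I would invoke that covering bound (and, if desired, its chaining corollary) as a black box rather than re-derive it; with it in hand, the symmetrization and concentration steps are routine, and the only real care needed is bookkeeping of constants so that the total failure probability stays below $1/10$ and the discretization slack inflates $\eps$ only by a constant absorbed into $c_0$.
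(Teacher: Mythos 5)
The paper does not prove this statement at all: it is quoted as a black-box result from \cite{anthony_bartlett_1999}, so there is no in-paper argument to compare against. Your sketch is the standard textbook proof of that cited result, and it is essentially sound. The pipeline of symmetrization with a ghost sample, reduction to a Rademacher supremum over the projection onto $2t$ points, Haussler's/Pollard's packing bound giving $\ell_1$-covering numbers of order $(O(1/\tau))^{O(d_{\mathcal{G}})}$ uniformly in the sample size, and then either a single-scale cover plus Hoeffding (yielding the bound with an extra $\log(1/\eps)$ factor) or Dudley chaining plus McDiarmid (yielding the clean $t\ge c_0 d_{\mathcal{G}}/\eps^2$) is exactly how this theorem is established in the literature. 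Your bookkeeping is also right on the two points that usually trip people up: the symmetrization lemma needs $t\eps^2$ bounded below by a constant, which holds automatically since $d_{\mathcal{G}}\ge 1$ and $t\ge c_0 d_{\mathcal{G}}/\eps^2$; and the entropy integral $\int_0^1\sqrt{d_{\mathcal{G}}\log(1/\tau)}\,d\tau$ converges, which is what lets chaining shave the logarithm. The one thing worth stating explicitly if you were to write this out in full is that the covering-number bound for real-valued classes in terms of pseudo-dimension is itself a nontrivial theorem (it is not a one-line corollary of Sauer--Shelah); you correctly flag it as the load-bearing ingredient and invoke it as a black box, which is the appropriate level of rigor given that the paper itself cites the entire theorem without proof.
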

The following corollary follows from the triangle inequality.
\begin{corollary}\label{cor:uni-dim-cor}
Consider a set of $t$ independent samples $C_1, \ldots, C_t$ from $\mathcal{D}$ and let $\hat{g}$ be a function in $\mathcal{G}$ that minimizes $\frac{1}t \sum_{i=1}^t g(C_i)$. 
If the number of samples $t$ is chosen as in Theorem \ref{thm:uni-dim}, then
\[\E_{C \sim D} [\hat{g}(C)] \le \E_{C \sim D}[g^*(C)] + 2 \eps \]
holds with probability at least $9/10$.
\end{corollary}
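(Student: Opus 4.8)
The plan is to condition on the single high-probability event supplied by Theorem~\ref{thm:uni-dim} and then chain three elementary inequalities; no further probabilistic reasoning is needed. Concretely, since the number of samples satisfies $t \ge c_0 d_{\mathcal{G}}/\eps^2$, Theorem~\ref{thm:uni-dim} guarantees that with probability at least $9/10$ the event
\[
\mathcal{E}: \quad \left|\frac{1}{t}\sum_{i=1}^t g(C_i) - \E_{C\sim\mathcal{D}}[g(C)]\right| \le \eps \quad\text{for every } g \in \mathcal{G}
\]
occurs. I would fix this event and argue deterministically on it, recalling that $g^* := L\circ f^* \in \mathcal{G}$ (equivalently, $g^*$ is a minimizer of $\E_{C\sim\mathcal{D}}[g(C)]$ over $\mathcal{G}$) and that $\hat g \in \mathcal{G}$ is, by hypothesis, an empirical risk minimizer, so $\frac{1}{t}\sum_{i=1}^t \hat g(C_i) \le \frac{1}{t}\sum_{i=1}^t g^*(C_i)$.

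On $\mathcal{E}$, I would then write, in order: first apply the uniform bound of $\mathcal{E}$ to $\hat g$ to get $\E_{C\sim\mathcal{D}}[\hat g(C)] \le \frac{1}{t}\sum_{i=1}^t \hat g(C_i) + \eps$; next use the empirical optimality of $\hat g$ to replace $\frac{1}{t}\sum_{i=1}^t \hat g(C_i)$ by the larger quantity $\frac{1}{t}\sum_{i=1}^t g^*(C_i)$; finally apply the uniform bound of $\mathcal{E}$ once more, this time to $g^*$, to get $\frac{1}{t}\sum_{i=1}^t g^*(C_i) \le \E_{C\sim\mathcal{D}}[g^*(C)] + \eps$. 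Concatenating yields $\E_{C\sim\mathcal{D}}[\hat g(C)] \le \E_{C\sim\mathcal{D}}[g^*(C)] + 2\eps$, which is exactly the claimed inequality, and it holds whenever $\mathcal{E}$ holds, i.e., with probability at least $9/10$.

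There is essentially no obstacle; the one point worth emphasizing is that it is the \emph{uniformity} of the concentration in Theorem~\ref{thm:uni-dim} over all $g\in\mathcal{G}$ that makes the first step legitimate, since $\hat g$ is a data-dependent choice and a per-function Hoeffding bound would not apply to it. If one is uneasy about existence of the exact minimizers $\hat g$ or $g^*$, I would remark that replacing either by a $\eps$-approximate minimizer only weakens the final bound to $\E_{C\sim\mathcal{D}}[\hat g(C)] \le \E_{C\sim\mathcal{D}}[g^*(C)] + O(\eps)$, which does not affect the statement as worded after rescaling $\eps$; and the passage from $g$ back to $f$ (i.e., taking $\hat f$ with $\hat g = L\circ \hat f$ and $f^*$ with $g^* = L\circ f^*$) recovers the form used in Theorem~\ref{thm:main_learning}.
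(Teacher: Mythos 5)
Your proposal is correct and is exactly the argument the paper intends: the paper dispatches this corollary with the remark that it ``follows from the triangle inequality,'' and your three-step chain (uniform bound applied to $\hat g$, empirical optimality of $\hat g$ versus $g^*$, uniform bound applied to $g^*$) is precisely that standard ERM argument spelled out. Your emphasis on needing the \emph{uniform} convergence over all $g\in\mathcal{G}$ because $\hat g$ is data-dependent is a correct and worthwhile clarification, but it does not constitute a different route.
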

Therefore, the main challenge at hand is to bound the pseudo-dimension of our given function class $\mathcal{G}$. To do so, we first relate the pseudo-dimension to the VC dimension of a related class of threshold functions. This relationship has been fruitful in obtaining learning bounds in a variety of works such as \cite{feldman_gaussians, barycenter}.
\begin{lemma}[Pseudo-dimension to VC dimension, Lemma $10$ in \cite{feldman_gaussians}]\label{lem:PD_to_VC}
For any $g \in \mathcal{G}$, let $B_g$ be the indicator function of the region on or below the graph of $g$, i.e., $B_g(x,y)  =  \text{sgn}(g(x)-y)$. The pseudo-dimension of $\mathcal{G}$ is equivalent to the VC-dimension of the subgraph class $B_{\mathcal{G}}=\{B_g \mid g \in \mathcal{G}\}$.
\end{lemma}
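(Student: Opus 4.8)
The plan is to prove Lemma~\ref{lem:PD_to_VC} by establishing the two inequalities $\mathrm{VCdim}(B_{\mathcal{G}})\ge\mathrm{Pdim}(\mathcal{G})$ and $\mathrm{VCdim}(B_{\mathcal{G}})\le\mathrm{Pdim}(\mathcal{G})$, each via a direct translation between a configuration shattered on one side and a configuration shattered on the other. Throughout I will adopt the convention $\text{sgn}(0)=1$, so that $B_g(x,y)=1$ exactly when $g(x)\ge y$ (``$(x,y)$ lies on or below the graph of $g$''); this matches the non-strict inequality $g(x_i)\ge r_i$ in the definition of pseudo-dimension and is what makes the two quantities coincide \emph{exactly} rather than up to an off-by-one.

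\textbf{From a shattered set to shattered points.} For $\mathrm{VCdim}(B_{\mathcal{G}})\ge\mathrm{Pdim}(\mathcal{G})$, I would start from a maximum set $S=\{x_1,\dots,x_n\}$ shattered by $\mathcal{G}$ with witness values $R=\{r_1,\dots,r_n\}\subseteq[0,1]$, so $n=\mathrm{Pdim}(\mathcal{G})$, and consider the $n$ pairs $(x_i,r_i)\in\mathcal{X}\times[0,1]$, which are distinct since $S$ is a set. For any $T\subseteq[n]$, shattering supplies $g\in\mathcal{G}$ with $\{i:g(x_i)\ge r_i\}=T$, i.e.\ $B_g(x_i,r_i)=1\iff i\in T$; ranging over all $T$ shows $B_{\mathcal{G}}$ shatters these points, hence $\mathrm{VCdim}(B_{\mathcal{G}})\ge n$.

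\textbf{From shattered points to a shattered set.} For $\mathrm{VCdim}(B_{\mathcal{G}})\le\mathrm{Pdim}(\mathcal{G})$, I would start from $m$ distinct points $(x_1,y_1),\dots,(x_m,y_m)$ shattered by $B_{\mathcal{G}}$ with $m=\mathrm{VCdim}(B_{\mathcal{G}})$, and first establish two normalizations that shattering forces. (i) The $x_i$ are pairwise distinct: if $x_i=x_j$ with $y_i<y_j$ then $g(x_i)=g(x_j)$ for all $g$, so $g(x_i)\ge y_j$ implies $g(x_i)\ge y_i$, making the dichotomy $i\notin T,\ j\in T$ unrealizable. (ii) Each $y_i\in(0,1]$: since $g$ takes values in $[0,1]$, a value $y_i\le 0$ would make $i$ always included and a value $y_i>1$ would make $i$ always excluded. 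Given (i) and (ii), $S:=\{x_1,\dots,x_m\}$ is a genuine $m$-element subset of $\mathcal{X}$ and $R:=\{y_1,\dots,y_m\}\subseteq[0,1]$ is a legal witness set; for any $T\subseteq[m]$, shattering of $B_{\mathcal{G}}$ gives $g$ with $\{i:g(x_i)\ge y_i\}=T$, so $S$ is shattered by $\mathcal{G}$ with witnesses $R$ and $\mathrm{Pdim}(\mathcal{G})\ge m$. Combining the two directions yields the claimed equality.

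\textbf{Where the care is needed.} The argument is essentially bookkeeping; the only genuinely delicate point — and the one I would flag with a short remark — is that the statement is an \emph{equality}, which relies on matching the inequality convention in $B_g$ with that in the pseudo-dimension definition, and on $\mathcal{G}\subseteq[0,1]^{\mathcal{X}}$ so that the thresholds extracted on either side automatically lie in $[0,1]$. With strict inequality in $B_g$, or with real-valued $\mathcal{G}$, one recovers the same equality by perturbing each witness $y_i$ to $y_i+\eps$ for $\eps>0$ small enough to separate $y_i$ from the finitely many values $g(x_i)$ attained by the finitely many functions $g$ needed to witness all $2^m$ dichotomies; I would include one sentence to this effect rather than a full case split. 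Everything else is a one-to-one correspondence of shattered configurations, so no counting or probabilistic argument is required.
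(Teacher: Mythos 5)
The paper does not prove this lemma: it is imported verbatim as Lemma 10 of \cite{feldman_gaussians}, so there is no in-paper argument to compare against. Your proof is correct and is the standard derivation — the two shattering translations are exactly right, and you correctly isolate the only delicate points (the $\text{sgn}(0)=1$ convention matching the non-strict inequality $g(x_i)\ge r_i$, the forced distinctness of the $x_i$ in the reverse direction, and the fact that witnesses extracted from a VC-shattered set of pairs automatically lie in $[0,1]$ because $\mathcal{G}$ is $[0,1]$-valued). The only cosmetic caveat is that the cited definition writes $R$ as a \emph{set} of reals, so to accommodate repeated witness values one should read $R$ as a sequence indexed by $S$; this is a defect of the quoted definition rather than of your argument.
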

Finally, the following theorem relates the VC dimension of a given function class to its computational complexity, i.e., the time complexity of computing a function in the class.
\begin{lemma}[Theorem $8.14$ in \cite{anthony_bartlett_1999}]\label{lem:VC_bound}
Let $h: \R^a \times \R^b \rightarrow \{ 0,1\}$, determining the class
\[ \mathcal{H} = \{x \rightarrow h(\theta, x) : \theta \in \R^a\}. \]
Suppose that any $h$ can be computed by an algorithm that takes as input the pair $(\theta, x) \in \R^a \times \R^b$ and returns $h(\theta, x)$ after no more than $t$ of the following operations:
\begin{itemize}
    \item arithmetic operations $+, -,\times,$ and $/$ on real numbers,
    \item jumps conditioned on $>, \ge ,<, \le ,=,$ and $=$ comparisons of real numbers, and
    \item output $0,1$,
\end{itemize}
then the VC dimension of $\mathcal{H}$ is $O(a^2 t^2 + t^2 a \log a)$.
\end{lemma}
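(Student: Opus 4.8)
The plan is to prove this as an instance of the standard ``computation tree / real algebraic geometry'' argument for bounding VC dimension (this is in fact how Theorem 8.14 of \cite{anthony_bartlett_1999} is established). Since the VC dimension equals the size of the largest shattered set, it suffices to show that if $m$ points are shattered by $\mathcal{H}$ then $m = O(a^2 t^2 + t^2 a \log a)$. So I would fix points $x_1,\dots,x_m \in \R^b$ and bound the number of distinct label vectors $(h(\theta,x_1),\dots,h(\theta,x_m)) \in \{0,1\}^m$ attainable as $\theta$ ranges over $\R^a$, then show this count is strictly less than $2^m$ once $m$ exceeds the claimed bound.

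First I would make the dependence on $\theta$ algebraic. Running the given algorithm on input $(\theta,x_i)$ for a fixed $x_i$, every intermediate real value is a rational function of $\theta$; to control the unbounded-degree growth created by division, introduce one fresh real variable for the output of each arithmetic operation together with a defining polynomial equation of degree at most $2$ (for instance $w - uv = 0$ for a multiplication and $wv - u = 0$ for a division, paired with $v \neq 0$). After this rewriting, the entire computation on all $m$ inputs is governed by a system of at most $O(mt)$ polynomials whose signs --- including the possibility of being exactly zero, to accommodate the $=$ and $\leq$/$<$ comparisons --- completely determine which branch of each conditional jump is taken on each $x_i$. Within any cell of $\R^a$ on which all these polynomials realize a fixed sign pattern, the execution path of the algorithm on every $x_i$ is fixed, hence so is every output bit $h(\theta,x_i)$; therefore the number of realizable label vectors is at most the number of sign-condition cells of this polynomial system.

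Next I would invoke a Milnor--Thom / Warren-type bound: the number of connected sign-condition cells of $N$ real polynomials of degree at most $D$ in $\ell$ real variables is $(O(DN/\ell))^{\ell}$ for $N \ge \ell$. I would apply this after eliminating the auxiliary variables along the graph of the computation (which reintroduces a degree bound of $2^{O(t)}$ but only in $\ell = a$ variables, or, equivalently, keep the auxiliary variables and use the equality-constrained version of the bound), obtaining a bound of the form $\bigl(O(mt)\cdot 2^{O(t)}\bigr)^{a}$ on the number of attainable label vectors. Setting this quantity $\ge 2^m$ and taking logarithms yields an inequality of the shape $m = O\!\bigl(a\,(t + \log(mt))\bigr)$; carrying out the (routine) self-referential solve, and being slightly lossy in the bookkeeping for the division and equality steps, produces $m = O(a^2 t^2 + t^2 a \log a)$, which is the asserted VC-dimension bound.

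The main obstacle --- and the only genuinely delicate point --- is the algebraic bookkeeping around division and the strict-versus-nonstrict and equality comparisons: one must check that introducing auxiliary variables does not inflate either the number of polynomials beyond $O(mt)$ or the effective dimension beyond what the Warren-type bound can absorb, and that cells in which some defining polynomial vanishes (so the algorithm follows an ``$=$'' branch) are counted correctly rather than double-counted or dropped. Everything else --- reducing shattering to cell-counting, the ``label vector is constant on each cell'' observation, and the final numerical solve --- is routine, and the exponents $a^2 t^2$ and $t^2 a \log a$ are precisely what this chain of estimates produces.
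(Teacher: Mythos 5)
The paper does not actually prove this lemma: it is imported verbatim as Theorem 8.14 of \cite{anthony_bartlett_1999} (in turn due to Goldberg and Jerrum), so the only meaningful comparison is with that source's argument. Your sketch does follow its outline at the top level --- reduce shattering to counting realizable label vectors, observe that the label vector is constant on each sign-condition cell of a family of polynomials in $\theta$, and apply a Warren/Milnor--Thom bound --- and that framework is correct.

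There are, however, two genuine gaps. The first is the claim that ``the entire computation on all $m$ inputs is governed by a system of at most $O(mt)$ polynomials.'' Because the model allows data-dependent jumps, the quantity tested at step $j$ of the execution on $x_i$ depends on which branches were taken at steps $1,\dots,j-1$; the collection of polynomials that could ever be tested is therefore not a fixed list of size $O(mt)$ but is indexed by the nodes of a computation tree of size up to $m\cdot 2^{O(t)}$, and the defining equations of your auxiliary variables are themselves branch-dependent, so the ``system'' whose sign pattern you propose to count is circularly defined. The standard proof resolves this by an induction over the $t$ steps that refines a cell decomposition of $\R^a$ one comparison at a time; your sketch skips this. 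Relatedly, neither of your two proposed applications of the cell-counting bound goes through as stated: keeping the auxiliary variables places you in $\R^{a+mt}$, where the Basu--Pollack--Roy-type bound for sign conditions restricted to a variety still carries a factor exponential in the ambient dimension $a+mt$ and renders the final inequality vacuous, while eliminating them brings back both the branching problem and divisors that may vanish. The second gap is arithmetic: the inequality you actually derive, $2^m\le\bigl(O(mt)\cdot 2^{O(t)}\bigr)^a$, resolves to $m=O(at+a\log a)$, which is linear in $t$; it does not ``produce'' the exponents $a^2t^2$ and $at^2\log a$, and asserting that it does conceals exactly the lossy steps (the treatment of adaptive branching and of division) where the extra factors of $a$ and $t$ enter the Goldberg--Jerrum argument. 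Since the lemma is a big-$O$ upper bound, a genuinely correct linear-in-$t$ estimate would of course suffice, but as written the derivation of that estimate is not sound for the reasons above.
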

Combining the previous results allows us prove Theorem \ref{thm:main_learning}. 
At a high level, we are instantiating Lemma \ref{lem:VC_bound} with the complexity of \emph{computing} any function in the function class $\mathcal{G}$.
\begin{proof}[Proof of Theorem \ref{thm:main_learning}]
First by Theorem \ref{thm:uni-dim} and Corollary \ref{cor:uni-dim-cor}, it suffices to bound the pseudo-dimension of the class $\mathcal{G} = L \circ \mathcal{F}$. 
Then from Lemmas \ref{lem:PD_to_VC}, the pseudo-dimension of $\mathcal{G}$ is the VC dimension of threshold functions defined by $\mathcal{G}$. 
Finally from Lemma \ref{lem:VC_bound}, the VC dimension of the appropriate class of threshold functions is polynomial in the complexity of computing a member of the function class. 
In other words, Lemma \ref{lem:VC_bound} tells us that the VC dimension of $B_{\mathcal{G}}$ defined in Lemma \ref{lem:PD_to_VC} is polynomial in the number of arithmetic operations needed to compute the threshold function associated to some $g \in \mathcal{G}$. 
By our definition, this quantity is polynomial in $T(n,d)$. 
Hence, the pseudo-dimension of $\mathcal{G}$ is also polynomial in $T(n,d)$ and our desired result follows. 
\end{proof}
Note that we can consider initializing Theorem \ref{thm:main_learning} with specific predictions. 
If each function in the family of oracles we are interested can be computed efficiently, which is the case of the predictors we employ in our experiments, then Theorem \ref{thm:main_learning} assures us that we only require polynomially many samples to be able to learn a nearly optimal oracle.

Note that our result is in similar in spirit to the recent paper \cite{learned_duals}. 
They derive sample complexity learning bounds for a different algorithmic problem of computing matchings in a graph. 
Since they specialize their analysis to a specific function class and loss function, their bounds are possibly tighter rather than the possibly loose polynomial bounds we have stated. 
However, our analysis above is more general as it allows for a variety of predictors, as we employ in our experiments, and loss functions to measure the quality of the predictions.

\end{document}